\DeclareMathOperator*{\argmin}{argmin}
\newcommand{\bE}{\mathbb{E}}
\newcommand{\bN}{\mathbb{N}}
\newcommand{\bP}{\mathbb{P}}
\newcommand{\bR}{\mathbb{R}}
\newcommand{\cA}{\mathcal{A}}
\newcommand{\cB}{\mathcal{B}}
\newcommand{\cE}{\mathcal{E}}
\newcommand{\cF}{\mathcal{F}}
\newcommand{\cG}{\mathcal{G}}
\newcommand{\cH}{\mathcal{H}}
\newcommand{\cL}{\mathcal{L}}
\newcommand{\cO}{\mathrm{O}}
\newcommand{\cS}{\mathcal{S}}
\newcommand{\cT}{\mathcal{T}}
\newcommand{\pr}{\mathrm{pr}}
\newcommand{\dd}{\mathrm{d}}
\newcommand{\od}{\bar\dd}
\newcommand{\wb}{w_{\beta}}
\newcommand{\owb}{\overline w_{\beta}}
\newcommand{\lb}{\ell_{\beta}}
\newcommand{\ks}{\mathrm{ks}}
\newcommand{\oks}{\overline\ks}
\newcommand{\md}{\mathrm{d}}
\newcommand{\bSAS}{{\bm{SA\bar S}}}
\newcommand{\bsas}{{\bm{sa\sn}}}
\newcommand{\sn}{\bar s}
\newcommand{\Tn}{\cT^{\circ n}}
\newcommand{\er}{\mathrm{PE}}
\newcommand{\eac}{\mathrm{APE}}
\newcommand{\rmin}{r_{\mathrm{min}}}
\newcommand{\rmax}{r_{\mathrm{max}}}
\newcommand{\vmin}{v_{\mathrm{min}}}
\newcommand{\vmax}{v_{\mathrm{max}}}
\newcommand{\xmin}{x_{\mathrm{min}}}
\newcommand{\xmax}{x_{\mathrm{max}}}
\newcommand{\Li}{\mathrm{Li}}
\newcommand{\bxi}{\bm{\xi}}
\newcommand{\TN}{\Xi}
\newcommand{\TNm}{\Xi_m}
\newcommand{\PN}{\Pi}
\newcommand{\tauG}{T}
\newcommand{\timen}{\mathrm{time}(n)}
\newcommand{\neps}{n^*(\varepsilon)}
\newcommand{\timeeps}{\mathrm{time}(\neps)}
\newcommand{\xilin}{\xi^{\mathrm{lin}}}
\newcommand{\tail}{\mathrm{tail}}
\newcommand{\tailwb}{\mathrm{tail}_{\wb}}
\newcommand{\taillb}{\mathrm{tail}_{\lb}}
\newcommand{\mdp}{\mathfrak{mdp}}
\newcommand{\norm}{\mathrm{Norm}}
\newcommand{\cauchy}{\mathrm{Cauchy}}
\newcommand{\sP}{\mathscr{P}}
\newcommand{\sPR}{\sP(\bR)}
\newcommand{\sPaR}{\sP_{\alpha}(\bR)}
\newcommand{\sPRC}{\sP_{\mathrm{fin}}(\bR)}
\newcommand{\sPRCS}{\sP_{\mathrm{fin}}(\bR)^{\cS}}
\newcommand{\sPbR}{\sP_{\beta}(\bR)}
\newcommand{\sPRS}{\sPR^{\cS}}
\newcommand{\sPbRS}{\sPbR^{\cS}}
\newcommand{\sPdR}{\sP_{\dd}(\bR)}
\newcommand{\sPdRS}{\sP_{\dd}(\bR)^{\cS}}
\begin{document}

\title{On Policy Evaluation Algorithms in\\ Distributional Reinforcement Learning}

\author{\name Julian Gerstenberg \email gerstenb@math.uni-frankfurt.de\\
       \addr Institute of Mathematics\\
       Goethe University Frankfurt\\
       Frankfurt am Main, Germany 
       \AND
       \name Ralph Neininger \email neiningr@math.uni-frankfurt.de\\
       \addr Institute of Mathematics\\
       Goethe University Frankfurt\\
       Frankfurt am Main, Germany 
       \AND 
       \name Denis Spiegel \email spiegel@math.uni-frankfurt.de\\
       \addr Institute of Mathematics\\
       Goethe University Frankfurt\\
       Frankfurt am Main, Germany}

\editor{My editor}

\maketitle

\begin{abstract}
We introduce a novel class of  algorithms to efficiently approximate the unknown return distributions in policy evaluation problems from distributional reinforcement learning (DRL). The proposed distributional dynamic programming algorithms are suitable for underlying Markov decision processes (MDPs) having an arbitrary probabilistic reward mechanism, including continuous reward distributions with unbounded support being potentially heavy-tailed. 

For a plain instance of our proposed class of algorithms we prove error bounds, both within
Wasserstein and Kolmogorov--Smirnov distances. Furthermore, for return distributions having probability density functions the algorithms yield approximations for these densities; error bounds are given within supremum norm. We introduce the concept of quantile-spline discretizations to come up with algorithms showing promising results in simulation experiments.

While the performance of our algorithms can rigorously be analysed they can be seen as universal black box algorithms applicable to a large class of MDPs. We also derive new properties of probability metrics commonly used in DRL on which our quantitative analysis is based.

\end{abstract}

\begin{keywords}
  distributional reinforcement learning, distributional dynamic programming, distributional Bellman operator, Markov decision process, probability metrics
\end{keywords}

\section{Introduction}

In Reinforcement Learning (RL) agents take actions in an environment in order to maximize a cumulative random reward, called return, with respect to its expectation. It has long been also of interest to optimize other characteristics of the probability distribution of the return than its expectation. In recent years, a systematic study of the practical benefits and theoretical foundations of optimizing the return distribution has been taken up; for a comprehensive account of this direction, called Distributional Reinforcement Learning (DRL), see the recently published book \cite{bdr2023}. In the present paper we focus on \emph{Distributional Dynamic Programming} (DDP) algorithms for distributional policy evaluation, see Chapter 5 of  \cite{bdr2023}, i.e., on the construction and analysis of algorithms which yield approximations of the return distribution of a Markov decision process (MDP) when a fixed policy is being considered. 

A crucial ingredient of any MDP is its reward mechanism, which specifies the rewards given to an agent for state-action-state transitions. Such a mechanism implies return distributions which are characterised as fixed-points of the Distributional Bellman Operator (DBO), we recall key results in Section~\ref{sec:mdp}.  DDP algorithms have typically been developed and analysed for MDPs for which all reward distributions are \emph{finitely supported}. A probability distribution $\mu$ on $\bR$ is finitely supported if and only if it has a \emph{representation}  $\mu = \sum_{i=1}^m p_i\delta_{x_i}$ with $m\in\bN$, $p_i \ge 0$ probabilities summing to $1$, \emph{support points} $x_i\in\bR$ and $\delta_{x}$ the Dirac measure in $x\in\bR$. Finitely supported distributions are also called \emph{empirical} in DRL literature. The state of the art for DDP algorithms with finitely supported reward distributions is covered by Chapter 5 of \cite{bdr2023}. Roughly, most of these algorithms operate by iterations of the DBO together with a projection step, which maps a distribution to a fixed-size (i.e., fixed $m$) finitely supported distribution, see also \cite{bell17a,mad17,dabny18b,row18}.

The aim of the present paper is to develop and analyse DDP algorithms that are universally applicable for a wide class of MDPs, including cases of MDPs in which (some of the) reward distributions are not finitely supported, including distributions that have unbounded support and/or are continuous and/or are heavy-tailed. Such MDPs are of interest in applications, for example, in pricing and trading in insurance and finance, see \cite{krasheninnikova2019reinforcement}, \cite{kolm2020modern} and the recent survey \cite{hambly23}. 

To be specific, our DDP algorithms are applicable to any MDP for which cumulative distribution functions (CDFs) as well as quantile functions (QFs) of its reward distributions can be evaluated efficiently, e.g., if all reward distributions are contained in well-known distribution families for which methods to evaluate CDF and QF are efficiently implemented in popular software packages, e.g., in $\mathtt{R}$ or the $\mathtt{Python}$-package $\mathtt{SciPy}$.

The DDP algorithms we are proposing in Section~\ref{sec:framework} also operate by iterations of the DBO together with projection steps that map arbitrary distributions to finitely supported ones. To take up an idea of \cite{dene02}, where fixed-points of related recursive distributional equations were approximated, we allow \emph{varying projections}, i.e.~projections which depend on the update step. In particular, as \cite{dene02}, we allow the sizes of the finitely supported representations to increase with each iteration. 
In the context of the approximation of certain perpetuities this idea has also been applied, see \cite{knne08}.
Furthermore, we allow projections to depend on \emph{previously calculated approximations}, which also seems to generalise previous approaches. A general framework for our family of algorithms is presented in Section~\ref{sec:framework}.

From Section~\ref{sec:metrics} on we investigate the complexity and performance of our algorithms. First, in Section~\ref{sec:metrics} bounds are derived in an abstract setting for our general class of DDP algorithms where we also address how to choose our varying projections, in particular how to let the number of support points grow. We quantify the quality of the calculated approximations by use of \emph{analysis metrics}, such as Wasserstein distances. Our setting and analysis does not require the projections to be non-expansive with respect to the analysis metrics, cf.~Chapters 4 and 5 in \cite{bdr2023} and the discussion in our Section \ref{sec:nonexpansion}. Later, in Section~\ref{sec:ks}, which is also of independent interest, we explain how to extend these bounds to the Kolmogorov--Smirnov distance and how to obtain approximations of densities of return distributions when they exist together with uniform error bounds; we also offer a sufficient criteria for the return distributions to have densities at all and discuss the relation to nonuniform random number generation in Remark \ref{rem_perfect_simulation}.

Concrete universal DDP algorithms are defined and analysed in Section~\ref{sec:ddp-algorithms}. They are universal in the sense that they apply to the whole class of MDPs for which the CDFs and QFs of all reward distributions can be efficiently evaluated, regardless of their nature (discrete, continuous, unbounded support, heavy-tailed, etc.). Our projections are constructed such that they only require that CDFs of the reward distributions can be evaluated. We discuss the trade-off between space-time complexity and approximation quality for a plain instance of the algorithms in Section~\ref{sec:plain}. This yields further evidence that one should let the finitely supported representation sizes increase with each iteration. For practical purpose, we suggest to use the universal DDP algorithms presented in Sections~\ref{sec:plain-adaptive} and \ref{sec:quantile}, which are based on bounding quantiles of convolutions, respextively approximating quantiles by linear-spline interpolations. We conclude with a small simulation study in Section~\ref{sec:simulation} to demonstrate that our algorithms outperform simple Monte-Carlo estimation techniques significantly. 

Our algorithms are reasonably applicable for small finite state-action spaces. For larger spaces methods based on Markov chains exploring the state space and temporal difference learning have been developed, see Chapter 6 of \cite{bdr2023} and \cite{mo10} for a popular particle smoothing approach within Markov chain techniques.

\subsection{Notations}

Let $\cB(\bR)$ be the Borel $\sigma$-field on $\bR$ and $\sPR$ be the set of probability measures on $(\bR,\cB(\bR))$. Let $\sPRC\subsetneq\sPR$ be the subset of finitely supported distributions. Every $\mu\in\sPR$ is specified by its cumulative distribution function (CDF) $F_{\mu}:\bR\to[0,1]$ as well by its quantile function (QF) $F^{-1}_{\mu}:(0,1)\to\bR$ defined as
\begin{equation*}
    F_{\mu}(x) = \mu\big((-\infty,x]\big)\;\;\text{and}\;\;F^{-1}_{\mu}(u) = \inf\{x\in\bR\;|\;F_{\mu}(x)\geq u\},
\end{equation*}
related by $u\leq F_{\mu}(x)\Leftrightarrow F^{-1}_{\mu}(u)\leq x$. The set $\sPR$ is closed under convex combinations, which transfers to the CDF: $F_{q\mu+(1-q)\nu} = q F_{\mu} + (1-q) F_{\nu}$ holds for all $q\in[0,1],\mu,\nu\in\sPR$. Convex combinations do not transfer to the QF. A distribution $\mu\in\sPR$ is said to be continuous if $F_{\mu}$ is continuous and is said to possess the \emph{probability density function} (PDF) $f_{\mu}:\bR\to[0,\infty]$ if $F_{\mu}(x) = \int_{-\infty}^xf_{\mu}(y)\md y$ holds for all $x\in\bR$. 

It is convenient to work with random variables: we write $\cL(X) = \bP[X\in\cdot\;]\in\sPR$ for the distribution of a real-valued random variable (RV) $X$ defined on some probability space $(\Omega,\cF,\bP)$. It is $\cL(X)=\mu$ if and only if $F_{\mu}(x) = \bP[X\leq x]$ for all $x\in\bR$ and a random variable $X$ with $\cL(X)=\mu$ is given by $X:=F_{\mu}^{-1}(U)$, where $U$ is a RV continuous uniform on $(0,1)$. If it exists, let $\bE[X]\in[-\infty,\infty]$ be the expectation of $X$. If $\bP[X\geq 0]=1$, then $\bE[X]\in[0,\infty]$ exists. For $\alpha\in(0,\infty)$ let $\sPaR\subsetneq\sPR$ be the set of distributions $\mu=\cL(X)$ with finite $\alpha$-moment $\bE[|X|^{\alpha}]<\infty$. Note that $\sPRC\subsetneq\sPaR$ and that  $\beta<\alpha$ implies $\sPaR\subsetneq\sPbR$. For an event $F\in \cF$ with $\bP[F]>0$ let $\cL(X|F) = \bP[X\in\cdot\;|F]\in\sPR$ be the conditional distribution and $\bE[X|F] = \bE[X1_F]/\bP[F]$ the conditional expectation of $X$ given $F$ has occurred. 

Finally, for sets $A$ and $I$ it will be convenient to write elements of $A^I$ as $[a_i:i\in I] \in A^I$. For real numbers $x, y\in\bR$ we write $x\vee y = \max\{x,y\}$ and $x\wedge y = \min\{x,y\}$. We use the Bachmann--Landau big-$\mathrm{O}$ notation. 

\section{Distributional Bellman operator and dynamic programming}\label{sec:mdp}

A Markov Decision Process (MDP) with a fixed stationary policy is a tupel
$$\mdp:=\left(\cS,\cA,p,\pi,\gamma,r\right)$$
with finite set of states $\cS\owns s,\sn$, finite set of actions $\cA\owns a$, state-action-state transition probabilities $p = [p(\sn|s,a):(s,a,\sn)\in\cS\times\cA\times\cS]$, in which $p(\sn|s,a)\in[0,1]$ is the probability to make a transition from $s$ to $\sn$ when performing action $a$, a fixed stationary policy $\pi = [\pi(a|s):(s,a)\in\cS\times\cA]$ in which $\pi(a|s)\in[0,1]$ is the probability of the agent choosing action $a$ when in state $s$, a fixed discount factor  $\gamma\in(0,1)$ and $r = [r(\;\cdot\;|s,a,\sn):(s,a,\sn)\in\cS\times\cA\times\cS]$ an arbitrary probabilistic reward mechanism, in which 
$$r(\;\cdot\;|s,a,\sn)=\cL(R_{sa\sn})\in\sPR$$ 
is a probability distribution modelling the distribution of the immediate and possibly random $\bR$-valued reward $R_{sa\sn}$ given to the agent for a transition from state $s$ to $\sn$ having used action $a$. Let $F_{sa\sn}$ be the CDF and $F_{sa\sn}^{-1}:(0,1)\to\bR$ be the QF of $r(\;\cdot\;|s,a,\sn)=\cL(R_{sa\sn})$.

The ingredients of $\mdp$ give rise to the following dynamical process: at time $t=0$ the agent is in a uniform random initial state $S^{(0)}\in\cS$. If at time $t\in\bN_0$ the agent is in state $S^{(t)}$, conditionally on $S^{(t)}$  and the past, the agent chooses a random action $A^{(t)}$ distributed as $\pi(\;\cdot\;|S^{(t)})$. The environment reacts with a random successor state $S^{(t+1)}$ distributed as $p(\;\cdot\;|S^{(t)},A^{(t)})$. The agent receives an immediate real-valued random reward $R^{(t)}$ distributed as $r(\;\cdot\;|S^{(t)},A^{(t)},S^{(t+1)})$. The resulting stochastic process $$(S^{(t)},A^{(t)},R^{(t)})_{t\in\bN_0}$$
is called \emph{full MDP}, and the \emph{return random variable} is defined as
\begin{equation*}
    G^* = \sum\nolimits_{t=0}^{\infty}\gamma^t R^{(t)} = R^{(0)} + \gamma R^{(1)} + \gamma^2 R^{(2)} + \dots
\end{equation*}
which exists with probability one as a $\bR$-valued random variable under the assumption  
\begin{itemize}
    \item[(A1)] $\bE[\log(1\vee|R_{sa\sn}|)]<\infty$ for all $s,a,\sn$,
\end{itemize}
we refer to \cite{genesp23} for a precise result. We assume (A1) holds throughout the paper, but consider stronger moment assumptions on reward distributions later, cf.~Remark~\ref{rem:moment}. Of interest in distributional policy evaluation is the distribution of $G^*$ when starting the full MDP in a fixed but arbitrary given state $s\in\cS$. The (state-)return distribution $\eta^*\in\sPR^{\cS}$ is the $\cS$-indexed collection of all these conditional distributions:
$$\eta^* = [\eta^*_s:s\in\cS]~~\text{with}~~\eta^*_s = \cL(G^*|S^{(0)}=s).$$
We call $\eta^*_s$ the $s$-th component of $\eta^*$. The return distribution $\eta^*$ can rarely by described analytically and the task of distributional policy evaluation algorithms is to approximate $\eta^*$. We discuss algorithms that are applicable in practice under the following two assumptions on the ingredients of $\mdp$:
\begin{itemize}
    \item[(A2)] $F_{sa\sn}(x), F^{-1}_{sa\sn}(u)$ can be calculated in constant time for all $s,a,\sn$ and $x\in\bR, u\in(0,1)$,
    \item[(A3)] $p(\sn|s,a), \pi(a|s) \in [0,1]$ are known for all $s,a,\sn$.
\end{itemize}

Algorithms to approximate $\eta^*$ can be derived from the fact that $\eta^*$ is the unique fixed-point of the \emph{Distributional Bellman Operator} (DBO), which is introduced next. 
In the following, let $\eta = [\eta_{s}:s\in\cS] = [\cL(G_{s}):s\in\cS]\in\sPR^{\cS}$ be an arbitrary $\cS$-indexed collection of distributions and assume that the three collections of random variables $(G_{s})_{s}, (R_{sa\sn})_{sa\sn}$ and the full MDP $(S^{(t)},A^{(t)},R^{(t)})_{t}$ are defined on a common probability space and are independent. 

\begin{definition}\label{def:dbo}
    The DBO is the map $\cT:\sPR^{\cS}\to\sPR^{\cS},\;\eta\mapsto\cT\eta = [\cT_s\eta:s\in\cS]$ defined by its component functions $\cT_s:\sPR^{\cS}\to\sPR$ via $\cT_s\eta :=  \cL(R^{(0)} + \gamma G_{S^{(1)}}|S^{(0)}=s)$.
\end{definition}

\begin{theorem}\label{thm:weak}
    Under (A1) it holds that 
    \begin{itemize}
        \item[(a)] $\eta^*$ is the unique fixed-point of $\cT$, that is for every $\eta\in\sPRS$ it is $\eta=\eta^*$ if and only if $\eta$ solves the so-called distributional Bellman equation $\cT\eta=\eta$,
        \item[(b)] for every $\eta\in\sPRS$ it holds $\Tn\eta\to_w \eta^*$, where $\Tn=\cT\circ\cdots\circ\cT$ is the $n$-th iteration of $\cT$ and $\to_w$ denotes (component-wise) weak convergence of distributions.
    \end{itemize}
\end{theorem}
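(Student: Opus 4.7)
The plan is to derive both parts from a single explicit formula for the $n$-th iterate of $\cT$. By induction on $n$, using Definition~\ref{def:dbo} together with the Markov property of the full MDP and the standing independence assumption, I would establish
\begin{equation*}
    \Tn_s\eta \;=\; \cL\left(\sum_{t=0}^{n-1}\gamma^t R^{(t)} + \gamma^n G_{S^{(n)}} \,\Big|\, S^{(0)}=s\right), \qquad n\ge 1,\; s\in\cS,
\end{equation*}
where $(G_{\sn})_{\sn\in\cS}$, distributed as $\eta$, is taken independent of the full MDP $(S^{(t)},A^{(t)},R^{(t)})_t$ on an enlarged probability space. The base case $n=1$ is exactly Definition~\ref{def:dbo}; the inductive step applies $\cT$ once more and, conditioning on $(S^{(0)},A^{(0)},S^{(1)})$, uses that the shifted trajectory has the distribution of a full MDP started at $S^{(1)}$.

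From this identity, part (b) follows by a Slutsky-type argument. By \cite{genesp23}, assumption (A1) guarantees that $\sum_{t=0}^{n-1}\gamma^t R^{(t)}\to G^*$ almost surely. For the remainder $\gamma^n G_{S^{(n)}}$, since $\cS$ is finite and each $G_{\sn}$ is $\bR$-valued, $\max_{\sn\in\cS}|G_{\sn}|<\infty$ almost surely; multiplying by $\gamma^n\to 0$ yields $\gamma^n G_{S^{(n)}}\to 0$ in probability, uniformly in which state $S^{(n)}$ happens to visit. Slutsky's theorem then gives weak convergence of the conditional laws to $\cL(G^*\mid S^{(0)}=s)=\eta^*_s$, which is precisely (b).

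For (a), existence of $\eta^*$ as a fixed point follows from the same type of decomposition: writing $G^* = R^{(0)} + \gamma\sum_{t\ge 1}\gamma^{t-1}R^{(t)}$ and using that, conditional on $S^{(1)}$, the shifted tail is an independent full MDP started at $S^{(1)}$, one reads off $\cT_s\eta^*=\eta^*_s$. Uniqueness is then immediate from (b): any solution $\eta$ of $\cT\eta=\eta$ satisfies $\eta=\Tn\eta$ for every $n$, and the right-hand side converges weakly component-wise to $\eta^*$, hence $\eta=\eta^*$. The main obstacle I expect is the book-keeping in the induction leading to the unrolling formula: one must enlarge the probability space at each step so that the auxiliary variables $(G_{\sn})$ produced by the previous iterate remain independent of the rewards and transitions used in the next application of $\cT$, and one must justify identifying the freshly sampled shifted MDP with the continuation $(R^{(t)},S^{(t+1)})_{t\ge 1}$ of one single full MDP; once this coupling is in place, the remainder reduces to almost-sure arithmetic together with a standard continuous-mapping/Slutsky step.
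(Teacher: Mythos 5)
Your proposal is correct, and it is genuinely a different route from the paper: the paper does not prove Theorem~\ref{thm:weak} directly at all, but defers to Proposition~4.34 of \cite{bdr2023} for the case of finite first moments and to Theorem~1 of \cite{genesp23} for the general case (A1). Your argument---unrolling $\Tn_s\eta$ as the truncated return plus a $\gamma^n$-scaled remainder and letting $n\to\infty$---is the natural self-contained proof, and almost certainly what the cited reference does internally; it has the advantage of making the mechanism visible, at the cost of the inductive book-keeping you flag.

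Two small remarks on the sketch. First, your Slutsky step can be streamlined: since $\cS$ is finite, $\max_{\sn\in\cS}|G_{\sn}|<\infty$ almost surely, so $\gamma^n G_{S^{(n)}}\to 0$ almost surely (not merely in probability); combined with the almost sure convergence of $\sum_{t<n}\gamma^t R^{(t)}$ to $G^*$ under (A1), the whole expression $V_n=\sum_{t<n}\gamma^t R^{(t)}+\gamma^n G_{S^{(n)}}$ converges to $G^*$ almost surely, and restricting to the positive-probability event $\{S^{(0)}=s\}$ already gives the desired weak convergence of the conditional laws without invoking Slutsky. Second, the coupling concern you raise at the end is a slight red herring: you do not need to re-enlarge the probability space at each iterate. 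Fix one full MDP $(S^{(t)},A^{(t)},R^{(t)})_{t\ge 0}$ and one family $(G_{\sn})_{\sn\in\cS}\sim\eta$ independent of it, and prove by induction that $\cL(V_n\mid S^{(0)}=s)=\Tn_s\eta$. The inductive step conditions on $(S^{(0)},A^{(0)},R^{(0)},S^{(1)})$ and applies the hypothesis to the shifted process $(S^{(1+t)},A^{(1+t)},R^{(1+t)})_{t\ge 0}$ together with the same $(G_{\sn})$, which by the Markov property is a full MDP started at $S^{(1)}$, conditionally independent of $R^{(0)}$ given $(S^{(0)},A^{(0)},S^{(1)})$. No fresh auxiliary variables are needed. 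With that simplification, the remainder of your argument for (b), and the existence-plus-uniqueness argument for (a), go through as written.
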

\begin{proof}
    In case $\bE[|R_{sa\sn}|]<\infty$ for all $s,a,\sn$ this result if well-known and follows, for example, from Proposition~4.34 in \cite{bdr2023}. Under the more general assumption (A1) it follows from Theorem~1 in \cite{genesp23}. 
\end{proof}

Theorem~\ref{thm:weak}(b) leads to a theoretical approximation algorithm for $\eta^*$: choose an initial approximation $\eta^{(0)}$ and calculate a sequence $\eta^{(n)}, n\in\bN$ of approximations by the inductive update rule $\eta^{(n)} = \cT\eta^{(n-1)}$. However, this is not practical for applications, as we explain next. First, by conditioning on $\{A^{(0)}=a, S^{(1)}=\sn\}$, the $s$-th component of $\cT\eta$ equals 
\begin{equation}\label{eq:bellmanoperator-mixture}
    \cT_s\eta = \sum\nolimits_{(a,\sn)\in\cA\times\cS}\pi(a|s)p(\sn|s,a)\cL\left(R_{sa\sn} + \gamma G_{\sn}\right).
\end{equation}
The convolutions $\cL\left(R_{sa\sn} + \gamma G_{\sn}\right)$ also equal (measure theoretic) convex combinations:
\begin{equation}\label{eq:bellmanoperator-integral}
    \cL\left(R_{sa\sn} + \gamma G_{\sn}\right) = \int_{\bR}\cL\left(R_{sa\sn} + \gamma z\right)\md \eta_{\sn}(z) = \int_{\bR}\cL\left(y + \gamma G_{\sn}\right)\md r(y|s,a,\sn).
\end{equation}    
Now, assume all components of $\eta$ are finitely supported with $m\in\bN$ particles, say $\eta_{s} = \sum_{i=1}^{m}w_{s i}\delta_{z_{s i}}$. Combining \eqref{eq:bellmanoperator-mixture} and \eqref{eq:bellmanoperator-integral} gives
\begin{equation}\label{eq:Teta}
    \cT_s\eta = \sum\nolimits_{(a,\sn)\in\cA\times\cS}\sum\nolimits_{i=1}^{m}\pi(a|s)p(\sn|s,a)w_{\sn i}\cL\left(R_{sa\sn} + \gamma z_{\sn i}\right).
\end{equation}
Classical DDP algorithms are developed and analysed under the assumption
\begin{itemize}
    \item[] (A2.Fin): $\;r(\;\cdot\;|s,a,\sn)\in\sPRC$ is finitely supported for all $s,a,\sn$,
\end{itemize}
which implies (A2). If (A2.Fin) holds, that is every reward distribution is finitely supported, say of size $N\in\bN$ with $\cL(R_{sa\sn}) = \sum_{l=1}^N q_{sa\sn l}\delta_{r_{sa\sn l}}$, formula \eqref{eq:Teta} yields 
\begin{equation}\label{eq:Teta-cat}
    \cT_s\eta = \sum\nolimits_{(a,\sn)\in\cA\times\cS}\sum\nolimits_{i=1}^{m}\sum\nolimits_{l=1}^N\pi(a|s)p(\sn|s,a)w_{\sn i}q_{sa\sn l}\delta_{r_{sa\sn l} + \gamma z_{\sn i}},
\end{equation}
which is finitely supported of size (at most) $m\cdot|\cA|\cdot|\cS|\cdot N$ and a representation can be calculated with a space-time complexity of order $\cO(m\cdot|\cA|\cdot|\cS|\cdot N)$, see Algorithm 5.1 in \cite{bdr2023}. As a consequence, if $\eta^{(0)}$ has finitely supported components of size $m$, inductively calculating the $n$-th approximation $\eta^{(n)} = \cT\eta^{(n-1)}$ by applying \eqref{eq:Teta-cat} to $\eta = \eta^{(n-1)}$ is possible in theory, but has a space-time complexity of order $\cO(m\cdot |\cA|^n\cdot|\cS|^n\cdot N^n)$, which is prohibitive for applications. 

If only (A2) holds, the situation is even more complicated: \eqref{eq:Teta} shows that $\cT_s\eta$ can be an intricate mixture involving continuous distributions and/or distributions with unbounded support, in particular it is, in general, no longer finitely supported. However, \eqref{eq:Teta} also shows that the CDF of $\cT_s\eta$ at $x\in\bR$ is given by
\begin{equation}\label{eq:Teta-cdf}
    F_{\cT_s\eta}(x) = \sum\nolimits_{(a,\sn)\in\cA\times\cS}\sum\nolimits_{i=1}^{m}\pi(a|s)p(\sn|s,a)w_{\sn i}F_{sa\sn}\left(x - \gamma z_{\sn i}\right),
\end{equation}
which can be calculated explicitly under (A2) with a time-complexity of order $\cO(m\cdot |\cA|\cdot|\cS|)$. However, calculating the CDF of even the second iterate $\cT^{\circ 2}\eta$ is no longer possible exactly. 

The idea of existing DDP algorithms that work under (A2.FIN) is to prevent the above mentioned space-time complexity blow-up by choosing a \emph{projection map} $\Pi:\sPR\to\sPR$ that maps any $\mu\in\sPR$ to a finitely supported distribution $\Pi(\mu)$ having a \emph{fixed-size} $m$, and to use the update rule $\eta^{(n)} = \bar\Pi\cT\eta^{(n-1)}$, where $\bar\Pi$ is the component-wise extension of $\Pi$ to a map $\sPRS\to\sPRS$. Assuming $\Pi(\mu)$ can be calculated for any finitely supported $\mu$ within controllable space-time-complexity, this leads to useful algorithms, which are discussed and analysed in Chapter 5 of \cite{bdr2023}. Many of these algorithms are not directly applicable under (A2), as the used projections can often not easily be evaluated for distributions which are not finitely supported. Our approach is to focus on a special class of (parameterised) projections $\Pi(\cdot,\xi)$ in which $\Pi(\mu,\xi)$ depends on $\mu$ only by a controllable number of evaluations of its CDF. 
 
\begin{remark}[State-Action return distributions]
    The so-called \emph{state-action}-return distribution is defined as the $(\cS\times\cA)$-indexed collection $\zeta^* = [\zeta^*_{sa}:(s,a)\in\cS\times\cA]$ with $\zeta^*_{sa} = \cL(G^*|S^{(0)}=s, A^{(0)}=a)$. For convenience, we only discuss state-return distributions, but everything could be formulated analogously for the state-action case. This restriction is a recurring theme in the field of distributional policy evaluation, see Section 4 of \cite{mori_10a} or Chapter 5 of \cite{bdr2023}.
\end{remark}

\begin{remark}[Moment Assumptions]\label{rem:moment}
    Each of the following (parameterised) moment assumptions on reward distributions implies (A1):\footnote{In the notations (A1.P$\alpha$), (A1.E$\lambda$), (A1.B) the P indicates polynomial tails, the E exponential tails and B bounded support, where $\alpha$ and $\lambda$ specify the degree and rate respectively.} 
    \begin{itemize}
        \item[] (A1.P$\alpha$), $\alpha\in(0,\infty):\;\;\bE[|R_{sa\sn}|^{\alpha}]<\infty$ for all $s,a,\sn$,
        \item[] (A1.E$\lambda$), $\lambda\in(0,\infty):\;\;\bE[\exp(\lambda|R_{sa\sn}|)]<\infty$ for all $s,a,\sn$,
        \item[] (A1.B)$\;:\;\;$ there is $[\rmin,\rmax]\subset\bR$ with $\bP\big[\rmin\leq R_{sa\sn}\leq \rmax\big] = 1$ for all $s,a,\sn$. 
    \end{itemize}
     Condition (A1.B) implies (A1.E$\lambda$) for every $\lambda$, (A1.E$\lambda$) implies (A1.P$\alpha$) for every $\alpha$ and (A1.P$\alpha$) implies (A1). The bounded rewards case (A1.B) is frequently encountered in the (distributional) reinforcement learning literature, as well as (A1.P$\alpha$) with $\alpha=1$. All of these moment assumptions transfer to the return distributions, let $\eta^*_s=\cL(G^*_s)$: (A1.P$\alpha$) implies $\bE[|G^*_s|^{\alpha}]<\infty$ for all $s$, (A1.E$\lambda$) implies $\bE[\exp(\lambda|G^*_s|)]<\infty$ for all $s$ and (A1.B) via bounding interval $[\rmin,\rmax]\subset\bR$ implies $\bP[\vmin\leq G^*_s\leq \vmax]=1$ for all $s$ with bounding interval $[\vmin,\vmax] = [\frac{\rmin}{1-\gamma},\frac{\rmax}{1-\gamma}]$. Also note that (A2.Fin) implies (A1.B), hence $\eta^*_s$ has compact support in this case. In case of unbounded support, the design and analysis of DDP algorithms is more challenging.   
\end{remark}

\section{A general DDP Framework}\label{sec:framework}

A \emph{parameterised projection} is a map 
\begin{equation*}
    \Pi:\sPR\times\Xi\longrightarrow\sPR,\;\;(\mu,\xi)\longmapsto \Pi(\mu,\xi),  
\end{equation*}
where $\Xi\owns \xi$ is a set of parameters. Parameters are chosen by a \emph{parameter algorithm} $A$, that can depend on ingredients of $\mdp$, and maps as  
\begin{equation*}
    A:\sPRS\times\cS\times\bN\longrightarrow\Xi,\;\;(\eta,s,k)\longmapsto A(\eta,s,k).
\end{equation*}
Given an initial approximation $\eta^{(0)}\in\sPRS$, the maps $\Pi$ and $A$ are used to calculate a sequence
\begin{equation}\label{eq:sequence}
    (\eta^{(n)},\bm{\xi}^{(n)})\in\sPRS\times\Xi^{\cS},\;n\in\bN
\end{equation}
inductively over $n\in\bN$ as follows:
\begin{enumerate}
    \item calculate $\bxi^{(n)} = [\xi^{(n)}_s:s\in\cS]$ with $\xi^{(n)}_s = A(\eta^{(n-1)},s,n)$, 
    \item calculate $\eta^{(n)} = [\eta^{(n)}_s:s\in\cS]$ with $\eta^{(n)}_s = \Pi(\cT_s\eta^{(n-1)},\xi^{(n)}_s)$.
\end{enumerate}

All concrete parameterised projections we consider in this paper calculate finitely supported distributions in which the size of $\Pi(\mu,\xi)\in\sPRC$ depends on the parameter $\xi$. Further, as we allow $A$ to depend on the ingredients of $\mdp$ and $\cT$ is a function of $\mdp$, the parameters $\bxi^{(n)}$ calculated in the $n$-th update step may depend on (properties of) $\cT\eta^{(n-1)}$, see Section~\ref{sec:quantile}. Finally, the DDP framework considered in Chapter 5 of \cite{bdr2023} is included in this setting by considering $\Xi$ to be a one-point set, we elaborate on this in Section~\ref{sec:nonexpansion}.

\begin{example}[Quantile Dynamic Programming, QDP]\label{ex:qdp}
    Let $M:\bN\to\bN$ be a function, which is a hyper-parameter of the algorithm. Let $\Xi=\bN$ and
    \begin{align*}
        \Pi&:\sPR\times\bN\to\sPR,&\Pi(\mu,m) &= \frac{1}{m}\sum\nolimits_{i=1}^m \delta_{x_i}\;~\text{with}\;x_i = F^{-1}_{\mu}\left(\frac{2i-1}{2m}\right),\\
        A&:\sPRS\times\cS\times\bN\to\bN,&A(\eta,s,k) &= M(k).
    \end{align*}
    Let $\eta^{(0)}$ have finitely supported components. The iterative procedure to calculate $\eta^{(n)}, n\in\bN$ as in \eqref{eq:sequence} specifies to  
    \begin{equation}\label{eq:qdp-update}
        \eta^{(n)}_s = \frac{1}{M(n)}\sum\nolimits_{i=1}^{M(n)}\delta_{x_i}\;\text{with}\;x_i = F_{\cT_s\eta^{(n-1)}}^{-1}\left(\frac{2i-1}{2M(n)}\right).
    \end{equation}
    Following Chapter of 5 of \cite{bdr2023}, who introduce and analyse this algorithm for $M(k)\equiv m\in\bN$ constant and under (A2.Fin), we call this the \emph{Quantile Dynamic Programming (QDP)} algorithm. To perform \eqref{eq:qdp-update} in practice, quantiles of $\cT_s\eta^{(n-1)}$ need to be accessible, which holds under (A2.Fin) and leads to Algorithm 5.4 in \cite{bdr2023}, which can be easily modified to work for arbitrary size functions $M$. By analysing the trade-off between approximation quality and time complexity, we argue to choose $M(k)$ growing exponentially in $k$. To be precise, choosing $M(k) = \lceil (1/\theta)^k\rceil$ for some $\theta\in[\gamma^c,1)$, cf.~Example~\ref{ex:qdp-time}. 
\end{example}
    
QDP is directly applicable only in case (A2.Fin), because then, for any finitely supported $\eta$, it is $\cT_s\eta$ finitely supported, see \eqref{eq:Teta-cat}, and hence its QF $F_{\cT_s\eta}^{-1}$ can be explicitly evaluated. Under the more general assumption (A2), only the CDF $F_{\cT_s\eta}$ can be evaluated, see \eqref{eq:Teta-cdf}, which is the CDF of an intricate mixture of continuous and/or unbounded distributions. Inverting CDFs is, in general, only possible with numerical approximation methods, which can be costly. Similar problems arise when aiming to design random number generators that are universally applicable for distributions $\mu$ in which only their CDFs can be evaluated, we refer to Section~7 of \cite{hormann2004automatic} and Chapter~II.2 in \cite{devroye1986}. Note that many of these numerical inversion algorithms assume some sort of regularity on the CDF, which, in general, does not hold under (A2).

Concrete DDP algorithms applicable under (A2) are introduced and analysed in Section~\ref{sec:ddp-algorithms}, which starts by introducing a particular parameterised projection, see Figure~\ref{fig:pi} for a visualisation. In Sections \ref{sec:plain}, \ref{sec:plain-adaptive} and \ref{sec:quantile} we discuss concrete parameter algorithms for this projection. The latter leads to an \emph{approximation of the QDP algorithm applicable under (A2)}, where quantiles are approximated by \emph{linear splines}, but this approximation is \emph{on the level of parameters of the projection, not on the level of designing a projection}.

\section{Bounding the approximation error}\label{sec:metrics}

Let $\dd:\sPR\times\sPR\to[0,\infty]$ be an \emph{analysis metric} on $\sPR$, which is allowed to assign infinite distance $\dd(\mu,\nu)=\infty$ to pairs $\mu\neq \nu$.\footnote{$\dd$ satisfies $\dd(\mu,\nu)=0\Leftrightarrow \mu=\nu$ and $\dd(\mu,\nu)=\dd(\nu,\mu)$ and $\dd(\mu,\nu)\leq \dd(\mu,\zeta)+\dd(\zeta,\nu)$ for all $\mu,\nu,\zeta\in\sPR$.} By triangle inequality, $\dd$ is a proper finite metric restricted to $\sPdR = \{\mu\in\sPR:\dd(\mu,\delta_0)<\infty\}\subseteq\sPR$.\footnote{In Chapter~5 of \cite{bdr2023} it is further required that any $\mu\in\sPdR$ has finite first moment.} Any such $\dd$ is extended to a metric $\od$ on $\sPRS$ by $\od(\eta,\eta') = \max_{s\in\cS}\dd(\eta_s,\eta'_s)$, which is a proper finite metric on $\sPdRS\subseteq\sPRS$. With respect to the analysis metric $\dd$, the quality of an approximation $\eta^{(n)}$ for $\eta^*$ is measured by $\od(\eta^{(n)},\eta^*)$. Two popular parameterised families of analysis metrics are the following: 

The $\beta$-\emph{Wasserstein} distances $\wb, \beta\in(0,\infty]$, are defined as 
\begin{equation*}
    \wb(\mu,\nu) = \begin{cases}
        \int_0^1 |F_{\mu}^{-1}(u)-F_{\nu}^{-1}(u)|^{\beta}\md u,&\beta\in(0,1),\\
        \left[\int_0^1 |F_{\mu}^{-1}(u)-F_{\nu}^{-1}(u)|^{\beta}\md u\right]^{1/\beta},&\beta\in [1,\infty),\\
        \sup_{u\in(0,1)}|F_{\mu}^{-1}(u)-F_{\nu}^{-1}(u)|,&\beta = \infty.
    \end{cases}
\end{equation*}
We have $\sP_{\wb}(\bR) = \sP_{\beta}(\bR)$ for $\beta\in(0,\infty)$ and $\sP_{w_{\infty}}(\bR)$ is the set of distributions with bounded support. 
    
The \emph{Birnbaum--Orlicz average} distances $\lb, \beta\in(0,\infty]$, are defined as 
\begin{equation*}
    \lb(\mu,\nu) = \begin{cases}
        \int_{-\infty}^{\infty} |F_{\mu}(x)-F_{\nu}(x)|^{\beta}\md x,&\beta\in(0,1),\\
        \left[\int_{-\infty}^{\infty} |F_{\mu}(x)-F_{\nu}(x)|^{\beta}\md x\right]^{1/\beta},&\beta\in [1,\infty),\\
        \sup_{x\in\bR}|F_{\mu}(x)-F_{\nu}(x)|,&\beta = \infty.
    \end{cases}
\end{equation*}
It is $\ks := \ell_{\infty}$ the \emph{Kolmogorov--Smirnov} distance, $\ell_2$ is also known as \emph{Cram\'er} distance and $\ell_1 = w_1$. A complete characterisation of all spaces $\sP_{\ell_{\beta}}(\bR), \beta\in(0,\infty]$, in terms of moment assumptions is challenging. However, for our purpose, it is enough to observe the following: for $\beta=1$ we have $\sP_{\ell_1}(\bR) = \sP_{w_1}(\bR) = \sP_1(\bR)$, in case $\beta=\infty$ it is $\sP_{\ell_{\infty}}(\bR) = \sP_{\ks}(\bR) = \sP(\bR)$, that is $\ks$ is a metric on the whole of $\sPR$, and for $\beta\in(1,\infty)$ we have
\begin{equation}\label{eq:lb-domain}
    \sP_{\frac{1}{\beta}}(\bR) \;\subsetneq\; \sP_{\lb}(\bR) \;\subsetneq\; \bigcap\nolimits_{0<\varepsilon<\frac{1}{\beta}}\sP_{\frac{1}{\beta}-\varepsilon}(\bR),
\end{equation}
which is proved in Proposition~\ref{prop:lb-space} in Appendix \ref{app:spaces}.

\subsection{The Accumulated Projection Error}

The following theorem is the key tool to study $\od(\eta^{(n)},\eta^*)$, but also shows that not all of the above introduced metrics are equally admissible as analysis metrics. Let $c\in(0,\infty)$ be a fixed constant. 

\begin{theorem}[Theorem~4.25 of \cite{bdr2023}]\label{thm:hom}
    Let $\dd$ be $c$-homogeneous, regular and convex\footnote{See Definition~\ref{def:properties} in Appendix \ref{app:metric}, also refer to Definitions 4.23-25 in \cite{bdr2023}.}. Then $\cT$ is a $\gamma^c$-contraction with respect to $\od$, that is 
    \begin{equation}\label{eq:contraction}
        \od\left(\cT\eta,\cT\eta'\right)\;\leq\;\gamma^c\cdot \od(\eta,\eta')\;\;\text{for all}\;\eta,\eta'\in\sPRS
    \end{equation}
    and, consequently, for all $\eta\in\sPRS$:
    \begin{itemize}
        \item[(i)] $\od\left(\Tn\eta,\eta^*\right)\;\leq\;\gamma^{cn}\cdot \od(\eta,\eta^*)$ for all $n\in\bN_0$,
        \item[(ii)] $\od(\eta,\eta^*)<\infty\;\;\Longrightarrow\;\;\od(\eta,\eta^*)\leq \frac{\od(\eta,\cT\eta)}{1-\gamma^c}$.
    \end{itemize}
\end{theorem}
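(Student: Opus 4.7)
The plan is to prove the contraction inequality \eqref{eq:contraction} directly by dissecting $\cT_s\eta$ using the mixture-of-convolutions representation \eqref{eq:bellmanoperator-mixture}, and then deducing (i) and (ii) by standard Banach-fixed-point arguments using that $\cT\eta^{*}=\eta^{*}$.

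For the main contraction, I would fix $s\in\cS$ and write, by \eqref{eq:bellmanoperator-mixture},
\begin{equation*}
\cT_s\eta \;=\; \sum_{(a,\sn)}\pi(a|s)p(\sn|s,a)\,\cL\!\left(R_{sa\sn}+\gamma G_{\sn}\right),
\end{equation*}
where $G_{\sn}\sim\eta_{\sn}$ is independent of $R_{sa\sn}$, and likewise for $\eta'$ with $G'_{\sn}\sim\eta'_{\sn}$ (coupled on the same probability space, independent of the reward variables). Now apply the three assumed properties of $\dd$ in turn. First, convexity of $\dd$ applied to the above mixtures yields
\begin{equation*}
\dd\!\left(\cT_s\eta,\cT_s\eta'\right)\;\leq\;\sum_{(a,\sn)}\pi(a|s)p(\sn|s,a)\,\dd\!\left(\cL(R_{sa\sn}+\gamma G_{\sn}),\cL(R_{sa\sn}+\gamma G'_{\sn})\right).
\end{equation*}
Second, regularity (invariance under convolution with a common independent distribution, namely $\cL(R_{sa\sn})$) removes the reward term:
\begin{equation*}
\dd\!\left(\cL(R_{sa\sn}+\gamma G_{\sn}),\cL(R_{sa\sn}+\gamma G'_{\sn})\right)\;\leq\;\dd\!\left(\cL(\gamma G_{\sn}),\cL(\gamma G'_{\sn})\right).
\end{equation*}
Third, $c$-homogeneity gives $\dd(\cL(\gamma G_{\sn}),\cL(\gamma G'_{\sn}))=\gamma^{c}\,\dd(\eta_{\sn},\eta'_{\sn})$. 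Combining these, and bounding each $\dd(\eta_{\sn},\eta'_{\sn})\leq \od(\eta,\eta')$ before using that the weights $\pi(a|s)p(\sn|s,a)$ sum to one, yields $\dd(\cT_s\eta,\cT_s\eta')\leq\gamma^{c}\,\od(\eta,\eta')$. Taking the maximum over $s\in\cS$ establishes \eqref{eq:contraction}.

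For part (i), since $\eta^{*}$ is a fixed point of $\cT$ by Theorem~\ref{thm:weak}(a), an immediate induction using \eqref{eq:contraction} shows $\od(\Tn\eta,\eta^{*})=\od(\Tn\eta,\Tn\eta^{*})\leq\gamma^{cn}\od(\eta,\eta^{*})$. For part (ii), the triangle inequality combined with \eqref{eq:contraction} gives
\begin{equation*}
\od(\eta,\eta^{*})\;\leq\;\od(\eta,\cT\eta)+\od(\cT\eta,\cT\eta^{*})\;\leq\;\od(\eta,\cT\eta)+\gamma^{c}\,\od(\eta,\eta^{*}),
\end{equation*}
and when $\od(\eta,\eta^{*})<\infty$, this finite quantity may be subtracted from both sides (exploiting $\gamma^{c}<1$), giving the stated bound.

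The only genuinely nontrivial step is the very first one: I would need the convexity property of $\dd$ to apply in the measure-theoretic sense (an integral/mixture over the possibly infinite combinations indexed by $(a,\sn)$ together with randomness of $R_{sa\sn}$), rather than just a two-point convex combination. If the definition of convexity in the appendix is stated only for finite mixtures, this causes no problem since $\cS\times\cA$ is finite, and the reward term disappears through the regularity step rather than a convexity step; so the argument as written above only requires convexity for finite mixtures with weights $\pi(a|s)p(\sn|s,a)$. The key subtlety to state carefully is that regularity and homogeneity are applied to distributions parameterised by the random $R_{sa\sn}$ only after regularity has eliminated this dependence, which is why the order of the three inequalities matters.
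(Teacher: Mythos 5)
The paper does not prove this statement itself; it is cited verbatim from \cite{bdr2023} (Theorem 4.25), so there is no in-paper proof to compare against. Your argument is the standard one and is essentially correct, with one technical imprecision in the convexity step.

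The definition of convexity in Appendix~\ref{app:metric} is $p$-convexity for some $p\in[1,\infty)$: for weights $\alpha_i$ summing to one,
\begin{equation*}
\dd^p\Big(\sum_i\alpha_i\mu_i,\sum_i\alpha_i\nu_i\Big)\;\leq\;\sum_i\alpha_i\,\dd^p(\mu_i,\nu_i).
\end{equation*}
Your first inequality asserts the unpowered ($p=1$) version, which is a strictly stronger property and is not what is assumed. This is easily repaired: apply $p$-convexity to $\dd^p(\cT_s\eta,\cT_s\eta')$, then bound each term $\dd^p\big(\cL(R_{sa\sn}+\gamma G_{\sn}),\cL(R_{sa\sn}+\gamma G'_{\sn})\big)$ by $\big(\gamma^c\,\od(\eta,\eta')\big)^p$ using regularity and $c$-homogeneity exactly as you describe. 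Since this bound is uniform over $(a,\sn)$ and the weights $\pi(a|s)p(\sn|s,a)$ sum to one, the $p$-th powers telescope and taking the $p$-th root gives $\dd(\cT_s\eta,\cT_s\eta')\leq\gamma^c\od(\eta,\eta')$. The rest of your argument — the use of regularity to eliminate the common reward convolution, the use of $c$-homogeneity on the scaled variables, the induction for (i), and the triangle-plus-subtraction argument for (ii) using $\od(\eta,\eta^*)<\infty$ — is correct. Your closing remark about the order of application of the three properties and the finiteness of $\cS\times\cA$ making finite-mixture convexity sufficient is also right.
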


Of the above introduced metrics, the following have the properties stated in Theorem~\ref{thm:hom}:
\begin{itemize}
    \item[-] $\dd = \wb, \beta\in(0,\infty]$ with $c=\min\{1,\beta\}$,
    \item[-] $\dd = \lb, \beta\in[1,\infty)$ with $c=1/\beta$.
\end{itemize}

In particular, $\ks = \ell_{\infty}$ is \emph{not} covered by Theorem~\ref{thm:hom}.

\begin{remark}
    The inequalities stated in Theorem~\ref{thm:hom} are of interest in case the distances appearing in the upper bounds are finite. We have that 
    $$\eta,\eta',\eta^*\in \sPdR^{\cS},\;\;\cT(\sPdR^{\cS})\subseteq\sPdR^{\cS} \;\;\Longrightarrow\;\; \od(\eta,\eta'), \od(\eta,\eta^*), \od(\eta,\cT\eta)\;<\infty.$$ 
    By triangle inequality, \eqref{eq:contraction} and properties of $\dd$ it can be shown that the moment assumption 
    \begin{itemize}
        \item[] (A1.$\dd$)$\;:\;\;$ $\cL(R_{sa\sn})\in\sPdR\;\text{for all}\;s,a,\sn$
    \end{itemize}
    implies $\cT(\sPdR^{\cS})\subseteq\sPdR^{\cS}$. This alone may, in general, not be sufficient to conclude $\eta^*\in\sPdR^{\cS}$, but for the metrics we consider the following holds: for $\dd=\wb, \beta\in(0,\infty)$, it is $\sP_{\wb}(\bR)=\sPbR$, hence (A1.$\wb$) is equivalent to (A1.P$\beta$) and this assumption implies $\eta^*\in\sPbRS$, see Remark~\ref{rem:moment}. Similarly, $\sP_{w_{\infty}}(\bR)$ is the set of distributions with bounded support, hence (A1.$w_{\infty}$) is equivalent to (A1.B), which implies that all components of $\eta^*$ have bounded support. Finally, we have $\sP_{1/\beta}(\bR)\subset\sP_{\lb}(\bR)$ for $\beta\in[1,\infty)$, see Proposition~\ref{prop:lb-space}, hence (A1.$\lb$) is implied by (A1.P$\frac{1}{\beta}$), which implies $\eta^*\in\sP_{1/\beta}(\bR)^{\cS}\subseteq\sP_{\lb}(\bR)^{\cS}$. 
\end{remark}

The Kolmogorov--Smirnov distance $\ks=\ell_{\infty}$ does not satisfy the properties stated in Theorem~\ref{thm:hom} and thus is not as easily admissible as, for example, Wasserstein distances. However, we demonstrate in Section~\ref{sec:ks} how bounds on approximation errors $\od(\eta^{(n)},\eta^*)$ using $\dd=\wb$ for some $\beta\in(0,\infty)$ lead to useful bounds for $\oks(\eta^{(n)},\eta^*)$. 

For now, let $\dd$ be a $c$-homogeneous, regular and convex metric on $\sPR$. Let $\eta^{(0)}$ be an initial approximation and let $(\eta^{(n)},\bxi^{(n)}), n\in\bN$ be the sequence calculated inductively as in \eqref{eq:sequence}. A possible approach to bound the approximation error $\od(\eta^{(n)},\eta^*)$ is as follows: define the \emph{$k$-th projection error}($\er$) and the \emph{$n$-th accumulated projection error}($\eac$) by
\begin{align*}
    \er(k,\dd) &= \od(\eta^{(k)},\cT\eta^{(k-1)}),\\
    \eac(n,\dd) &= \sum\nolimits_{k=1}^{n}\gamma^{c(n-k)}\cdot\er(k,\dd).
\end{align*}

\begin{proposition}\label{prop:generalbound}
    $\od\left(\eta^{(n)},\eta^*\right)\;\leq\; \eac(n,\dd)\;+\;\gamma^{cn}\od(\eta^{(0)},\eta^*)$.
\end{proposition}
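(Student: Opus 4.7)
The plan is to combine the triangle inequality with the $\gamma^c$-contraction property of $\cT$ from Theorem~\ref{thm:hom}, and then unroll a one-step recursion over $k = 1, \dots, n$.

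First I would establish the one-step bound. Using the triangle inequality for $\od$ and inserting $\cT\eta^{(n-1)}$ as an intermediate point,
\begin{equation*}
    \od(\eta^{(n)}, \eta^*) \;\leq\; \od(\eta^{(n)}, \cT\eta^{(n-1)}) + \od(\cT\eta^{(n-1)}, \eta^*).
\end{equation*}
The first term on the right is exactly $\er(n, \dd)$ by definition. Since $\eta^*$ is a fixed point of $\cT$ by Theorem~\ref{thm:weak}(a), the second term equals $\od(\cT\eta^{(n-1)}, \cT\eta^*)$, which by the contraction property \eqref{eq:contraction} is at most $\gamma^c \od(\eta^{(n-1)}, \eta^*)$. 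This yields the recursion
\begin{equation*}
    \od(\eta^{(n)}, \eta^*) \;\leq\; \er(n, \dd) + \gamma^c \cdot \od(\eta^{(n-1)}, \eta^*).
\end{equation*}

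Next I would iterate this recursion, either by a straightforward induction on $n$ or simply by substituting the same inequality into itself $n$ times. Each substitution produces a new $\er$-term weighted by an additional factor of $\gamma^c$, so after $n$ steps one obtains
\begin{equation*}
    \od(\eta^{(n)}, \eta^*) \;\leq\; \sum_{k=1}^{n} \gamma^{c(n-k)} \er(k, \dd) + \gamma^{cn} \od(\eta^{(0)}, \eta^*),
\end{equation*}
which matches the definition of $\eac(n, \dd)$ plus the residual term from the initial error. No obstacle is expected here; the argument is just the classical perturbation-of-iterates bound for contractions, and the two non-trivial ingredients (contractivity of $\cT$ and $\cT\eta^* = \eta^*$) are already provided by Theorems~\ref{thm:weak} and \ref{thm:hom}. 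The only minor care needed is that the right-hand side is a valid inequality in $[0, \infty]$ even when some of the distances are infinite, so no rearrangement that would require finiteness is performed.
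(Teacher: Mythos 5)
Your argument is correct and follows the same route as the paper's own proof: insert $\cT\eta^{(n-1)}$ via the triangle inequality, apply the fixed-point identity $\cT\eta^*=\eta^*$ and the $\gamma^c$-contraction to get the one-step recursion, and then unroll by induction. Your remark that all steps remain valid in $[0,\infty]$ without needing finiteness is a reasonable and correct piece of extra care.
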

\begin{proof} 
    By induction on $n\in\bN_0$: for $n=0$ it is obvious. Assuming the inequality holds for some $n\in\bN_0$ and noticing the recursive formula $\eac(n+1,\dd) = \er(n+1,\dd) + \gamma^c \eac(n,\dd)$ it follows
    \begin{align*}
        \od\left(\eta^{(n+1)},\eta^*\right) &\leq \od\left(\eta^{(n+1)},\cT\eta^{(n)}\right) + \od\left(\cT\eta^{(n)},\eta^*\right)\\
        &= \er(n+1,\dd) + \od\left(\cT\eta^{(n)},\cT\eta^*\right) \\
        &\leq \er(n+1,\dd) + \gamma^{c}\od(\eta^{(n)},\eta^*)\\
        &\leq \er(n+1,\dd) + \gamma^{c}\left(\eac(n,\dd)\;+\;\gamma^{cn}\od(\eta^{(0)},\eta^*)\right)\\
        &=\eac(n+1,\dd)\;+\;\gamma^{c(n+1)}\od(\eta^{(0)},\eta^*),
    \end{align*}
    where the first inequality follows from triangle inequality, the equality from $\cT\eta^*=\eta^*$, the second inequality  from Theorem~\ref{thm:hom} and the third by induction hypothesis. 
\end{proof}

Bounds on projection errors yield bounds on the accumulated projection error:

\begin{lemma}\label{lemma:ape-bound}
    Let $r\in(0,\infty), \theta\in(0,1), D>0$. Denote with $\Li_s(z) = \sum\nolimits_{k=1}^{\infty}k^{-s}z^k$ the polylogarithm of order $s$ and argument $z$. Then the following implications hold
    \begin{align*}
        \forall k=1,\dots,n:\;\er(k,\dd)&\leq D &&\Longrightarrow && \eac(n,\dd)\leq \frac{D}{1-\gamma^c},\\
        \forall k=1,\dots,n:\;\er(k,\dd)&\leq D\cdot k^{-r} &&\Longrightarrow && \eac(n,\dd)\leq \frac{D\cdot \Li_{-r}(\gamma^c)}{\gamma^c}\cdot n^{-r},\\
        \forall k=1,\dots,n:\;\er(k,\dd)&\leq D\cdot\theta^k &&\Longrightarrow && \eac(n,\dd)\leq \begin{cases}
                \frac{D}{(\gamma^c/\theta)-1}\cdot \gamma^{cn},&\theta<\gamma^c,\\
                D\cdot n\cdot \gamma^{cn},&\theta=\gamma^c,\\
                \frac{D}{1-\gamma^c/\theta}\cdot \theta^n,&\theta>\gamma^c.
        \end{cases}
    \end{align*}
\end{lemma}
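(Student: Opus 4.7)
All three implications follow by substituting the respective bound on $\er(k,\dd)$ into the defining sum
\[
\eac(n,\dd)\;=\;\sum_{k=1}^n \gamma^{c(n-k)}\er(k,\dd)
\]
and controlling the resulting one-variable sums. The plan is to handle the three cases in increasing order of difficulty.

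\textbf{Constant bound.} Substitute $\er(k,\dd)\le D$ and extend the geometric sum to infinity:
\[
\eac(n,\dd)\;\le\;D\sum_{k=1}^n \gamma^{c(n-k)}\;=\;D\sum_{j=0}^{n-1}\gamma^{cj}\;\le\;\frac{D}{1-\gamma^c}.
\]

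\textbf{Exponential bound.} Substitute $\er(k,\dd)\le D\theta^k$, pull out $\theta^n$, and set $j=n-k$ to get
\[
\eac(n,\dd)\;\le\;D\theta^n\sum_{j=0}^{n-1}(\gamma^c/\theta)^j.
\]
Now read off the three subcases $\theta<\gamma^c$, $\theta=\gamma^c$, $\theta>\gamma^c$ as $\gamma^c/\theta>1$, $=1$, $<1$. In the first case, bound the geometric sum by $(\gamma^c/\theta)^n/((\gamma^c/\theta)-1)$ and simplify $\theta^n(\gamma^c/\theta)^n=\gamma^{cn}$. In the second case, the sum is just $n$. In the third, extend to an infinite geometric series $1/(1-\gamma^c/\theta)$ and keep $\theta^n$.

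\textbf{Polynomial bound.} This is the key step. Substituting $\er(k,\dd)\le Dk^{-r}$ and reindexing with $m=n-k+1$ (so that $n-k=m-1$ and $k=n-m+1$) yields
\[
n^r\cdot\eac(n,\dd)\;\le\;D\sum_{m=1}^n \gamma^{c(m-1)}\left(\frac{n}{n-m+1}\right)^r.
\]
The elementary lemma that drives everything is the inequality
\[
\frac{n}{n-m+1}\;\le\;m\qquad\text{for all }1\le m\le n,
\]
which, after clearing denominators, is equivalent to the quadratic inequality $m^2-(n+1)m+n\le 0$ whose roots are exactly $1$ and $n$; so it holds on the whole interval. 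Applying this term-by-term (since $r>0$) and extending the sum to infinity gives
\[
\sum_{m=1}^n \gamma^{c(m-1)}\left(\frac{n}{n-m+1}\right)^r\;\le\;\sum_{m=1}^{\infty}m^r\gamma^{c(m-1)}\;=\;\frac{1}{\gamma^c}\sum_{m=1}^{\infty}m^r(\gamma^c)^m\;=\;\frac{\Li_{-r}(\gamma^c)}{\gamma^c},
\]
using that $\Li_{-r}(z)=\sum_{m\ge 1} m^r z^m$ and that $\gamma^c\in(0,1)$ makes this series converge.

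The only non-routine observation is the reindexing $m=n-k+1$ together with the quadratic inequality $n/(n-m+1)\le m$: without it, a crude split at $k=n/2$ only recovers an extra logarithmic/geometric factor and does not cleanly match the polylogarithmic constant $\Li_{-r}(\gamma^c)/\gamma^c$ in the stated bound. Everything else is manipulation of geometric series.
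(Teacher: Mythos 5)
Your proof is correct and takes essentially the same approach as the paper: for the polynomial case, your inequality $n/(n-m+1)\le m$ for $1\le m\le n$ is, after the substitution $m=k+1$, exactly the paper's inequality $1/(n-k)\le (k+1)/n$ for $k=0,\dots,n-1$, and the remaining steps (extending to infinite series, identifying the polylogarithm, and handling the geometric cases) are identical.
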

\begin{proof}
    The first and third implications follow from bounding finite geometric series by infinite geometric series, resp.~explicit evaluation in case $\theta=\gamma^c$. The second implication follows from $1/(n-k)\leq (k+1)/n$ for every $k=0,\dots,n-1$, bounding finite by infinite series and plugging in the definition of the polylogarithm.
\end{proof}

Proposition~\ref{prop:generalbound} in combination with Lemma~\ref{lemma:ape-bound} show that if projection errors decay towards zero fast enough as $k\to\infty$, then the approximation error $\od(\eta^{(n)},\eta^*)$ also decays to zero as $n\to\infty$ with a comparable rate of decay. This allows to bound the minimal number of iterations required to calculate an approximation $\eta^{(n)}$ that is $\varepsilon$-close to $\eta^*$ with respect to $\dd$. By investigating the trade-off between time complexity and approximation quality, we make a case for constructing DDP algorithms such that $\er(k,\dd)$ decays exponentially with some rate $\theta\in[\gamma^c,1)$, corresponding to the last (two) parts of the previous lemma, see the following Example \ref{ex:qdp-time} and Section \ref{sec:plain}.

\begin{example}[Analysing QDP with respect to $\dd=w_1$]\label{ex:qdp-time}
    Let $\eta^{(n)}, n\in\bN$ be as in Example~\ref{ex:qdp} having used the size function $M:\bN\to\bN$, that is $\eta^{(n)}_s = \Pi(\cT_s\eta^{(n-1)},M(n))$ with $\Pi(\mu,m) = m^{-1}\sum_{i=1}^m\delta_{F_{\mu}^{-1}(\nicefrac{2i-1}{2m})}$. We consider the analysis metric $\dd=w_1=\ell_1$, which is $c=1$-homogeneous. To analyse QDP, we note the following inequalities: 
    \begin{equation*}
        w_1(\Pi(\mu,m),\mu) \leq \int_0^{\frac{1}{2m}}\left(F_{\mu}^{-1}(1-u) - F_{\mu}^{-1}(u)\right)\md u,\;\;\;w_1(\mu,\nu) \leq w_{\infty}(\mu,\nu).
    \end{equation*}
    As discussed, QDP is practical in case (A2.Fin) holds, which implies that reward distributions are supported on some compact interval $[\rmin,\rmax]\subset\bR$. Let $[\vmin,\vmax] = \left[\rmin/(1-\gamma),\rmax/(1-\gamma)\right]$. If all components of $\eta\in\sPRS$ are supported on $[\vmin,\vmax]$, then all components of $\cT\eta$ are supported on $[\vmin,\vmax]$ and the same holds for the components of $\eta^*$. Assume $\eta^{(0)}_s$ is finitely supported  with $M(0)$ particles supported on $[\vmin,\vmax]$. By bounding quantiles of compactly-supported distributions by the endpoints of their support, using the above inequalities and plugging in the definitions of $\er, \eac$ as well as using Proposition~\ref{prop:generalbound} we obtain
    \begin{align*}
        \er(k,w_1) &\leq \frac{\vmax-\vmin}{2M(k)},\;\;\eac(n,w_1) \leq \sum\nolimits_{k=1}^n \gamma^{n-k}\frac{\vmax-\vmin}{2M(k)},\;\;\overline{w}_1(\eta^{(n)},\eta^*) \leq e(n),
    \end{align*}
    where the upper bound $e(n)$ on the $n$-th approximation error is 
    \begin{equation}
        e(n)= \sum\nolimits_{k=1}^n \gamma^{n-k}\frac{[\vmax-\vmin]}{2M(k)} + \gamma^n [\vmax-\vmin].\footnote{if $M(k)\equiv m\in\bN$ is constant, $e(n)$ can be further bounded by $\left[\vmax-\vmin\right]\cdot\left[\frac{1}{2m(1-\gamma)}\;+\;\gamma^n\right]$, cf.~Lemma 5.30 of \cite{bdr2023}.}    
    \end{equation}
    Treating the sizes of $\cS, \cA$ as well as the number of particles of reward distributions as constants, the time complexity to calculate $\eta^{(n)}$ is of order $\cO(\timen)$ with 
    $$\timen = \sum\nolimits_{k=1}^nM(k)\log\left(M(k)\right),$$
    compare to page 137 in \cite{bdr2023}. To visualise the effect of choosing different size functions $M$, let 
    $$n(T) = \max\{n\in\bN_0|\timen\leq T\}$$
    be the number of iterations QDP has calculated up to (a global constant of) time $T>0$. Thus, $e(n(T))$ is a guaranteed bound for the approximation error after execution time $T$. Note that for $e(n(T))\to 0$ as $T\to\infty$ it is necessary that $M(k)\to\infty$ as $k\to\infty$. Figure~\ref{fig:qdp-time-plot} shows plots of functions $T\mapsto \log(e(n(T)))$ for different choices of sizes functions $M$: we compare $M(k)=\lceil (1/\theta)^k\rceil$ for different $\theta$ with $M(k)\equiv m$ for different $m$. The plot suggests to prefer the exponential choice with $\theta\approx \frac{\gamma+1}{2}$. 

    \begin{figure}[t]
    \centering
    \includegraphics[width=0.99\textwidth]{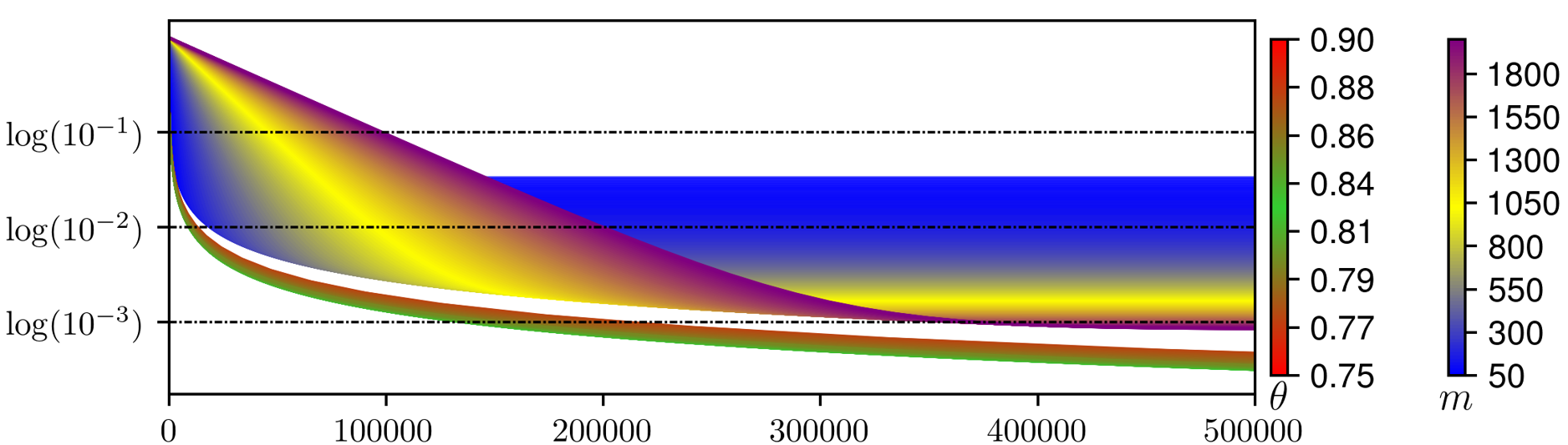} 
    \caption{(Partially) overlapping curves $T\mapsto \log(e(n(T)))$ for $\gamma=0.7$ and different size functions; $M(k)=\lceil (1/\theta)^k\rceil$ with $\theta\in[0.75,0.9]$ and $M(k)\equiv m$ constant with $m\in\{50,51,\dots,2000\}$. For each choice of $M$ the curve has a single color.}
    \label{fig:qdp-time-plot}
    \end{figure}

\end{example}

\subsection{Fixed Non-Expansive Projection}\label{sec:nonexpansion}

In case $\Xi$ contains one element, $\Pi$ can be reduced to a function $\Pi:\sPR\to\sPR$. Let $\bar\Pi:\sPRS\to\sPRS$ be the component-wise extension of $\Pi$. The $n$-th calculated approximation $\eta^{(n)}$ then takes the form $\eta^{(n)} = (\bar \Pi\cT)^{\circ n}(\eta^{(0)})$ and the parameter algorithm $A$ as well as the sequence $\bxi^{(n)}, n\in\bN$, can be eliminated from the formalism of Section~\ref{sec:framework}. This corresponds to the DDP framework considered in Chapter 5 in \cite{bdr2023}. Let $\dd$ be $c$-homogeneous, regular and convex. Proposition~\ref{prop:generalbound} applies and leads to 
\begin{equation}\label{eq:apebound}\tag{APE-bound}
    \od(\eta^{(n)},\eta^*)\leq \gamma^{cn}\od(\eta^{(0)},\eta^*) + \sum\nolimits_{k=1}^{n}\gamma^{c(n-k)}\od(\eta^{(k)},\cT\eta^{(k-1)}).
\end{equation}

In case $\Pi$ is a \emph{non-expansion} with respect to $\dd$, the approximation errors $\od(\eta^{(n)},\eta^*)$ have been studied in Chapter 5 of \cite{bdr2023}. To discuss how the bound  \eqref{eq:apebound} relates we make the following assumptions:
    \begin{itemize}
        \item[(i)] $\Pi$ is a \emph{non-expansion} with respect to $\dd$, that is $\dd(\Pi\mu,\Pi\nu)\leq \dd(\mu,\nu)$ for all $\mu,\nu\in\sPR$,
        \item[(ii)] $\cT(\sPdRS)\subseteq\sPdRS$ and $\bar\Pi(\sPdRS)\subseteq\sPdRS$,
        \item[(iii)] $\bar\Pi\cT$ has a fixed-point $\hat\eta\in\sPdRS$.
    \end{itemize}
    Note that (i) implies that $\bar\Pi\cT$ is a $\gamma^c$-contraction; (ii) implies that $\bar\Pi\cT(\sPdRS)\subseteq \sPdRS$. In particular, the fixed-point $\hat\eta$ of $\bar\Pi\cT$ is unique within $\sPdRS$. Furthermore, (i)+(ii)  imply (iii) in case the space $(\sPdRS,\od)$ is complete. By triangle inequality, conditions (i)-(iii) lead to 
    \begin{equation}\label{eq:expansionbound}\tag{$\hat\eta$-bound}
        \od(\eta^{(n)},\eta^*) \leq \gamma^{cn}\od(\eta^{(0)},\hat\eta) + \od(\hat\eta,\eta^*).
    \end{equation}
    Note that \eqref{eq:expansionbound} makes use of assumptions (i)-(iii), while \eqref{eq:apebound} does not. However, under these additional assumptions, \eqref{eq:apebound} can be further bounded and is seen to be close to \eqref{eq:expansionbound}: first, for every $k\in\bN$ it holds that 
    \begin{align*}
        \od(\eta^{(k)},\cT\eta^{(k-1)}) &\leq \od(\eta^{(k)},\hat\eta) + \od(\hat\eta,\cT\hat\eta) + \od(\cT\hat\eta,\cT\eta^{(k-1)})\\
        &\leq \gamma^{ck}\od(\eta^{(0)},\hat\eta) + \od(\hat\eta,\cT\hat\eta) + \gamma^c\od(\hat\eta,\eta^{(k-1)})\\
        &\leq 2\gamma^{ck}\od(\eta^{(0)},\hat\eta) + \od(\cT\hat\eta,\hat\eta)
    \end{align*}
    and hence
    \begin{equation}\label{eq:apebound2}
        \eqref{eq:apebound} \leq \gamma^{cn}\od(\eta^{(0)},\eta^*) + 2n\gamma^{cn}\od(\eta^{(0)},\hat\eta) + \frac{\od\left(\cT\hat\eta,\hat\eta\right)}{1-\gamma^c}.
    \end{equation}
    The only non-vanishing terms in the upper bounds \eqref{eq:expansionbound}, resp. \eqref{eq:apebound2}, are $\od(\hat\eta,\eta^*)$, resp. $\od\left(\cT\hat\eta,\hat\eta\right)/(1-\gamma^c)$. They are related by 
    \begin{equation}\label{eq:fixedpointbound}
        \frac{\max\left\{\od(\bar\Pi\eta^*,\eta^*),\od(\cT\hat\eta,\hat\eta)\right\}}{1+\gamma^c}\leq \od(\hat\eta,\eta^*)\leq \frac{\min\left\{\od(\bar\Pi\eta^*,\eta^*),\od(\cT\hat\eta,\hat\eta)\right\}}{1-\gamma^c},
    \end{equation}
    which follows from triangle inequality and contractive properties.\footnote{Also, \eqref{eq:fixedpointbound} immediately implies $\eta^*=\hat\eta\Leftrightarrow\Pi\eta^*=\eta^*\Leftrightarrow \cT\hat\eta=\hat\eta$.} Finally, 
    \begin{align*}
        \eqref{eq:expansionbound} &&\leq&&  && && &\gamma^{cn}\od(\eta^{(0)},\hat\eta) &&+&& &\frac{\od\left(\bar\Pi\eta^*,\eta^*\right)}{1-\gamma^c}&&,\\
        \eqref{eq:apebound} &&\leq&& \gamma^{cn}\od(\eta^{(0)},\eta^*) &&+&& 2n&\gamma^{cn}\od(\eta^{(0)},\hat\eta) &&+&& \frac{1+\gamma^c}{1-\gamma^c}\cdot&\frac{\od\left(\bar\Pi\eta^*,\eta^*\right)}{1-\gamma^c}&&.
    \end{align*}

    The vanishing term in the final upper bound for \eqref{eq:apebound} decays to zero faster than $\theta^n$ for every $\theta\in(\gamma^c,1)$ and the non-vanishing term is the same as in \eqref{eq:expansionbound} up to a factor $(1+\gamma^c)/(1-\gamma^c)$. 
    
   In summary, we conclude that bounding approximation errors by accumulated projection errors is a very flexible approach, that can also cover varying projections or fixed-projections that are expansive with respect to $\dd$. In the special case of fixed non-expansive projections, \eqref{eq:expansionbound} appears to be sharper than \eqref{eq:apebound}, but for practical reasons the bounds are close.

    \begin{remark}
        The QDP-Algorithm with a constant size function $M(k)\equiv m$ fits the framework of a fixed-projection, $\Pi = \Pi(\cdot,m)$. Note that this projection is an expansion with respect to $w_1$, that is the approximation errors cannot be bounded by \eqref{eq:expansionbound} directly; cf.~Lemma 5.30 and Exercise~5.20 in \cite{bdr2023}.
    \end{remark}

\section{A class of DDP algorithms}\label{sec:ddp-algorithms}

A parameterised projection $\Pi:\sPR\times\Xi\to\sPRC$ that is well-suited for DDP algorithms under (A2), closely related to \emph{quantization schemes} of \cite{lloyd1982least}, is given as follows: The parameter space is $\Xi = \bigcup\nolimits_{m\in\bN}\Xi_m$ with 
$$\TNm = \Big\{\xi=(x_1,\dots,x_m,y_1,\dots,y_{m-1})\in\bR^{2m-1}\;\Big|\;x_1\leq y_1\leq x_2\leq y_2\leq \cdots \leq y_{m-1}\leq x_m\Big\}.$$
and $\Pi(\mu,\xi)\in\sPRC$ is defined, for $\mu\in\sPR, \xi=(x_1,\dots,x_m,y_1,\dots,y_{m-1})\in\Xi_m$, as  
\begin{equation}\label{eq:pi-def}
    \Pi(\mu,\xi) = \sum\nolimits_{i=1}^m(F_{\mu}(y_i)-F_{\mu}(y_{i-1}))\cdot\delta_{x_i},
\end{equation}
with the convention $y_0=-\infty, y_m=+\infty$ and $F_{\mu}(-\infty)=0, F_{\mu}(+\infty)=1$, see Figure~\ref{fig:pi}. In particular, $\Pi(\mu,\xi)$ depends on $\mu$ only by a finite number of evaluations $F_{\mu}$, which makes it well-suited for DDP algorithms under (A2), as the CDF of $\mu=\cT_s\eta$ can be evaluated in this case by \eqref{eq:Teta-cdf}. In particular, formula \eqref{eq:Teta-cdf} directly yields the following: 

\begin{theorem}\label{thm:numericalddp}
    Let $m',m\in\bN$ and $\eta = [\eta_{\sn}:\sn\in\cS]\in\sP(\bR)^{\cS}$ with $\eta_{\sn} = \sum\nolimits_{j=1}^{m'}q_{\sn j}\delta_{z_{\sn j}}$. Let $\xi = (x_1,\dots,x_n,y_1,\dots,y_{m-1})\in\TNm$ and $s\in\cS$. Then $\PN(\cT_s(\eta),\xi) = \sum_{i=1}^m o_i\delta_{x_i}$ with
    $$o_i = \sum\nolimits_{(a,\sn)\in\cA\times\cS}\sum\nolimits_{j=1}^{m'}\pi(a|s)\cdot p(\sn|s,a)\cdot q_{\sn j}\cdot \left[F_{sa\sn}(y_i-\gamma z_{\sn j})-F_{sa\sn}(y_{i-1}-\gamma z_{\sn j})\right].$$
    That is, $\Pi(\cT_s(\eta),\xi)$ can be calculated under (A2)+(A3) in time of order $\cO\left(m'\cdot m\right)$.
\end{theorem}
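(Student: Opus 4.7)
The plan is to compute $\Pi(\cT_s\eta,\xi)$ directly from its definition in \eqref{eq:pi-def} and identify the coefficients $o_i$ by evaluating the CDF $F_{\cT_s\eta}$ at the knots $y_{i-1}, y_i$. By \eqref{eq:pi-def} we have $\Pi(\cT_s\eta,\xi) = \sum_{i=1}^m (F_{\cT_s\eta}(y_i) - F_{\cT_s\eta}(y_{i-1}))\delta_{x_i}$ with the conventions $y_0 = -\infty$, $y_m = +\infty$ and $F_{\cT_s\eta}(-\infty)=0$, $F_{\cT_s\eta}(+\infty)=1$. So the identification $o_i = F_{\cT_s\eta}(y_i) - F_{\cT_s\eta}(y_{i-1})$ reduces the claim to an explicit evaluation of $F_{\cT_s\eta}$ at these finitely many points.

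Next I would substitute the finitely supported form of each component $\eta_{\sn} = \sum_{j=1}^{m'} q_{\sn j} \delta_{z_{\sn j}}$ into formula \eqref{eq:Teta-cdf}, which yields
\begin{equation*}
    F_{\cT_s\eta}(x) = \sum_{(a,\sn)\in\cA\times\cS}\sum_{j=1}^{m'} \pi(a|s)\,p(\sn|s,a)\,q_{\sn j}\, F_{sa\sn}(x - \gamma z_{\sn j}).
\end{equation*}
Plugging in $x = y_i$ and $x = y_{i-1}$ and subtracting, the factors $\pi(a|s)\, p(\sn|s,a)\, q_{\sn j}$ factor out by linearity and the bracket $[F_{sa\sn}(y_i - \gamma z_{\sn j}) - F_{sa\sn}(y_{i-1} - \gamma z_{\sn j})]$ remains, giving the stated formula for $o_i$. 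The boundary conventions $F_{sa\sn}(\pm\infty) \in \{0,1\}$ make the formula also hold for $i=1$ and $i=m$.

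For the complexity claim, under (A2) each CDF evaluation $F_{sa\sn}(\cdot)$ costs $\cO(1)$, and under (A3) every probability $\pi(a|s)$, $p(\sn|s,a)$ is accessible in $\cO(1)$. Computing a single $o_i$ therefore requires $\cO(m' \cdot |\cA| \cdot |\cS|)$ operations, and summing over $i=1,\dots,m$ yields a total cost of $\cO(m' \cdot m \cdot |\cA| \cdot |\cS|)$; treating $|\cS|$ and $|\cA|$ as constants, as elsewhere in the paper, this is $\cO(m' \cdot m)$. Caching shared values $F_{sa\sn}(y_i - \gamma z_{\sn j})$ across consecutive $i$ would halve the number of CDF evaluations but is immaterial for the stated asymptotic bound.

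Given the explicit formula \eqref{eq:Teta-cdf} and the definition of $\Pi$, the argument is essentially a direct computation; the only mild bookkeeping subtlety is tracking the boundary conventions at $y_0 = -\infty$ and $y_m = +\infty$, so I do not anticipate any serious obstacle.
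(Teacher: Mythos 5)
Your proof is correct and matches the paper's (implicit) argument: the paper presents Theorem~\ref{thm:numericalddp} as a direct consequence of formula \eqref{eq:Teta-cdf} combined with the definition \eqref{eq:pi-def} of the parameterised projection, which is exactly the substitution-and-subtraction you carried out. The complexity accounting, including treating $|\cS|$ and $|\cA|$ as constants, also agrees with the conventions used elsewhere in the paper.
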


\begin{figure}[t]
    \centering
    \includegraphics[width=0.99\textwidth]{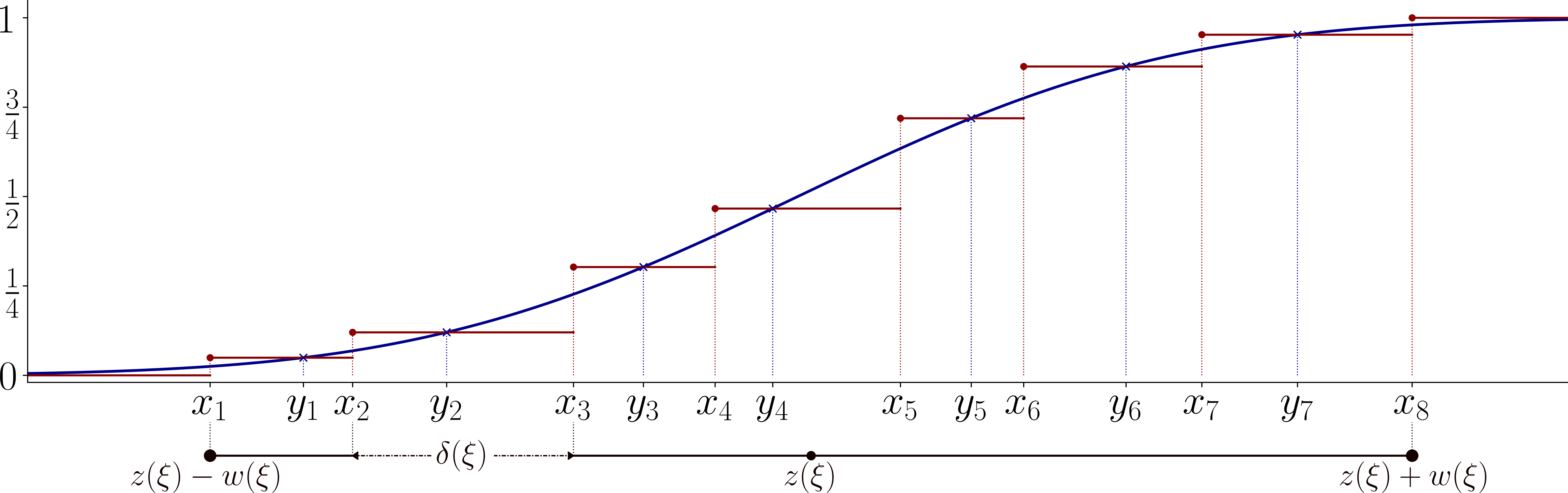} 
    \caption{CDFs of $\mu$ (blue) and $\Pi(\mu,\xi)$ (red) with $\xi=(x_1,\dots,x_8,y_1,\dots,y_7)\in\Xi_8$. The support of $\Pi(\mu,\xi)$ is the compact interval $[z(\xi)-w(\xi),z(\xi)+w(\xi)] = [x_1,x_8]$ and $\delta(\xi) = \max_{2\leq i\leq 8}|x_i-x_{i-1}|$.}
    \label{fig:pi}
\end{figure}

Let $\eta^{(0)}\in\sPRC^{\cS}$ be an initial approximation. In order to calculate approximations $\eta^{(n)}, n\in\bN$ using $\Pi$ one has to decide for a suitable \emph{parameter algorithm} 
$$A:\sPRS\times\cS\times\bN\;\longrightarrow\;\Xi = \cup_{m\in\bN}\;\Xi_m.$$
In order to control the space-time complexity, it is reasonable to choose a \emph{size function} $M:\bN\to\bN$ and to consider only parameter algorithms $A$ such that 
\begin{equation}\label{eq:size-req}
    A(\cdot,\cdot,k)\in\TN_{M(k)}\;\;\text{for all}\;\;k\in\bN.
\end{equation}
Given such a parameter algorithm, the iterative procedure to calculate $\eta^{(n)}, n\in\bN$ is thus
\begin{enumerate}
    \item Calculate $\bxi^{(n)} = [\bxi^{(n)}_s:s\in\cS]$ with $\bxi^{(n)}_s = A(\eta^{(n-1)},s,n)\in\Xi_{M(n)}$,
    \item Calculate $\eta^{(n)} = [\eta^{(n)}_s:s\in\cS]$ with $\eta^{(n)}_s = \Pi(\cT_s\eta^{(n-1)},\xi^{(n)}_s)$ as in Theorem~\ref{thm:numericalddp}.
\end{enumerate}

For practical purposes, $A$ has to be such that $A(\eta^{(n-1)},s,n)$ can be evaluated under Assumptions (A2)+(A3) within a controllable space-time complexity. In Remark~\ref{rem:minimisation-wb} we discuss the topic of \emph{optimal parameter algorithms} and demonstrate that these are, in general, \emph{not easily available for practical purposes}. In the following subsections, we consider three different parameter algorithms $A$ that can be applied under Assumptions (A2)+(A3). 
\begin{itemize}
    \item[(i)] In Section~\ref{sec:plain} we introduce plain parameter algorithms (PPA). By analysing the trade-off between approximation quality and time complexity in detail, we collect additional evidence that constructing DDP algorithms with projection errors decaying of the order $\cO(\theta^k)$ with $\theta\in[\gamma^c,1)$ seems reasonable.
    \item[(ii)] In Section~\ref{sec:plain-adaptive} we introduce an adaptive version of the plain parameter algorithms (ADP), which is derived by bounding quantiles of convolutions. A partial analysis leads to a useful \emph{black-box algorithm}, which is suggested for application in case (A1.P$\alpha$) holds for large $\alpha$.
    \item[(iii)] In Section~\ref{sec:quantile} we modify the algorithm of Section~\ref{sec:plain-adaptive} further by considering \emph{spline interpolations of inverse quantile functions} (QSP). The emerging algorithm can be seen as an approximation of the QDP-algorithm applicable under assumptions (A2)+(A3) and is a second \emph{black-box} algorithm, that can be used also in presence of heavy-tailed reward distributions, see assumption (A1.P$\alpha$).
\end{itemize}

The mathematical analysis of Section~\ref{sec:plain} relies on bounding projection errors, we present the basic tool now. Let $\xi = (x_1,\dots,x_m,y_1,\dots,y_{m-1})\in\TN$. Key-characteristics of $\xi$ are 
\begin{align*}
    \delta(\xi)=\max_{2\leq i\leq m}|x_i-x_{i-1}|,\;\;z(\xi)=\frac{x_m+x_1}{2},\;\;w(\xi) = \frac{x_m-x_1}{2}.
\end{align*}
So, for any $\mu\in\sPR$ we have that $\Pi(\mu,\xi)$ is finitely supported on $m$ points that lie in the compact interval $[z(\xi)-w(\xi),z(\xi)+w(\xi)]$, see Figure~\ref{fig:pi}. As we deal with distributions that can have unbounded support and the analysis metrics we consider (also) depend on the tail behaviour of distributions, we need to introduce the following terms: for $\mu=\cL(X)\in\sPR, \xi\in\Xi$ and $\beta\in(0,\infty)$ define 
\begin{align*}
    \tailwb(\mu,\xi) &= \int_0^{\infty}x^{\beta-1}\bP[|X-z(\xi)|>w(\xi)+x]\;\md x,\\
    \taillb(\mu,\xi) &= \int_0^{\infty}\bP[|X-z(\xi)|>w(\xi)+x]^{\beta}\;\md x.
\end{align*}

The following is proved in Appendix \ref{app:bounds}: 

\begin{lemma}\label{lemma:basic-bound}
    Let $\mu\in\sPR, \xi\in\Xi$. Then 
    \begin{itemize}
        \item[(a)] $\wb(\Pi(\mu,\xi),\mu) \leq 4\cdot \max\left\{\delta(\xi)^{\frac{\beta}{1\vee\beta}},\;\tailwb(\mu,\xi)^{\frac{1}{1\vee\beta}}\right\}$ for all $\beta\in(0,\infty)$,
        \item[(b)] $\lb(\Pi(\mu,\xi),\mu) \leq 4\cdot \max\left\{\delta(\xi)^{\frac{1}{\beta}},\;\taillb(\mu,\xi)^{\frac{1}{\beta}}\right\}$ for all $\beta\in[1,\infty)$.
    \end{itemize}
\end{lemma}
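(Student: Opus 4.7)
The plan is to construct the obvious coupling of $\mu$ and $\Pi(\mu,\xi)$ induced by the partition $(y_{i-1}, y_i]$ and then bound the $\beta$-th moment of the coupling distance by splitting into a ``grid'' region (where the mass is coupled to a nearby $x_i$) and a ``tail'' region (where the mass is pinned to $x_1$ or $x_m$). Part (b) will then be obtained by a parallel direct computation on CDFs.

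For (a), let $X \sim \mu$ and set $Y = x_i$ whenever $X \in (y_{i-1}, y_i]$ (with the conventions $y_0 = -\infty$, $y_m = +\infty$). Then $Y \sim \Pi(\mu,\xi)$ by definition \eqref{eq:pi-def}. The interleaving $y_{i-1} \le x_i \le y_i$ combined with $y_i - y_{i-1} \le x_{i+1} - x_{i-1}$ guarantees that on $\{X \in [x_1, x_m]\}$ one has $|X - Y| \le 2\delta(\xi)$ deterministically, while on $\{|X - z(\xi)| > w(\xi)\}$ one checks directly that $Y \in \{x_1, x_m\}$ and $|X - Y| = |X - z(\xi)| - w(\xi)$. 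Combining the two regimes and applying the layer-cake identity to $V = (|X - z(\xi)| - w(\xi))_+$ gives
\begin{equation*}
    \bE[|X - Y|^\beta] \;\le\; (2\delta(\xi))^\beta \;+\; \beta \cdot \tailwb(\mu, \xi).
\end{equation*}
Since the comonotonic coupling is optimal for both concave and convex moment costs on $\bR$, the left-hand side upper-bounds $\wb(\Pi(\mu,\xi),\mu)$ for $\beta \in (0,1)$ and its $\beta$-th power for $\beta \ge 1$. The elementary estimates $(a+b)^{1/\beta} \le a^{1/\beta} + b^{1/\beta}$, $\beta^{1/\beta} \le e^{1/e} < 2$, and $a + b \le 2\max(a,b)$ then deliver the constant $4$ in both ranges of $\beta$.

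For (b) I would work directly with CDFs. The function $F_{\Pi(\mu,\xi)}$ is piecewise constant, equal to $0$ on $(-\infty, x_1)$, to $F_\mu(y_i)$ on $[x_i, x_{i+1})$ and to $1$ on $[x_m, \infty)$, while the monotone $F_\mu$ coincides with it at each $y_i \in [x_i, x_{i+1}]$. Consequently on $[x_i, y_i]$ one has $|F_\mu(x) - F_\mu(y_i)| \le \mu((y_{i-1}, y_i])$ and on $[y_i, x_{i+1}]$ the same quantity is bounded by $\mu((y_i, y_{i+1}])$. Writing $p_i := \mu((y_{i-1}, y_i])$, integrating over the grid yields an interior contribution bounded by $\delta(\xi) \sum_{i=1}^m (p_i^\beta + p_{i+1}^\beta) \le 2\delta(\xi)$, where the decisive step uses $\sum p_i^\beta \le \sum p_i = 1$, valid because $\beta \ge 1$. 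The two tail pieces $\int_{-\infty}^{x_1} F_\mu(x)^\beta \, \dd x$ and $\int_{x_m}^{\infty} (1 - F_\mu(x))^\beta \, \dd x$ are each converted by the substitution $v = |z(\xi) - x| - w(\xi)$ and the bound $F_\mu(x) \le \bP[|X - z(\xi)| \ge z(\xi) - x]$ (and its mirror on the right) into $\taillb(\mu,\xi)$. Altogether $\lb^\beta \le 2\delta(\xi) + 2\taillb(\mu,\xi) \le 4 \max(\delta(\xi), \taillb(\mu,\xi))$, and the claim follows upon taking $\beta$-th roots.

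The main obstacle is not the individual estimates, which are elementary, but the $\sum p_i^\beta \le 1$ step in (b): one must be careful to bound the per-slab CDF deviation by the mass of a \emph{single} interval of the partition rather than by a sum of adjacent intervals, since otherwise the interior contribution would scale with $m$ and the bound would become useless as the number of support points grows. A secondary subtlety in (a) is the unified handling of the boundary indices $i = 1$ and $i = m$, for which the deterministic ``grid'' bound $|X-Y| \le \delta(\xi)$ only applies on the portion of the event $\{X \in (y_{i-1}, y_i]\}$ that still lies inside $[x_1, x_m]$; everything outside that interval must be absorbed into the $\tailwb$ term.
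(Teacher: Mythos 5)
Your overall route is essentially the same as the paper's: represent $\Pi(\mu,\xi)$ as the push-forward of $\mu$ under the step map $\pr(\cdot\mid\xi)$ sending $x \in (y_{i-1},y_i]$ to $x_i$, split into the grid region $[x_1,x_m]$ and the two tails, and control the grid contribution by $\delta(\xi)$ and the tail contribution by $\mathrm{tail}_{\wb}$, resp.\ $\mathrm{tail}_{\lb}$. The interior/tail decomposition in (b) by half-intervals $[x_i,y_i]\cup[y_i,x_{i+1}]$ with masses $p_i, p_{i+1}$ is a mild variant of the paper's $[x_{i-1},x_i]$ decomposition, and both land on the same $\sum p_i^\beta \le 1$ trick. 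The constants work out.

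One justification in part (a) is, however, not correct as stated: the comonotonic coupling is \emph{not} optimal for concave costs $|x-y|^\beta$, $\beta\in(0,1)$, on $\bR$ (e.g.\ with $\mu = \tfrac13(\delta_0+\delta_1+\delta_2)$, $\nu=\tfrac13(\delta_1+\delta_2+\delta_3)$ the plan fixing $1\mapsto 1$, $2\mapsto 2$ and sending $0\mapsto 3$ has cost $\tfrac13 3^\beta < 1$). So the claim ``any coupling upper-bounds $\wb$'' fails for $\beta<1$ under the paper's definition of $\wb$ as the comonotonic cost. What actually makes your argument work is that the coupling you build --- $(X, \pr(X\mid\xi))$ with $X\sim\mu$ --- \emph{is} the comonotonic one, since $\pr(\cdot\mid\xi)$ is non-decreasing and hence $\pr(F_\mu^{-1}(U)\mid\xi) = F^{-1}_{\Pi(\mu,\xi)}(U)$; this is exactly Lemma~\ref{lemma:basic}(c) in the paper. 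That gives equality $\bE[|X-Y|^\beta]^{1/(1\vee\beta)} = \wb(\Pi(\mu,\xi),\mu)$, which is what you need. You should replace the optimality appeal with this direct observation; everything downstream (the layer-cake identity, the subadditivity of $t\mapsto t^{1/\beta}$ for $\beta\ge 1$, the bound $\beta^{1/\beta}\le e^{1/e}<2$) is fine.
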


\begin{remark}[Optimal Parameter Algorithms]\label{rem:minimisation-wb}
    Let $\dd$ be a $c$-homogeneous, regular, convex analysis metric and consider only parameter algorithms that satisfy \eqref{eq:size-req}. Choosing a parameter algorithm $A$ with the property     
    \begin{equation*}
        A(\eta,s,k)\;\in\;\argmin\nolimits_{\xi\in\TN_{m}}\dd\left(\Pi(\mu,\xi),\mu\right)\;\text{with}\;(\mu,m)=(\cT_s\eta,M(k))
    \end{equation*}
    for all $\eta, s, k$, would result in calculating approximations with minimal projections errors with respect to $\dd$. The task to minimise $\dd\left(\Pi(\mu,\xi),\mu\right)$ over $\xi\in\Xi_m$ for given $\mu\in\sP(\bR)$ can, in general, only be solved by approximation methods: for example, consider $\dd=w_2$. For $\xi=(x_1,\dots,x_m,y_1,\dots,y_{m-1})\in\Xi_m$ it is  
    $$w_2(\Pi(\mu,\xi),\mu) = \sum\nolimits_{i=1}^m\int_{y_{i-1}}^{y_i}(x-x_i)^2\md \mu(x).$$
    Minimising this expression in $\xi$ is the topic of the groundbreaking paper \cite{lloyd1982least}, which has led to what is now known as \emph{Lloyd's algorithm}. This had motivated various other algorithms for related minimisation problems (involving distributions on higher dimensions, $\mu\in\sP(\bR^k)$) such as (variants of) \emph{$m$-means clustering algorithms} or algorithms to construct \emph{optimal centroidal Voronoi tessellations}. We refer to \cite{du1999centroidal} and \cite{kloeckner2012approximation}, where these types of problems are discussed also for other metrics than $\dd=w_2$ and in higher dimensions. Regarding algorithms, Lloyd's algorithm (and its variants) may depend on $\mu$ in a more complex way than on finitely many evaluations of the cdf $F_{\mu}$. Hence, using these in our situation for $\mu=\cT_s\eta$ under (A2), would need to apply further approximation methods. Considering the trade-off between space-time complexity and approximation quality, we leave it for future research to investigate if \emph{parameter algorithms} of such high complexity are of any benefit for the DDP task at hand. 

    Finally, note that the minimisation problem $\min_{\nu}w_1(\nu,\mu)$ in which the minimum runs over all $\nu$ of the form $\nu=\frac{1}{m}\sum_{i=1}^m\delta_{x_i}$ has been solved explicitly by letting $x_i=F_{\mu}^{-1}((2i-1)/2m)$, see Proposition 5.15 \cite{bdr2023}, which was used to motivate the QDP algorithm (Example~\ref{ex:qdp}) in that reference.  
\end{remark}

\subsection{Plain parameter algorithm (PPA)}\label{sec:plain}

For each $m\in\bN, m\geq 2, w>0, z\in\bR$ let 
\begin{align*}
    \xilin(m,w,z)&=(x_1,\dots,x_m,y_1,\dots,y_{m-1})\in\TNm\;\text{with}\\
    x_i &= z + \frac{2i-1-m}{m-1}\cdot w,\\
    y_i &= z + \frac{2i-m}{m-1}\cdot w,
\end{align*}
so the interlaced points $(x_1,y_1,x_2,y_2,\dots,y_{m-1},x_m)$ form an evenly spaced interpolation of the compact interval $[z-w,z+w]$. Further, with $\xi=\xilin(m,w,z)$ it holds that $(z(\xi),w(\xi),\delta(\xi))=(z, w, \frac{2w}{m-1})$. In case $m=1$ let $\xilin(1,w,z) = (z)\in\Xi_1$.
The plain parameter algorithm is defined as follows:

\begin{definition}[PPA]\label{def:plain}
    The \emph{plain parameter algorithm} (PPA) has hyper-parameter (functions) $M:\bN\to\bN,\;W:\bN\to(0,\infty),\;z\in\bR$
    with $M, W$ non-decreasing and is defined as $A(\eta,s,k) := \xilin(M(k),W(k),z)$.
\end{definition}

In the following, let $A$ be as in Definition~\ref{def:plain} and $\eta^{(0)}\in\sPRCS$ with $\eta^{(0)}_s = \delta_z$ for all $s\in\cS$. Set $M(0):=1$. The time to calculate $\eta^{(n)}$, denoted by $\timen$, is dominated by the applications of $\Pi$, thus   
\begin{equation*}
    \timen = \cO\left(\sum\nolimits_{k=1}^nM(k-1)M(k)\right).
\end{equation*}
Our goal is to analyse the trade-off between time-complexity and approximation errors with respect to metrics $\dd=\wb,\beta\in(0,\infty)$ and $\dd=\lb,\beta\in[1,\infty)$. For that, let 
$$\delta(k) = \delta(\xilin(M(k),W(k),z)) = \frac{2W(k)}{M(k)-1}.$$
In case (A1.B), that is reward distributions have uniformly bounded support $[\rmin,\rmax]$, the condition
\begin{equation}\label{eq:inclusion}
    \left[\frac{\rmin}{1-\gamma},\frac{\rmax}{1-\gamma}\right] \subseteq \left[z-W(k),z+W(k)\right]
\end{equation}
implies that all components of $\eta^*, \eta^{(n)}, n\in\bN$ are supported on $[z-W(k),z+W(k)]$ and Lemma~\ref{lemma:basic-bound} yields $\er(\wb,k) \leq 4\delta(k)^{\frac{\beta}{1\vee\beta}}$ resp. $\er(\lb,k)\leq 4\delta(k)^{\frac{1}{\beta}}$. A natural choice for $z, W$ to ensure \eqref{eq:inclusion} for all $k$ is $z = \frac{\rmax+\rmin}{2(1-\gamma)}$ and $W(k)\equiv \frac{\rmax-\rmin}{2(1-\gamma)}$ (constant). 

In presence of unbounded reward distributions, an analysis of the projection errors via Lemma~\ref{lemma:basic-bound} is still possible, but the tail bound terms need more care: in the following, $\cO$-notation is with respect to $k\to\infty$; the constant may depend on all quantities except $k$ and concrete $\mathrm{O}$-constants are presented in Appendix \ref{app:bounds}, see Theorem~\ref{thm:bounds}. 

\begin{theorem}\label{thm:bounds-new}
    \begin{itemize}
        \item[(a)] It is $\er(\wb,k) = \cO\left(\max\left\{\delta(k)^{\frac{\beta}{1\vee\beta}},\;T(k)\right\}\right)$
    with 
    \begin{equation*}
        T(k) = \begin{cases}
            W(k)^{-\frac{(\alpha-\beta)}{1\vee\beta}},&\text{if $(A1.P\alpha)$ holds with $\alpha>\beta$},\\
    \exp\left(-\frac{\lambda(1-\gamma)}{1\vee\beta}W(k)\right),&\text{if $(A1.E\lambda)$ holds with $\lambda>0$},\\
    0,&\text{if $(A1.B)$ and \eqref{eq:inclusion} hold}.
        \end{cases}
    \end{equation*}
    \item[(b)] It is $\er(\lb,k) = \cO\left(\max\left\{\delta(k)^{\frac{1}{\beta}},\;T(k)\right\}\right)$
    with 
    \begin{equation*}
        T(k) = \begin{cases}
            W(k)^{-(\alpha-\frac{1}{\beta})},&\text{if $(A1.P\alpha)$ holds with $\alpha>1/\beta$},\\
    \exp\left(-\lambda(1-\gamma)W(k)\right),&\text{if $(A1.E\lambda)$ holds with $\lambda>0$},\\
    0,&\text{if $(A1.B)$ and \eqref{eq:inclusion} hold}.
        \end{cases}
    \end{equation*}
    \end{itemize}
\end{theorem}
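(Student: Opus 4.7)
The plan is to combine Lemma~\ref{lemma:basic-bound} with uniform tail bounds on $\cT_s\eta^{(k-1)}$ derived inductively from the moment assumption on the rewards. Since $\xi^{(k)}_s = \xilin(M(k),W(k),z)$ does not depend on $s$ by Definition~\ref{def:plain}, we have $z(\xi^{(k)}_s)=z$, $w(\xi^{(k)}_s)=W(k)$ and $\delta(\xi^{(k)}_s)=\delta(k)$. Applying Lemma~\ref{lemma:basic-bound} to $\mu=\cT_s\eta^{(k-1)}$ and taking the maximum over $s\in\cS$ immediately produces the $\delta(k)^{\beta/(1\vee\beta)}$, respectively $\delta(k)^{1/\beta}$, contribution in the stated bounds. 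What remains is to show that the tail terms $\tailwb(\cT_s\eta^{(k-1)},\xi^{(k)}_s)^{1/(1\vee\beta)}$ and $\taillb(\cT_s\eta^{(k-1)},\xi^{(k)}_s)^{1/\beta}$ are of the claimed order $T(k)$.

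The first step is a simple induction on $k$: every component of $\eta^{(k)}$ is supported in $[z-W(k),z+W(k)]$. The base case $\eta^{(0)}_s=\delta_z$ is trivial, and the inductive step uses that, by construction \eqref{eq:pi-def}, $\Pi(\mu,\xilin(M(k),W(k),z))$ is supported on $\{x_1,\dots,x_{M(k)}\}\subseteq[z-W(k),z+W(k)]$ for every $\mu$. Consequently, writing $Y_s\sim\cT_s\eta^{(k-1)}$ in the form $Y_s = R^{(0)}+\gamma G_{S^{(1)}}$ conditionally on $S^{(0)}=s$, with $G_{\sn}\sim\eta^{(k-1)}_{\sn}$ and therefore $|G_{\sn}-z|\leq W(k-1)$ almost surely, I decompose
\begin{equation*}
    Y_s - z \;=\; \bigl(R^{(0)}-(1-\gamma)z\bigr) \;+\; \gamma\bigl(G_{S^{(1)}}-z\bigr).
\end{equation*}
Using $|G_{S^{(1)}}-z|\leq W(k-1)$ together with $W(k-1)\leq W(k)$ (since $W$ is non-decreasing), so that $W(k)-\gamma W(k-1)\geq (1-\gamma)W(k)$, the triangle inequality yields the uniform tail bound
\begin{equation*}
    \bP[|Y_s-z|>W(k)+x]\;\leq\;\max_{s'a\sn}\bP\bigl[\bigl|R_{s'a\sn}-(1-\gamma)z\bigr|>(1-\gamma)W(k)+x\bigr]
\end{equation*}
valid for all $x\geq 0$ and all $s\in\cS$.

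Plugging this into the definitions of $\tailwb$ and $\taillb$ reduces the analysis to integrating a single reward-type tail $P(t):=\max_{sa\sn}\bP[|R_{sa\sn}-(1-\gamma)z|>t]$. Under $(A1.P\alpha)$, Markov's inequality yields $P(t)\leq C t^{-\alpha}$ for a constant $C$ depending only on $\alpha$, $\gamma$, $z$ and the $\alpha$-moments of the rewards. After the substitution $u=x/((1-\gamma)W(k))$ the integral $\int_0^\infty x^{\beta-1}P((1-\gamma)W(k)+x)\,\dd x$ reduces to a Beta-type integral, finite precisely when $\alpha>\beta$, and evaluates to $\cO(W(k)^{\beta-\alpha})$; similarly $\int_0^\infty P((1-\gamma)W(k)+x)^{\beta}\dd x = \cO(W(k)^{1-\alpha\beta})$ when $\alpha\beta>1$, i.e.~$\alpha>1/\beta$. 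Taking the $\frac{1}{1\vee\beta}$-th, respectively $\frac{1}{\beta}$-th, roots gives the polynomial decay rates $W(k)^{-(\alpha-\beta)/(1\vee\beta)}$ and $W(k)^{-(\alpha-1/\beta)}$ claimed in $T(k)$. Under $(A1.E\lambda)$, Chernoff's inequality yields $P(t)\leq C e^{-\lambda t}$; factoring out $e^{-\lambda(1-\gamma)W(k)}$ leaves a $k$-independent Gamma integral, and taking the appropriate power yields the exponential rate $e^{-\lambda(1-\gamma)W(k)/(1\vee\beta)}$, respectively $e^{-\lambda(1-\gamma)W(k)}$. Under $(A1.B)$ together with \eqref{eq:inclusion}, the inclusion $\bigl[\tfrac{\rmin}{1-\gamma},\tfrac{\rmax}{1-\gamma}\bigr]\subseteq[z-W(k),z+W(k)]$ translates into $|R_{sa\sn}-(1-\gamma)z|\leq(1-\gamma)W(k)$ almost surely, so $P((1-\gamma)W(k)+x)=0$ for $x>0$ and both tail integrals vanish, giving $T(k)=0$.

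The main bookkeeping obstacle is keeping the implicit $\cO$-constants independent of $k$ and of $s$; the uniform max over $s'a\sn$ above is what enables this, and the convergence thresholds $\alpha>\beta$ (for part~(a)) and $\alpha>1/\beta$ (for part~(b)) appear exactly at the point where the corresponding Beta integral ceases to converge, which is consistent with the hypotheses of the theorem.
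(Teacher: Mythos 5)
Your proof is correct and follows essentially the same route as the paper's: the paper's Appendix~B.3 establishes Theorem~\ref{thm:bounds-new} via the chain Lemma~\ref{lemma:basic-bound} (basic projection bound), Lemma~\ref{lemma:Tetatail} (which is exactly your triangle-inequality decomposition $Y_s-z = (R^{(0)}-(1-\gamma)z) + \gamma(G_{S^{(1)}}-z)$ together with the inductive support containment of $\eta^{(k-1)}$ in $[z-W(k-1),z+W(k-1)]$), and Lemma~\ref{lemma:tail_bound} (the Markov/Chernoff tail integrals). You simply fold these into one argument rather than three lemmas, but the decomposition, the induction on the iterate's support, and the Beta-/Gamma-integral evaluations are all identical to the paper's.
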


In the following, we only consider $\dd = \wb$ with some fixed $\beta\in(0,\infty)$. Let
$$\neps = \min\{n\in\bN\;|\;\owb(\eta^{(n)},\eta^*)\leq\varepsilon\},\;\;\varepsilon>0.$$
The following result investigates the asymptotic behaviour of $\timeeps$ as $\varepsilon\to 0$ and leads to suggest that letting the size function $M$ increase exponentially (with rate $1/\theta, \theta\in(\gamma^c,1)$, case (b)), is superior to letting it grow polynomial (case (a)). We write $f(n) = \Theta(g(n))$ if there are constants $C_1, C_2>0$ such that $C_1 g(n)\leq f(n)\leq C_2 g(n)$ for all $n\in\bN$. 

\begin{theorem}\label{thm:plain-analysis}
    Assume $(A1.P\alpha)$ holds with $\alpha>\beta$ and let $h=\frac{1\vee\beta}{\beta}\frac{\alpha}{\alpha-\beta}>1$.
    \begin{itemize}
        \item[(a)] Let $r>0$. Then 
        \begin{align*}
        \begin{rcases}\begin{dcases}
            M(n) &= \Theta\left(n^{hr}\right),\\
            W(n) &= \Theta\left(n^{\frac{\beta}{\alpha}hr}\right)
        \end{dcases}\end{rcases}
        &\Longrightarrow\;\begin{rcases}\begin{dcases}
            \owb(\eta^{(n)},\eta^*) &= \cO\left(n^{-r}\right)\;\text{as $n\to\infty$},\\
            \neps &= \cO\left((1/\varepsilon)^{1/r}\right)\;\text{as $\varepsilon\to 0$},\\
            \timen &= \cO\left(n^{2hr+1}\right)\;\text{as $n\to\infty$},\\
            \timeeps &= \cO\left((1/\varepsilon)^{2h+\frac{1}{r}}\right)\;\text{as $\varepsilon\to 0$}.
        \end{dcases}\end{rcases}\end{align*}
        \item[(b)] Let $\theta\in(\gamma^c,1)$ with $c=\min\{1,\beta\}$. Then 
        \begin{align*}
        \begin{rcases}\begin{dcases}
            M(n) &= \Theta\left((1/\theta)^{hn}\right),\\
            W(n) &= \Theta\left((1/\theta)^{\frac{\beta}{\alpha}hn}\right)
        \end{dcases}\end{rcases}
        &\Longrightarrow\;\begin{rcases}\begin{dcases}
            \owb(\eta^{(n)},\eta^*) &= \cO\left(\theta^n\right)\;\text{as $n\to\infty$},\\
            \neps &\leq \frac{\log(\epsilon)}{\log(\theta)} +\cO(1)\;\text{as $\varepsilon\to 0$},\\
            \timen &= \cO\left((1/\theta)^{2hn}\right)\;\text{as $n\to\infty$},\\
            \timeeps &= \cO\left((1/\varepsilon)^{2h}\right)\;\text{as $\varepsilon\to 0$}.
        \end{dcases}\end{rcases}
    \end{align*}
    \end{itemize}
\end{theorem}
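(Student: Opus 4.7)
The plan is to combine the projection-error bound from Theorem~\ref{thm:bounds-new}(a) with Lemma~\ref{lemma:ape-bound} and Proposition~\ref{prop:generalbound} to control $\owb(\eta^{(n)},\eta^*)$, then invert this bound to read off $\neps$ and substitute into the runtime formula $\timen=\cO(\sum_{k=1}^n M(k-1)M(k))$ stated after Definition~\ref{def:plain} to obtain $\timeeps$. Since $\eta^{(0)}_s=\delta_z$ and $(A1.P\alpha)$ with $\alpha>\beta$ forces $\eta^*\in\sPaR^{\cS}\subseteq\sPbR^{\cS}$, the initial distance $\owb(\eta^{(0)},\eta^*)$ is finite and the term $\gamma^{cn}\owb(\eta^{(0)},\eta^*)$ appearing in Proposition~\ref{prop:generalbound} decays exponentially, so it may be absorbed into the dominant contribution throughout.

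The algebraic heart of the proof is verifying that $h=\tfrac{1\vee\beta}{\beta}\cdot\tfrac{\alpha}{\alpha-\beta}$ is exactly the exponent that balances the two competing terms in
$$\er(\wb,k) = \cO\!\left(\max\!\left\{\delta(k)^{\beta/(1\vee\beta)},\; W(k)^{-(\alpha-\beta)/(1\vee\beta)}\right\}\right),$$
where $\delta(k)=2W(k)/(M(k)-1)$. A short calculation shows that in case (a) the choices $M(n)=\Theta(n^{hr})$ and $W(n)=\Theta(n^{(\beta/\alpha)hr})$ force both $\delta(n)^{\beta/(1\vee\beta)}$ and $W(n)^{-(\alpha-\beta)/(1\vee\beta)}$ to be $\Theta(n^{-r})$; the analogous manipulation in case (b) forces both to be $\Theta(\theta^n)$. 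Feeding the resulting bounds $\er(\wb,k)=\cO(k^{-r})$, respectively $\cO(\theta^k)$, into the second and third implications of Lemma~\ref{lemma:ape-bound} (where the third implication requires $\theta>\gamma^c$, which is assumed, so as to select the $\theta>\gamma^c$ branch), we obtain $\eac(n,\wb)=\cO(n^{-r})$ in case (a) and $\eac(n,\wb)=\cO(\theta^n)$ in case (b), and hence the claimed approximation-error estimates via Proposition~\ref{prop:generalbound}.

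The remaining statements are essentially bookkeeping. Inverting $n^{-r}\le\varepsilon$ and $\theta^n\le\varepsilon$ immediately produces the stated expressions for $\neps$. The runtime reduces to $\timen=\cO(\sum_{k=1}^n k^{2hr})=\cO(n^{2hr+1})$ in case (a) by the standard power-sum estimate and to the geometric sum $\cO((1/\theta)^{2hn})$ in case (b); substituting the corresponding value of $\neps$ and using the identity $(1/\theta)^{2h\log\varepsilon/\log\theta}=\varepsilon^{-2h}$ in case (b) yields the stated bounds for $\timeeps$. I expect the only real obstacle to be the careful algebraic manipulation that exploits the defining identity of $h$ to collapse both arguments of the max into a common rate; once this is observed, everything else is routine.
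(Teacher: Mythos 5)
Your proposal is correct and follows essentially the same route as the paper's proof: apply Theorem~\ref{thm:bounds-new}(a) to bound $\er(\wb,k)$, observe that the choice of $h$ balances the two terms in the max to give $\cO(k^{-r})$ resp.~$\cO(\theta^k)$, then pass through Lemma~\ref{lemma:ape-bound} and Proposition~\ref{prop:generalbound} to get the error bound, and finally invert and plug into the power-sum/geometric-sum expression for $\timen$. You merely spell out more explicitly the balancing algebra and the absorption of the $\gamma^{cn}\owb(\eta^{(0)},\eta^*)$ term, which the paper leaves implicit.
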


\begin{proof}
    By Theorem~\ref{thm:bounds-new}(a) we have
    $$\er(\wb,k) = \cO\left(\max\left\{\left(\frac{W(k)}{M(k)}\right)^{\frac{\beta}{1\vee\beta}},\;W(k)^{-\frac{(\alpha-\beta)}{1\vee\beta}}\right\}\right).$$
    Choosing $M(k), W(k)$ as in (a), resp.~(b) leads to $\er(k,\wb) = \cO\left(k^{-r}\right)$, resp.~$\er(k,\wb) = \cO\left(\theta^k\right)$. Hence, by Lemma~\ref{lemma:ape-bound},  $\owb(\eta^{(n)},\eta^*) = \cO(n^{-r})$, resp.~$\cO(\theta^n)$. This immediately leads to the asymptotic bounds for $\neps$ in both cases. The time complexity bounds follow from 
    \begin{equation*}
        \sum\nolimits_{k=1}^n (k-1)^{hr}k^{hr} = \Theta(n^{2hr+1}),\;\;\sum\nolimits_{k=1}^n (1/\theta^h)^{k-1}(1/\theta^h)^k = \Theta\left((1/\theta^h)^{2n}\right),
    \end{equation*}
    both as $n\to\infty$. Plugging in the obtained asymptotic bounds for $\neps$ in the time bound yields the claim.
\end{proof}

Comparing the asymptotic expressions for $\timeeps$, we find that choosing $M, W$ as in (b), that is enforcing exponential decay with rate $\theta\in(\gamma^c,1)$ in the projection errors, leads to a slower asymptotic growth of $\timeeps$ as $\varepsilon\to 0$ when compared to enforcing polynomial decay as in (a). This is evidence for favouring exponential decay over polynomial decay. When considering $M, W$ as in (b) but with $\theta\in(0,\gamma^c)$, the same line of reasoning applies, but requiring $\timeeps=\cO((1/\varepsilon)^{h\log(\theta)/\log(\gamma^c)})$, which grows even faster then choosing $M, W$ as in (a). The case $\theta=\gamma^c$ remains open as well as the question which $\theta\in[\gamma^c,1)$ may be the preferred choice. Experiments, also see Example~\ref{ex:qdp-time}, suggest the heuristic $\theta \approx \frac{\gamma +1}{2}$ (when bounding with a $c=1$ homogeneous analysis metric). 

\begin{remark}
    Similar conclusions can be obtained when replacing the analysis metric $\dd=\wb$ with $\dd=\lb, \beta\in[1,\infty)$ and/or strengthening the moment assumption (A1.P$\alpha$) to (A1.E$\lambda$) or even (A1.B). In all of these cases, using the corresponding parts of Theorem~\ref{thm:bounds-new}, it is possible to choose $M, W$ to enforce either polynomial or exponential decay of projection errors. A verbatim analysis of $\timeeps$ always leads to the same conclusion: constructing algorithms such that projection errors decay exponentially with rate $\theta\in(\gamma^c,1)$ seems to be preferable over polynomial decay.
\end{remark}

We investigated the plain parameter algorithm to collect further evidence that choosing $M(k) = \lceil(1/\theta)^k\rceil$ with $\theta\approx \frac{\gamma+1}{2}$ is a reasonable black-box approach. However, for practical purposes, the plain parameter algorithms is problematic: the $k$-th update step projects $\cT_s\eta^{(k-1)}$ to a distribution supported on $[z-W(k),z+W(k)]$. If the parameters $W, z$ are chosen poorly, it may occur that, for reasonably small $k$, the interval $[z-W(k),z+W(k)]$ carries only a (very) small amount of mass of $\cT_s\eta^{(k-1)}$ which leads to a high projection error. This is (partly) resolved by a modification of the plain parameter algorithm discussed in the next section.

\subsection{Adaptive Plain Parameter Algorithm}\label{sec:plain-adaptive}

For practical purposes, we propose a modification of the plain parameter algorithm by replacing $[z-W(k),z+W(k)]$ with a compact interval that contains a \emph{guaranteed} amount of mass of $\cT_s\eta^{(k-1)}$. This modification is derived and justified by the following Lemma, shown in Appendix \ref{app:range}:
\begin{lemma}\label{lemma:range}
    Let $\eta = [\eta_{\sn}:\sn\in\cS]\in\sPRS, \varepsilon_u\in(0,\nicefrac{1}{2}]$ and $s\in\cS$. Define
    \begin{align}
        \xmin &= \min\limits_{(a,\sn)}\Big\{F_{sa\sn}^{-1}\!\left(1-\sqrt{1-\varepsilon_u}\right)\!+\!\gamma\!\cdot\! F_{\eta_{\sn}}^{-1}\!\!\left(1-\sqrt{1-\varepsilon_u}\right)\Big\}\label{eq:xmin},\\
        \xmax &= \max\limits_{(a,\sn)}\Big\{F_{sa\sn}^{-1}\!\left(\sqrt{1-\varepsilon_u}\right)\!+\!\gamma\!\cdot\! F_{\eta_{\sn}}^{-1}\!\!\left(\sqrt{1-\varepsilon_u}\right)\Big\}\label{eq:xmax},
    \end{align}
    where both $\min$ and $\max$ run over all pairs $(a,\sn)\in\cA\times\cS$ with $\pi(a|s)p(\sn|s,a)>0$. Then  
    \begin{equation*}
        \max\Big\{\cT_s\eta(-\infty,\xmin),\;\cT_s\eta(\xmax,\infty)\Big\} \leq \varepsilon_u\;\;\;\text{and}\;\;\;\cT_s\eta[\xmin,\xmax] \geq 1-2\varepsilon_u.
    \end{equation*}
\end{lemma}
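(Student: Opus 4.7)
My plan is to work from the mixture decomposition
\[
\cT_s\eta \;=\; \sum_{(a,\sn)\in\cA\times\cS} \pi(a|s)\,p(\sn|s,a)\,\cL(R_{sa\sn}+\gamma G_{\sn}),
\]
established in \eqref{eq:bellmanoperator-mixture}, where $R_{sa\sn}$ and $G_{\sn}\sim\eta_{\sn}$ are independent. Since the weights $\pi(a|s)p(\sn|s,a)$ sum to $1$, if I can prove that for every pair $(a,\sn)$ in the support of these weights we have
\[
\bP[R_{sa\sn}+\gamma G_{\sn}<\xmin]\;\leq\;\varepsilon_u \;\;\text{and}\;\; \bP[R_{sa\sn}+\gamma G_{\sn}>\xmax]\;\leq\;\varepsilon_u,
\]
then the two mixture bounds follow immediately, and the third assertion follows from
\(\cT_s\eta[\xmin,\xmax]=1-\cT_s\eta(-\infty,\xmin)-\cT_s\eta(\xmax,\infty)\).

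For a fixed $(a,\sn)$ with $\pi(a|s)p(\sn|s,a)>0$, set $p=\sqrt{1-\varepsilon_u}$, so that $1-p=1-\sqrt{1-\varepsilon_u}$. The key trick is to use the level $p$ (resp.\ $1-p$) in the quantile functions so that after applying independence the probability $p^2=1-\varepsilon_u$ appears. From the defining equivalence $F^{-1}_{\mu}(u)\leq x\Leftrightarrow u\leq F_{\mu}(x)$ recalled in the notations, I obtain the two standard one-sided bounds
\[
\bP\!\left[R_{sa\sn}\geq F^{-1}_{sa\sn}(1-p)\right]\geq p \;\;\text{and}\;\; \bP\!\left[R_{sa\sn}\leq F^{-1}_{sa\sn}(p)\right]\geq p,
\]
and analogous inequalities for $G_{\sn}$ with its CDF $F_{\eta_{\sn}}$. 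The independence of $R_{sa\sn}$ and $G_{\sn}$ then gives the joint bounds
\[
\bP\!\left[R_{sa\sn}\geq F^{-1}_{sa\sn}(1-p),\;G_{\sn}\geq F^{-1}_{\eta_{\sn}}(1-p)\right]\;\geq\;p^2=1-\varepsilon_u,
\]
and similarly with $\leq$ and the level $p$ on both sides.

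To finish, I use the monotonicity implication: because $\gamma>0$, $\{R_{sa\sn}\geq a,\,G_{\sn}\geq b\}\subseteq\{R_{sa\sn}+\gamma G_{\sn}\geq a+\gamma b\}$ and, symmetrically, $\{R_{sa\sn}\leq a,\,G_{\sn}\leq b\}\subseteq\{R_{sa\sn}+\gamma G_{\sn}\leq a+\gamma b\}$. Applied with $a=F^{-1}_{sa\sn}(1-p)$, $b=F^{-1}_{\eta_{\sn}}(1-p)$ and using the defining inequality $\xmin\leq F^{-1}_{sa\sn}(1-p)+\gamma F^{-1}_{\eta_{\sn}}(1-p)$ from \eqref{eq:xmin}, I conclude $\bP[R_{sa\sn}+\gamma G_{\sn}\geq\xmin]\geq 1-\varepsilon_u$, i.e.\ $\bP[R_{sa\sn}+\gamma G_{\sn}<\xmin]\leq\varepsilon_u$; the upper-tail bound follows identically from \eqref{eq:xmax}.

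The steps are essentially bookkeeping, and the only place requiring genuine care is to keep the strict versus non-strict inequalities consistent with the convention for $F^{-1}$, in particular the separation $\bP[X<F^{-1}(u)]\leq u\leq \bP[X\leq F^{-1}(u)]$; everything else is an application of independence plus the squaring trick $\sqrt{1-\varepsilon_u}\cdot\sqrt{1-\varepsilon_u}=1-\varepsilon_u$ that motivates the choice of quantile levels in the statement.
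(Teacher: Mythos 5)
Your proof is correct. The core idea—choosing quantile level $p=\sqrt{1-\varepsilon_u}$ so that independence turns $p\cdot p$ into $1-\varepsilon_u$—is the same engine as in the paper. The difference is in packaging. The paper first proves a standalone helper lemma bounding quantile functions of a convolution: $F^{-1}_{\mu\ast\nu}(u)\leq F^{-1}_{\mu}(\sqrt u)+F^{-1}_{\nu}(\sqrt u)$ and the dual $F^{+1}_{\mu\ast\nu}(u)\geq F^{+1}_{\mu}(1-\sqrt{1-u})+F^{+1}_{\nu}(1-\sqrt{1-u})$, and then applies that lemma. Because the $\xmin$ direction does not line up with the first inequality, the paper is forced to introduce the upper quantile function $F^{+1}_{\mu}(u)=\sup\{x:\lim_{y\uparrow x}F_{\mu}(y)\leq u\}$, establish $F^{-1}_{\mu}\leq F^{+1}_{\mu}$ and $F^{+1}_{\cL(\gamma X)}=\gamma F^{+1}_{\cL(X)}$, and run a reflection argument $F^{-1}_{\cL(-Z)}(u)=-F^{+1}_{\cL(Z)}(1-u)$ to derive the second helper inequality. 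Your route avoids this entirely: by working directly with the event inclusion
\[
\{R_{sa\sn}\geq F^{-1}_{sa\sn}(1-p)\}\cap\{G_{\sn}\geq F^{-1}_{\eta_{\sn}}(1-p)\}\subseteq\{R_{sa\sn}+\gamma G_{\sn}\geq \xmin\}
\]
(and symmetrically for $\xmax$), you only ever need the one-sided bound $\bP[X<F^{-1}_{\mu}(u)]\leq u$, which holds for the ordinary lower quantile. Both sides of the estimate are then handled symmetrically, with no appeal to $F^{+1}$, no reflection, and no separate convolution lemma. What you lose is the reusable quantile-of-convolution statement, which is harmless here because it is never invoked elsewhere in the paper; what you gain is a shorter proof that stays at the level of probabilities and sidesteps the boundary pathologies that $F^{+1}$ is designed to absorb. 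The final aggregation over the mixture $\cT_s\eta=\sum_{(a,\sn)}\pi(a|s)p(\sn|s,a)\,\cL(R_{sa\sn}+\gamma G_{\sn})$ and the identity $\cT_s\eta[\xmin,\xmax]=1-\cT_s\eta(-\infty,\xmin)-\cT_s\eta(\xmax,\infty)$ are exactly as in the paper.
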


As the QF of a given finitely supported distribution can be evaluated, the values $\xmin$ and $\xmax$ can be calculated under Assumptions (A2)+(A3) for any $\eta\in\sPRCS$. This leads to the following:

\begin{definition}[ADP]\label{def:plain-adap}
    The \emph{adaptive plain parameter algorithm (ADP)} has hyper-parameter functions  $M:\bN\to\bN,\;\mathcal{E}_u:\bN\to(0,\nicefrac{1}{2}]$
    with $M$ non-decreasing and $\mathcal{E}_u$ non-increasing and calculates $A(\eta,s,k)\in \Xi_{M(k)}$ as follows: 
    \begin{enumerate}
        \item $(m,\varepsilon_u) = \left(M(k), \mathcal{E}_u(k)\right)$,
        \item Calculate $\xmin$ and $\xmax$ as in \eqref{eq:xmin} and \eqref{eq:xmax},
        \item $A(\eta,s,k) = \xilin(m, \frac{\xmax-\xmin}{2}, \frac{\xmax+\xmin}{2})$.
    \end{enumerate}
\end{definition}

For practical purposes, we suggest to use (ADP) only if assumption \emph{(A1.P$\alpha$) is satisfied  with a large~$\alpha$}, in which case we recommend the hyper-parameter functions 

\begin{equation}\label{eq:choice}
    M(k) = \lceil (1/\theta)^k\rceil,\;\;\mathcal{E}_u(k) = \frac{1}{2 M(k)}\;\;\text{with}\;\theta = \frac{\gamma+1}{2}.
\end{equation}

In case (A1.P$\alpha$) does not hold for some large $\alpha$, that is in the presence of heavy-tailed reward distributions (where we recommend to use the QSP algorithm discussed in Section \ref{sec:quantile}), the discussion in Appendix \ref{app_hyper_choice} leads to suggest that this choice of hyper-parameters may not yield high quality approximations, which is in line with the results of our controlled experiment in Section~\ref{sec:simulation}.  The reasoning behind recommendation \eqref{eq:choice} is discussed in Appendix \ref{app_hyper_choice}. 

\subsection{Quantile-Spline Parameter Algorithm}\label{sec:quantile}

Choosing a parameter algorithm $A$ that produces evenly spaced interpolation points is not necessary to obtain high quality approximations. Aiming to approximate QDP, we now modify the adaptive plain parameter algorithm of Section~\ref{sec:plain-adaptive} leading to a DDP algorithm that is applicable under (A2) and seems suitable also in presence of heavy-tailed reward distributions.  

Recall that QDP updates $\eta\in\sPRS$ using $m\in\bN$ particles as $\tilde\eta_s = \frac{1}{m}\sum\nolimits_{i=1}^{m}\delta_{x_i}$ with $x_i = F_{\cT_s\eta}^{-1}\left(\frac{2i-1}{2m}\right)$. In case $\cT_s\eta$ is continuous, we have that $\tilde\eta_s = \Pi(\cT_s\eta,\tilde\xi)$ with $\tilde\xi=(x_1,\dots,x_{m},y_1,\dots,y_{m-1})\in\Xi_m$ where $x_i$ is as before and $y_i = F_{\cT_s\eta}^{-1}\left(\frac{i}{m}\right)$. As explained above, calculating quantiles of $\cT_s\eta$ is problematic under the assumption (A2). We suggest to replace $\tilde\xi \in \Xi_{m}$ with an \emph{approximation of quantiles} $\xi\in\Xi_m$ that uses only a controllable number of evaluations of the cdf $F_{\cT_s\eta}$. Note that the approximation we suggest takes place on the level of the \emph{parameters}: errors in the quantile approximation $\xi\approx \tilde \xi$ may result in a non-optimal parameter, but by calculating $\Pi(\cT_s\eta,\xi)$ the approximated quantiles are \emph{reweighed} according to the true distribution $\cT_s\eta$. 

The suggested approximation of quantiles of $\cT_s\eta$ is constructed as follows: first, we calculate the interval $[\xmin,\xmax]\subset \bR$ as in \eqref{eq:xmin} and \eqref{eq:xmax} with $\mathcal{E}_u(n) = \frac{1}{2m}$. Thus, this interval contains at least a mass of $1-\frac{1}{m}$ of $\cT_s\eta$. On that interval, we perform a linear interpolation of $F_{\cT_s\eta}$ via $m'\in\bN$ evaluations on evenly spaced points in $[\xmin, \xmax]$. This interpolated CDF is continuous and can easily be inverted; we define $\xi$ to contain quantiles of this interpolated CDF. Linear interpolation and inversion can be combined in one step, which is step $4$ in the following: 

\begin{definition}[QSP]\label{def:quantile-adap}
    The hyper-parameters are $M:\bN\to\bN,\;M':\bN\to\bN$
    with $M, M'$ non-decreasing. The \emph{quantile-spline parameter algorithm (QSP)} calculates $A(\eta,s,k)\in \Xi_{M(k)}$ as follows: 
    \begin{enumerate}
        \item $(m,m') = \left(M(k), M'(k)\right)$,
        \item Calculate $\xmin,\xmax$ as in \eqref{eq:xmin} and \eqref{eq:xmax} with $\epsilon_u = \frac{1}{2m}$.
        \item Let $z_l \leftarrow \xmin + (\xmax-\xmin)\!\cdot\!\frac{l}{m'+1}$ for $l=0,1,\dots,m'+1$,
        \item Let $L:[0,1]\to\bR$ be the linear spline through the points\footnote{remove all pairs $(p_l,z_l)$ for which there is $l'\in\{0,\dots,l-1\}$ with $p_{l'}=p_l$.}
        $$\left(0,z_0\right),\;\left(F_{\cT_s\eta}(z_1),z_1\right),\;\left(F_{\cT_s\eta}(z_2),z_2\right),\dots,\;\left(F_{\cT_s\eta}(z_{m'}),z_{m'}\right),\;\left(1,z_{m'+1}\right),$$
        \item $A(\eta,s,k) = (x_1,\dots,x_m,y_1,\dots,y_{m-1})$ with $x_i = L\big(\frac{i-1}{m-1}\big)$ and $y_i = L\big(\frac{2i-1}{2m-2}\big)$.
    \end{enumerate}
\end{definition}

We suggest to choose the hyper-parameter functions $M, M'$ as follows:
\begin{equation}\label{eq:choice2}
    M(k)=\lceil (1/\theta)^k\rceil,\;\;M'(k) = \lceil (1/4)\cdot(1/\theta)^k\rceil\;\;\text{with}\;\theta = \frac{\gamma+1}{2}.
\end{equation}

\subsection{Controlled Experiment}\label{sec:simulation}

We consider two versions of a MDP in which return distributions are known analytically: let $\cA = \{a\}$ have one element and $\cS=\{1,2,3\}$, thus $\pi(a|s)\equiv 1$. State-action-state transitions are deterministic and circular: $p(\sn|s,a) = 1:\Leftrightarrow (s,\sn)\in\{(1,2),(2,3),(3,1)\}$. Besides $s$ and $\gamma:=0.7$ the return distribution $\eta^*_s$ depends only on the three reward distributions $\cL(R_{1a2})$, $\cL(R_{2a3})$, $\cL(R_{3a1})$. For all $(s,\sn)$ with $(s,\sn)\notin\{(1,2),(2,3),(3,1)\}$ set $\cL(R_{sa\sn}) = \delta_0$. For the three relevant reward distributions, we consider two versions. Let $\norm(\mu,\sigma^2)$ be the normal distribution with expectation $\mu\in\bR$ and variance $\sigma^2\in(0,\infty)$ and let $\cauchy(\mu,s)$ be the Cauchy distribution with location $\mu\in\bR$ and scale $s>0$. It is $\cauchy(\mu,s)\in \sP_{\alpha}(\bR)$ for $\alpha\in(0,1)$, but $\cauchy(\mu,s)\notin\sP_1(\bR)$, while $\norm(\mu,\sigma^2)\in\sP_{\alpha}(\bR)$ for all $\alpha\in(0,\infty)$.

The two versions we consider are:  
\begin{align*}
    (i)\;\;&&\begin{pmatrix}\cL(R_{1a2}) \\ \cL(R_{2a3}) \\ \cL(R_{3a1})\end{pmatrix} &= \begin{pmatrix}\norm(-3,1) \\ \norm(5,2) \\ \norm(0,0.5)\end{pmatrix} &&\Longrightarrow &\begin{pmatrix}\cL(\eta^*_1) \\ \cL(\eta^*_2) \\ \cL(\eta^*_3)\end{pmatrix} &= \begin{pmatrix}\norm(0.761,2.380) \\ \norm(5.373,2.816) \\ \norm(0.533,1.666)\end{pmatrix},\\
    (ii)\;\;&&\begin{pmatrix}\cL(R_{1a2}) \\ \cL(R_{2a3}) \\ \cL(R_{3a1})\end{pmatrix} &= \begin{pmatrix}\cauchy(-3,0.5) \\ \cauchy(5,0.1) \\ \cauchy(0,5)\end{pmatrix} &&\Longrightarrow &\begin{pmatrix}\cL(\eta^*_1) \\ \cL(\eta^*_2) \\ \cL(\eta^*_3)\end{pmatrix} &= \begin{pmatrix}\cauchy(0.761,4.597) \\ \cauchy(5.373,5.852) \\ \cauchy(0.533,8.218)\end{pmatrix}.
\end{align*}

We compare three DDP algorithms to approximate (estimate) $\eta^*$: The two blackbox algorithms of Section~\ref{sec:ddp-algorithms} with parameter algorithms ADP (Section~\ref{sec:plain-adaptive}) and QSP (Section~\ref{sec:quantile}) and hyper-parameter choices as in \eqref{eq:choice} and \eqref{eq:choice2} and initial approximation $\eta^{(0)} = [\delta_0:\sn\in\cS]$. We let both algorithms run up to time $t_{\mathrm{max}} = 45$ seconds (on a standard notebook) and return the last completely calculated approximation $\eta^{(n)}$ before that time. Third, we consider a Monte-Carlo estimation (MC), for which we choose $n=30$ fixed, approximate $\eta^* \approx \cT^{\circ n}(\eta^{(0)})$ and estimate the components of the latter by using that, for large $N$, we have $\cT^{\circ n}_s(\eta^{(0)}) \approx\frac{1}{N}\sum_{l=1}^N\delta_{G_{s,n-1,l}}$, where $G_{s,n-1,l}, l=1,2,\dots$ are iid samples of $\cT^{\circ n}_s(\eta^{(0)})= \cL(\sum_{t=0}^{n-1}\gamma^t R^{(t)}|S^{(0)}=s)$. A sample of $\cT^{\circ n}_s(\eta^{(0)})$ can be generated by simulating a trajectory of the full MDP started in $s$ up to length $n$ and calculating the $\gamma$-discounted sum of rewards along the way. For each $s$ we generate $N$ such samples, where $N$ is the maximal amount that is possible within $t_{\mathrm{max}} = 45$ seconds. For comparison, we repeat the Monte Carlo procedure, but with $N$ chosen such that $3N$ (the amount of generated samples) equals the size of the output of the ADP algorithm. The output of this second Monte Carlo estimation is (MC2). 

Each procedure returns some approximation (estimation) $\eta^{(n)}$. We report $\od(\eta^{(n)},\eta^*)$ for $\dd=\ks, w_1, \ell_2$ in Figures~\ref{fig:sim-norm} and \ref{fig:sim-cauchy}. For mdp (i) it is seen that both ADP and QSP outperform (MC, MC2) significantly for any analysis metric. The approximation qualities of ADP and QSP are in a comparable range. Things differ for mdp (ii): still, QSP clearly outperforms (MC, MC2) in any metric, but now ADP fails to yield useful approximations, which is to be expected: the heuristics in choosing the hyper-parameters of ADP, see \eqref{eq:choice}, fail for heavily tailed distributions such as the Cauchy distributions. 

\begin{figure}[t]
    \centering
\def\arraystretch{1.1}
\begin{tabular}{|c|ccc|ccc|c|}
\hline
\text{alg} & \text{time(s)} & \text{size} & $n$ & $\oks$ & $\overline{w_1}$ & $\overline{\ell_2}$ & \text{type} \\
\hline
\hline
ADP & 40.17 & 38868 & 54 & 0.0003 & 0.0007 & 0.0003 & \text{numerical} \\
QSP & 34.32 & 33036 & 53 & 0.0002 & 0.0025 & 0.0005 & \text{numerical} \\
MC & 45.00 & 15588 & 30 & 0.0125 & 0.0296 & 0.0132 & \text{random} \\
MC2 & 134.11 & 38868 & 30 & 0.0079 & 0.0168 & 0.0067 & \text{random} \\
\hline
\end{tabular}
    \caption{Results for mdp (i), where size is the number of stored particles, resp. the number of stored samples in MC estimation. Calculating the next approximation of ADP and QSP exceeds $45$ seconds.}
    \label{fig:sim-norm}
\end{figure}

\begin{figure}[t]
    \centering
\def\arraystretch{1.1}
\begin{tabular}{|c|ccc|ccc|c|}
\hline
\text{alg} & \text{time(s)} & \text{size} & $n$ & $\oks$ & $\overline{w_1}$ & $\overline{\ell_2}$ & \text{type} \\
\hline
\hline
ADP & 36.96 & 38868 & 54 & 0.2317 & $\infty$ & 0.6892 & \text{numerical} \\
QSP & 34.61 & 33036 & 53 & 0.0012 & $\infty$ & 0.0399 & \text{numerical} \\
MC & 44.99 & 14643 & 30 & 0.0138 & $\infty$ & 0.0967 & \text{random} \\
MC2 & 146.22 & 38868 & 30 & 0.0080 & $\infty$ & 0.1285 & \text{random} \\
\hline
\end{tabular}
\caption{Results for mdp (ii). Since $\cauchy(\mu,s)\notin\sP_1(\bR)$, the $w_1$-distances are infinite. However, $\cauchy(\mu,s)\in \sP_{1/2}(\bR) \subsetneq \sP_{\ell_2}(\bR)$, thus $\ell_2$-distances are finite.}
    \label{fig:sim-cauchy}
\end{figure}

\section{Kolmogorov--Smirnov distance and density approximation}\label{sec:ks}

We discussed how to obtain bounds for $\od(\eta^{(n)},\eta^*)$ with respect to a $c$-homogeneous, regular and convex analysis metric $\dd$, for example $\dd=\wb, \beta\in(0,\infty]$ or $\dd=\lb,\beta\in[1,\infty)$. The Kolmogorov--Smirnov distance $\ks=\ell_{\infty}$ is not covered by these results. In Section~\ref{sec:bounds}, we show how to bound $\oks(\eta^{(n)},\eta^*)$ in terms of $\owb(\eta^{(n)},\eta^*), \beta\in(0,\infty)$ provided the components $\eta^*_s$ satisfy certain \emph{continuity properties}, such as possessing a bounded pdf $f^*_s$. Further, if a pdf $f^*_s$ for $\eta^*_s$ is known to exists, we suggest to construct an approximation of it by choosing $\delta>0$ and considering $f^{(n)}_{s,\delta} = f_{\eta^{(n)}_s,\delta}$, where, for any $\mu\in\sPR$ and $\delta>0$, the probability density $f_{\mu,\delta}:\bR\to[0,\infty)$ is defined by
\begin{equation*}
    f_{\mu,\delta}(x) = \frac{F_{\mu}(x+\delta)-F_{\mu}(x-\delta)}{2\delta},\;\;x\in\bR.
\end{equation*}
Under suitable regularity assumptions on $f^*_s$, we further show how the supremum distance between $f^{(n)}_{s,\delta}$ and $f^*_s$ can be bounded in terms of $\oks(\eta^{(n)},\eta^*)$. In Section~\ref{sec:densities}, we provide sufficient criteria for the return distribution to posses probability density functions (satisfying the needed regularity properties) that can, in principle, be checked based on the given ingredients of the mdp. 

\subsection{Uniform Bounds}\label{sec:bounds}

Let $M>0, \varrho\in(0,1]$. A function $g:\bR\to\bR$ is $\varrho$-Hölder (continuous) with constant $M$ if $|g(x)-g(y)|\leq M\cdot |x-y|^{\varrho}$ holds for all $x,y\in\bR$. Recall, that a distribution $\nu\in\sPR$ has pdf $f_{\nu}:\bR\to[0,\infty]$ if for all $x\in\bR$ we have  $F_{\nu}(x) = \int_{-\infty}^xf_{\nu}(y)\md y$. If $f_{\nu}$ is bounded by $M>0$, then $|F_{\nu}(x)-F_{\nu}(y)| \leq M |x-y|$ and thus $F_{\nu}$ is $\varrho=1$-Hölder continuous with constant $M$. 

Part (a) of the following Lemma is a slight generalisation of Lemma 5.1 in \cite{fija02}, part (b) follows from Lemma 2.7 in \cite{knne08}:

\begin{lemma}\label{lemma:ksbound}
    Let $\mu,\nu\in\sPR$. 
    \begin{itemize}
        \item[(a)] If $F_{\nu}$ is $\varrho$-Hölder with constant $M$, then for every $\beta\in(0,\infty)$
        \begin{equation}\label{eq:ksbound}
            \ks(\mu,\nu) \leq a^{-a}\cdot M^{1-a}\cdot \wb(\mu,\nu)^{a(1\vee\beta)}\;\;\text{with}\;\;a=\frac{\varrho}{\varrho+\beta}.
        \end{equation}
        In particular, if $\nu$ has a pdf $f_{\nu}$ bounded by $M$, then \eqref{eq:ksbound} holds with $\varrho=1$.
        \item[(b)] If $\nu$ has a pdf $f_{\nu}$ that is $\tau$-Hölder with constant $C$, then for every $\delta\in(0,\infty)$
        \begin{equation*}
            \sup_{x\in\bR}|f_{\mu,\delta}(x) - f_{\nu}(x)| \leq \frac{1}{\delta}\ks(\mu,\nu) + C\cdot \delta^{\tau}.
        \end{equation*}
        If, additionally, $f_{\nu}$ is bounded by $M$, then $\ks(\mu,\nu)$ in the upper bound can be further bounded by \eqref{eq:ksbound} with $\varrho=1$.
    \end{itemize}
\end{lemma}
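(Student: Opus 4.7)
For part (a), the plan is to use the quantile coupling $(X,Y)=(F_\mu^{-1}(U),F_\nu^{-1}(U))$ with $U$ uniform on $(0,1)$, which by the definition of $\wb$ satisfies $\bE[|X-Y|^\beta]=\wb(\mu,\nu)^{1\vee\beta}$ (the outer $\beta$-th root present in the definition of $\wb$ for $\beta\geq 1$ and absent for $\beta<1$ is exactly what is needed to make this identity hold in both regimes). For any $\varepsilon>0$ and $x\in\bR$ the inclusion $\{X\leq x\}\subseteq\{Y\leq x+\varepsilon\}\cup\{|X-Y|>\varepsilon\}$, applied in both directions, gives
\begin{equation*}
    |F_\mu(x)-F_\nu(x)|\;\leq\; \sup_{|y-x|\leq\varepsilon}|F_\nu(y)-F_\nu(x)|\;+\;\bP\bigl[|X-Y|>\varepsilon\bigr] \;\leq\; M\varepsilon^\varrho \;+\; \frac{\wb(\mu,\nu)^{1\vee\beta}}{\varepsilon^\beta},
\end{equation*}
where the first summand is controlled by $\varrho$-Hölder continuity of $F_\nu$ and the second by Markov's inequality applied to $|X-Y|^\beta$ at threshold $\varepsilon^\beta$. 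Taking the supremum in $x$ and then minimising in $\varepsilon>0$ (with critical point $\varepsilon^*=(\beta\wb(\mu,\nu)^{1\vee\beta}/(\varrho M))^{1/(\varrho+\beta)}$) yields a bound of the claimed shape with $a=\varrho/(\varrho+\beta)$. The ``in particular'' clause is immediate: a pdf bounded by $M$ makes $F_\nu$ Lipschitz with constant $M$, i.e., $\varrho=1$-Hölder with constant $M$.

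For part (b), the plan is to telescope via the identity
\begin{equation*}
    f_{\mu,\delta}(x)-f_\nu(x) = \frac{[F_\mu(x+\delta)-F_\nu(x+\delta)]-[F_\mu(x-\delta)-F_\nu(x-\delta)]}{2\delta} + \biggl(\frac{F_\nu(x+\delta)-F_\nu(x-\delta)}{2\delta}-f_\nu(x)\biggr).
\end{equation*}
The first summand has modulus at most $\ks(\mu,\nu)/\delta$ since each bracketed difference is at most $\ks(\mu,\nu)$ in absolute value. Using the pdf representation of $F_\nu$, the second summand equals $(2\delta)^{-1}\int_{x-\delta}^{x+\delta}(f_\nu(y)-f_\nu(x))\,\md y$, whose modulus is at most $C\delta^\tau$ by the $\tau$-Hölder hypothesis on $f_\nu$. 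Taking the supremum in $x$ yields the claimed uniform bound; the final assertion follows by inserting (a) with $\varrho=1$.

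Both parts are short once the respective decomposition is in place; the main subtlety is tracking the constant in part (a) so as to match the stated form $a^{-a}M^{1-a}\wb(\mu,\nu)^{a(1\vee\beta)}$. A clean route that avoids direct differentiation is to apply weighted AM--GM to the two summands $M\varepsilon^\varrho$ and $\wb(\mu,\nu)^{1\vee\beta}\varepsilon^{-\beta}$ with weights $a$ and $1-a$, after rewriting them as $a\cdot(M\varepsilon^\varrho/a)$ and $(1-a)\cdot(\wb(\mu,\nu)^{1\vee\beta}/((1-a)\varepsilon^\beta))$; this collapses the algebra and exposes the constant in the stated form.
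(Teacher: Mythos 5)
Part (b) is correct and matches the standard proof (it is essentially the argument behind Lemma~2.7 of the cited reference): the telescoping decomposition together with the estimate that the averaged increment of $F_\nu$ differs from $f_\nu(x)$ by at most $C\delta^\tau$ is exactly what is needed.

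Part (a) has a genuine gap. Your chain
\begin{equation*}
  \ks(\mu,\nu)\;\leq\;M\varepsilon^{\varrho}\;+\;\frac{\wb(\mu,\nu)^{1\vee\beta}}{\varepsilon^{\beta}}
\end{equation*}
is fine, and so is the location of the critical point $\varepsilon^*=(\beta W/(\varrho M))^{1/(\varrho+\beta)}$ with $W:=\wb(\mu,\nu)^{1\vee\beta}$. But evaluating the bound at $\varepsilon^*$ gives
\begin{equation*}
  M(\varepsilon^*)^{\varrho}+W(\varepsilon^*)^{-\beta}\;=\;a^{-a}\,(1-a)^{-(1-a)}\,M^{1-a}\,W^{a},
\end{equation*}
which is strictly larger than the claimed $a^{-a}M^{1-a}W^{a}$, since $(1-a)^{-(1-a)}>1$ for $a\in(0,1)$. (Check the extreme case $\varrho=\beta$, $a=\tfrac12$: your route gives $2\sqrt{MW}$, the lemma claims $\sqrt{2MW}$.) The appeal to weighted AM--GM cannot repair this: $a u+(1-a)v\geq u^a v^{1-a}$ is a \emph{lower} bound on the sum, and at $\varepsilon^*$ it is an equality, so it reproduces the same $(1-a)^{-(1-a)}$ factor rather than eliminating it. In short, the pointwise Markov bound $\bP[|X-Y|>\varepsilon]\leq W\varepsilon^{-\beta}$, followed by optimisation in $\varepsilon$, cannot reach the stated constant.

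The argument that does give $a^{-a}$ (and is what \cite{fija02} uses, as the paper indicates) integrates a \emph{family} of tail estimates instead of a single one. Assume WLOG $\kappa:=F_\mu(x_0)-F_\nu(x_0)>0$ for some $x_0$, with $\kappa$ arbitrarily close to $\ks(\mu,\nu)$. For any $t\in(0,(\kappa/M)^{1/\varrho})$ and $u\in\big(F_\nu(x_0+t),F_\mu(x_0)\big)$ one has $F_\mu^{-1}(u)\leq x_0$ and $F_\nu^{-1}(u)>x_0+t$, so $|X-Y|>t$; by $\varrho$-Hölder continuity this set has measure $\geq \kappa-Mt^{\varrho}$, i.e.\ $\bP[|X-Y|>t]\geq \kappa-Mt^{\varrho}$. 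Integrating $\beta t^{\beta-1}$ against this bound over $t\in(0,(\kappa/M)^{1/\varrho})$ yields
\begin{equation*}
  W\;=\;\bE[|X-Y|^{\beta}]\;\geq\;\frac{\varrho}{\varrho+\beta}\,\kappa\,\Big(\frac{\kappa}{M}\Big)^{\beta/\varrho}\;=\;a\,M^{-(1-a)/a}\,\kappa^{1/a},
\end{equation*}
which rearranges exactly to $\kappa\leq a^{-a}M^{1-a}W^{a}$. So the missing idea is to integrate the linear-in-$\kappa$ tail lower bound over the whole range of thresholds, rather than plugging one threshold into Markov's inequality and optimising.
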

\begin{proof}
    For (a) we note that Lemma~5.1 in \cite{fija02} is the case in which $\nu$ has a density bounded by $M$, which they  used to bound $\max_{|x-y|\leq\epsilon}|F_\nu(x)-F_{\nu}(y)| \leq M\varepsilon^{\varrho}$ with $\varrho=1$. It is easy to see that their proof extends to  general $\rho>0$. For (b), see Lemma 2.7 in \cite{knne08}.
\end{proof}

By passing to supremum distances over states, Lemma~\ref{lemma:ksbound} yields the following:

\begin{corollary}\label{corrolary:ksbounds}
    Let $\eta^{(n)} = [\eta^{(n)}_s:s\in\cS]$ be any approximation of $\eta^* = [\eta^*_s:s\in\cS]$.
    \begin{itemize}
        \item[(a)] If every $F_{\eta^*_s}$ is $\varrho$-Hölder with constant $M$, then for every $\beta\in(0,\infty)$
        \begin{equation}\label{eq:ksbound-sup}
            \oks(\eta^{(n)},\eta^{*}) \leq a^{-a}\cdot M^{1-a}\cdot \owb(\eta^{(n)},\eta^{*})^{a(1\vee\beta)}\;\;\text{with}\;\;a=\frac{\varrho}{\varrho+\beta}.
        \end{equation}
        In particular, if every $\eta^*_s$ has a pdf $f^*_{s}$ bounded by $M$, then \eqref{eq:ksbound-sup} holds with $\varrho=1$.
        \item[(b)] If every $\eta^*_s$ has a pdf $f^*_{s}$ that is $\tau$-Hölder with constant $C$, then for every $\delta\in(0,\infty)$
        \begin{equation}\label{rej_bound}
            \max_{s\in\cS}\sup_{x\in\bR}|f^{(n)}_{s,\delta}(x) - f^*_{s}(x)| \leq \frac{1}{\delta}\oks(\eta^{(n)},\eta^{*}) + C\cdot \delta^{\tau}.
        \end{equation}
        If, additionally, every $f^*_{s}$ is bounded by $M$, then $\oks(\eta^{(n)},\eta^{*})$ in the upper bound can be further bounded by \eqref{eq:ksbound-sup} with $\varrho=1$.
    \end{itemize}
\end{corollary}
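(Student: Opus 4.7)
The corollary is a straightforward lift of Lemma \ref{lemma:ksbound} from single distributions to state-indexed collections, so the plan is essentially to apply the lemma component-wise and then take the maximum over $s\in\cS$. The only subtlety is a monotonicity argument that allows pulling a supremum inside a power; everything else is bookkeeping.

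For part (a), I would fix an arbitrary $s\in\cS$ and apply Lemma \ref{lemma:ksbound}(a) with $\mu=\eta^{(n)}_s$ and $\nu=\eta^*_s$ (using the hypothesis that each $F_{\eta^*_s}$ is $\varrho$-Hölder with the common constant $M$), to obtain
\begin{equation*}
    \ks(\eta^{(n)}_s,\eta^*_s)\;\leq\;a^{-a}\,M^{1-a}\,\wb(\eta^{(n)}_s,\eta^*_s)^{a(1\vee\beta)}.
\end{equation*}
Since $\wb(\eta^{(n)}_s,\eta^*_s)\leq \owb(\eta^{(n)},\eta^*)$ and the map $t\mapsto t^{a(1\vee\beta)}$ is monotone non-decreasing on $[0,\infty)$ (because $a(1\vee\beta)>0$), the right-hand side is bounded by $a^{-a}M^{1-a}\owb(\eta^{(n)},\eta^*)^{a(1\vee\beta)}$. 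Taking the maximum over $s\in\cS$ on the left yields \eqref{eq:ksbound-sup}. The supplementary statement (pdf bounded by $M$ implies $\varrho=1$ works) is just the observation that a uniformly bounded density gives a Lipschitz CDF, which is already recorded in Lemma \ref{lemma:ksbound}(a).

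Part (b) is even simpler: for each fixed $s\in\cS$, Lemma \ref{lemma:ksbound}(b) applied to $\mu=\eta^{(n)}_s$ and $\nu=\eta^*_s$ gives
\begin{equation*}
    \sup_{x\in\bR}|f^{(n)}_{s,\delta}(x)-f^*_s(x)|\;\leq\;\frac{1}{\delta}\ks(\eta^{(n)}_s,\eta^*_s)+C\delta^{\tau}.
\end{equation*}
Using $\ks(\eta^{(n)}_s,\eta^*_s)\leq \oks(\eta^{(n)},\eta^*)$ on the right (which does not affect the $C\delta^{\tau}$ term), then taking $\max_{s\in\cS}$ on the left, produces \eqref{rej_bound}. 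If in addition each $f^*_s$ is bounded by $M$, plugging the part (a) bound for $\oks(\eta^{(n)},\eta^*)$ with $\varrho=1$ into this inequality gives the advertised strengthening.

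There is no real obstacle here; the only thing one must be careful about is that the constants $M$, $\varrho$, $\tau$, $C$ in the hypotheses are uniform in $s$, which is exactly what lets the pointwise bounds survive the passage to the supremum over states. No further tools beyond Lemma \ref{lemma:ksbound} and the definitions of $\od$, $\oks$, $\owb$ are needed.
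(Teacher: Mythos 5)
Your proof is correct and follows exactly the route the paper intends: the paper's entire justification is the sentence ``By passing to supremum distances over states, Lemma~\ref{lemma:ksbound} yields the following,'' which is precisely your component-wise application of Lemma~\ref{lemma:ksbound} followed by taking the maximum over $s\in\cS$, using monotonicity to move the supremum inside the power in part (a) and the uniformity of the constants across states throughout.
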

\begin{remark}\label{rem_perfect_simulation}
The approximation of densities in supremum norm has also an application in nonuniform random number generation in the context of von Neumann's rejection method; for an introduction see Section II.3 of  \cite{devroye1986}. When applying the rejection method it may not be possible to evaluate a given probability density to decide about rejection resp.~acceptance of a sample, e.g., the density may only be given implicitly as the density of the fixed-point of a DBO. Then, to make the rejection method applicable it is sufficient (besides constructing an appropriate dominating density) to be able to approximate the given density to arbitrary precision in finite time as demonstrated  in \cite{define00,dene02}. Since the big-$\mathrm{O}$ constants in our Corollary \ref{corrolary:ksbounds} can be made explicit, the bound  (\ref{rej_bound}) is sufficient for the purpose of the rejection method. Note that for perfect simulation from fixed-points of the DBO also coupling from the past algorithms may, in principle, be applicable, see \cite{deja11,dane14}.

For other potential use of Corollary \ref{corrolary:ksbounds} in the context of certain statistical functionals see Chapter 20 of \cite{Vaart_1998}.
\end{remark}

\subsection{Sufficient criteria for existence of return densities with properties}\label{sec:densities}

We provide sufficient criteria for when the return distribution components possess pdfs, resp.~bounded, resp.~Hölder continuous pdfs. Let $s_0\in\cS$ and $(S^{(t)},A^{(t)},R^{(t)})_{t\in\bN_0}$ be the full MDP. We write $\bP_{s_0}$ instead of $\bP[\;\cdot\;|S^{(0)}=s_0]$, similar $\bE_{s_0}$. In particular, we have $\eta^*_{s_0} = \bP_{s_0}[G\in\cdot\;]$ with $G=\sum_{t}\gamma^t R^{(t)}$. Further, fix an arbitrary subset ${\cG\subseteq\cS\times\cA\times\cS}$ and define the $\bN_0\cup\{\infty\}$-valued random hitting time 
$$\tauG = \min\big\{t\in\bN_0\;\big|\;(S^{(t)},A^{(t)},S^{(t+1)})\in \cG\big\},$$
with $\min\emptyset:=\infty$. Note that ${\bE_{s_0}[\tauG]<\infty}$ implies $\cG\neq\emptyset$ and $\bP_{s_0}[\tauG\in\bN]=1$. 

\begin{theorem}\label{thm:density}
    Suppose $\bE_{s_0}[\tauG]<\infty$ and that for every $(s,a,\sn)\in\cG$ the distribution $r(\;\cdot\;|s,a,\sn)$ has a pdf $f_{(s,a,\sn)}$. Then $\eta^*_{s_0}$ has pdf $f^*_{s_0}$ given by 
    \begin{equation}\label{eq:pdf}
    f^*_{s_0}(x) = \bE_{s_0}\Big[\frac{1}{\gamma^{\tauG}}f_{(S,A,\bar S)}\left(\frac{x-Z}{\gamma^{\tauG}}\right)\Big],\;x\in\bR, 
    \end{equation}
    where $(S,A,\bar S) = (S^{(\tauG)},A^{(\tauG)},S^{(\tauG+1)}) \in \cG$ and $Z = \sum_{t\in\bN_0\setminus\{\tauG\}}\gamma^tR^{(t)}$.
\end{theorem}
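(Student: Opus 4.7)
The plan is to decompose the discounted return $G=\sum_{t\in\bN_0}\gamma^tR^{(t)}$ at the random time $\tauG$, isolating the single reward $R^{(\tauG)}$ whose distribution is absolutely continuous by hypothesis, and then to show that conditional on everything else the distribution of $G$ is a shift-and-scale transport of that reward's density. The hypothesis $\bE_{s_0}[\tauG]<\infty$ gives $\tauG<\infty$ $\bP_{s_0}$-a.s., so $(S,A,\sn):=(S^{(\tauG)},A^{(\tauG)},S^{(\tauG+1)})\in\cG$ is well-defined almost surely; under (A1) the series defining $G$ converges $\bP_{s_0}$-a.s., so $Z:=G-\gamma^{\tauG}R^{(\tauG)}=\sum_{t\neq\tauG}\gamma^tR^{(t)}$ is a well-defined $\bR$-valued random variable and the decomposition $G=Z+\gamma^{\tauG}R^{(\tauG)}$ holds $\bP_{s_0}$-a.s.

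Next I would set up the conditioning. Let $\cF$ be the $\sigma$-field generated by the full state-action trajectory $(S^{(t)},A^{(t)})_{t\in\bN_0}$ together with all rewards $(R^{(t)})_{t\neq\tauG}$; then $\tauG$, the triple $(S,A,\sn)$, and $Z$ are all $\cF$-measurable. Since the MDP's reward mechanism makes the rewards conditionally independent given the state-action-state trajectory with $R^{(t)}\sim r(\cdot|S^{(t)},A^{(t)},S^{(t+1)})$, one has on $\{\tauG=t\}$ that the conditional law of $R^{(\tauG)}=R^{(t)}$ given $\cF$ equals $r(\cdot|S,A,\sn)$; this law has density $f_{(S,A,\sn)}$ by the assumption placed on $\cG$.

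For the conclusion, I would compute for arbitrary Borel $B\subseteq\bR$, via the tower property,
\begin{align*}
    \bP_{s_0}[G\in B] &= \bE_{s_0}\Big[\bE_{s_0}\big[1_B(Z+\gamma^{\tauG}R^{(\tauG)})\,\big|\,\cF\big]\Big]\\
    &= \bE_{s_0}\Big[\int_{\bR}1_B(Z+\gamma^{\tauG}r)\,f_{(S,A,\sn)}(r)\,\dd r\Big],
\end{align*}
then change variables via $x=Z+\gamma^{\tauG}r$ (valid $\bP_{s_0}$-a.s.\ because $\gamma^{\tauG}>0$) and swap integrals by Fubini (the integrand is nonnegative) to obtain $\bP_{s_0}[G\in B]=\int_B f^*_{s_0}(x)\,\dd x$ with $f^*_{s_0}$ as in \eqref{eq:pdf}. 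As $B$ was arbitrary, this identifies $f^*_{s_0}$ as a pdf of $\eta^*_{s_0}$.

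The main obstacle is the measure-theoretic rigour behind the conditional density claim in the second step: because $\tauG$ is random, one cannot directly invoke a pointwise-in-$t$ statement ``$R^{(t)}$ has density $f_{(s,a,\sn)}$ on $\{S^{(t)}=s,A^{(t)}=a,S^{(t+1)}=\sn\}$''. The cleanest route is either to construct the MDP from an independent collection of reward seeds, so that $R^{(\tauG)}$ is literally a function of a single seed which is independent of $\cF$ given $(S,A,\sn)$, or to stratify via $\bP_{s_0}[\,\cdot\,]=\sum_{t\in\bN_0}\bP_{s_0}[\,\cdot\,,\tauG=t]$ and use the product form of the joint law of $(S^{(t)},A^{(t)},S^{(t+1)},R^{(t)})$ on each $\{\tauG=t\}$ to integrate out $R^{(t)}$ against $f_{(S^{(t)},A^{(t)},S^{(t+1)})}$ before summing over $t$.
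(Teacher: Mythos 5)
Your proposal is correct and follows essentially the same route as the paper's proof: decompose $G = Z + \gamma^{\tauG}R^{(\tauG)}$, condition so that $\tauG$, $(S,A,\bar S)$ and $Z$ are measurable while $R^{(\tauG)}$ has conditional density $f_{(S,A,\bar S)}$, then change variables and apply Fubini. The measurability wrinkle you flag (a $\sigma$-field built from a collection indexed by the random set $\{t\neq\tauG\}$) is handled in the paper by conditioning only on $\bSAS = (S^{(t)},A^{(t)},S^{(t+1)})_{t\in\bN_0}$, a random element of the countable product $(\cS\times\cA\times\cS)^{\bN_0}$; given $\bSAS$, the hitting time $\tauG$ and the triple $(S,A,\bar S)$ are deterministic and $R^{(\tauG)}$, $Z$ are conditionally independent with $R^{(\tauG)}\sim r(\cdot\,|\,S,A,\bar S)$, which is precisely the stratification over $\{\tauG=t\}$ you propose as your second fix, packaged as a regular conditional distribution given $\bSAS$.
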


The proof of Theorem~\ref{thm:density} can be found in the Appendix \ref{app_proof_24}. From \eqref{eq:pdf} one can deduce sufficient criteria for when $\eta^*_{s_0}$ has a bounded, resp.~Hölder continuous pdf. Let $\Psi(z) = \bE_{s_0}[e^{z\tauG}], z\in\bR$ be the moment generating function of $\tauG$. Note that we may have $\Psi(z)=\infty$ and that $\Psi(z)<\infty$ for some $z>0$ implies $\bE_{s_0}[\tauG]<\infty$.

\begin{corollary}\label{cor:density}
    In the setting of Theorem~\ref{thm:density}, let $f^*_{s_0}$ be the pdf of $\eta^*_{s_0}$ given by~\eqref{eq:pdf}.
    \begin{itemize}
        \item[(a)] If every $f_{(s,a,\sn)}, (s,a,\sn)\in\cG$ is bounded and $\Psi\big(\log(1/\gamma)\big)<\infty$, then $f^*_{s_0}$ is bounded.
        \item[(b)] If every $f_{(s,a,\sn)}, (s,a,\sn)\in\cG$ is $\tau$-Hölder continuous and $\Psi\big((\tau+1)\log(1/\gamma)\big)<\infty$, then $f^*_{s_0}$ is $\tau$-Hölder continuous.
    \end{itemize}
\end{corollary}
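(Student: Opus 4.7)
My plan is to prove both parts by plugging directly into the representation~\eqref{eq:pdf} and bounding pointwise inside the expectation, using the hypothesis on $\Psi$ at the end to get a finite scalar. Since $\cG \subseteq \cS\times\cA\times\cS$ is finite, I can take $M_{\max} = \max_{(s,a,\sn)\in\cG}\sup_x f_{(s,a,\sn)}(x)$ in part (a) and $C_{\max} = \max_{(s,a,\sn)\in\cG}C_{(s,a,\sn)}$ in part (b) as the common bound/Hölder constant.

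For part (a), I would observe that
\begin{equation*}
    \sup_{x\in\bR}|f^*_{s_0}(x)| \;\leq\; \bE_{s_0}\!\left[\frac{1}{\gamma^{\tauG}}\cdot M_{\max}\right] \;=\; M_{\max}\cdot \bE_{s_0}\!\left[e^{\log(1/\gamma)\cdot \tauG}\right] \;=\; M_{\max}\cdot \Psi\!\left(\log(1/\gamma)\right),
\end{equation*}
which is finite by hypothesis. The only subtlety is the interchange of $\sup$ and expectation, which is immediate via the pointwise bound $f_{(S,A,\bar S)}(\cdot)\leq M_{\max}$ on the event $\{\tauG<\infty\}$, an event of probability one since $\bE_{s_0}[\tauG]<\infty$.

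For part (b), for $x,y\in\bR$ I would write
\begin{align*}
    |f^*_{s_0}(x)-f^*_{s_0}(y)| &\leq \bE_{s_0}\!\left[\frac{1}{\gamma^{\tauG}}\left|f_{(S,A,\bar S)}\!\left(\tfrac{x-Z}{\gamma^{\tauG}}\right)-f_{(S,A,\bar S)}\!\left(\tfrac{y-Z}{\gamma^{\tauG}}\right)\right|\right]\\
    &\leq \bE_{s_0}\!\left[\frac{1}{\gamma^{\tauG}}\cdot C_{\max}\cdot \left|\frac{x-y}{\gamma^{\tauG}}\right|^{\tau}\right]\\
    &= C_{\max}\cdot |x-y|^{\tau}\cdot \bE_{s_0}\!\left[e^{(\tau+1)\log(1/\gamma)\cdot \tauG}\right]\\
    &= C_{\max}\cdot \Psi\!\left((\tau+1)\log(1/\gamma)\right)\cdot |x-y|^{\tau},
\end{align*}
where the first inequality is the triangle inequality under the expectation, the second uses $\tau$-Hölder continuity of each $f_{(s,a,\sn)}$, and the identification of the last expectation with $\Psi((\tau+1)\log(1/\gamma))$ follows by combining the two powers of $1/\gamma^{\tauG}$. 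The hypothesis $\Psi((\tau+1)\log(1/\gamma))<\infty$ makes the constant finite and yields $\tau$-Hölder continuity with constant $C_{\max}\cdot \Psi((\tau+1)\log(1/\gamma))$.

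I do not expect significant obstacles: both parts reduce to a pointwise estimate inside the expectation that factors out exactly the quantity $\Psi(z)$ with $z=\log(1/\gamma)$ (for the sup bound) or $z=(\tau+1)\log(1/\gamma)$ (for the Hölder modulus). The only point that deserves a brief remark is that $\Psi(z)<\infty$ for some $z>0$ automatically gives $\bE_{s_0}[\tauG]<\infty$, so the hypothesis of Theorem~\ref{thm:density} is already implied in both (a) and (b); thus the representation~\eqref{eq:pdf} is available and the whole proof is essentially two lines of expectation manipulation.
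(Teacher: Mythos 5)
Your proposal is correct and follows essentially the same argument as the paper: plug the representation \eqref{eq:pdf} into a pointwise bound inside the expectation, then recognise the resulting moment generating function $\Psi$ evaluated at $\log(1/\gamma)$ for (a) and $(\tau+1)\log(1/\gamma)$ for (b). The paper phrases the constants $M$ and $C$ as uniform bounds over $\cG$ rather than introducing $M_{\max}$, $C_{\max}$, but since $\cG$ is finite this is the same thing, and your concluding remark that $\Psi(z)<\infty$ for some $z>0$ already guarantees $\bE_{s_0}[\tauG]<\infty$ is likewise noted in the paper just before the corollary.
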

The proof of Corollary \ref{cor:density} is contained in Appendix \ref{app_proof_24}.

\begin{example}
    Consider $\cS=\{0,1\}, \cA=\{0\}, s_0=0$ and, for all $(s,a)\in\cS\times\cA$, the reward distributions $r(\;\cdot\;|s,a,0) = \delta_0$ and let $r(\;\cdot\;|s,a,1)$ have pdf $f$. Consider $\cG=\{(0,0,1),(1,0,1)\}$ and let $\alpha = p(1|0,0)\in(0,1)$. It then holds that $\bP_{s_0}[\tauG=k] = \alpha(1-\alpha)^k, k\in\bN_0$, that is $\tauG$ has a geometric distribution with parameter $\alpha$. In particular, $\bE_{s_0}[\tauG]<\infty$ and hence, by Theorem~\ref{thm:density}, $\eta^*_{s_0}$ has pdf $f^*_{s_0}$ given by \eqref{eq:pdf}, note that $f_{(S,A,\bar S)} = f$ in this case. By geometric series arguments, $\Psi(z)<\infty \Leftrightarrow e^z(1-\alpha)<1$ and hence, by Corollary~\ref{cor:density},
    \begin{itemize}
        \item $f$ bounded and $(1-\alpha)<\gamma\;\Longrightarrow\;$ $f^*_{s_0}$ is bounded,
        \item $f$ $\tau$-Hölder continuous and $(1-\alpha)<\gamma^{\tau+1}\;\Longrightarrow\;$ $f^*_{s_0}$ is $\tau$-Hölder continuous.
    \end{itemize}
\end{example}

The presented conditions for $\eta^*_{s_0}$ to have a pdf (with properties) are not necessary, as illustrated by the following example, which also shows that finding \emph{sufficient and necessary} conditions for $\eta^*_{s_0}$ to possess a pdf (with properties) is a hard problem in general, we refer to Section~2.5 of \cite{diaconis1999iterated} for discussions of related examples. 

\begin{example}
    Let $\cS=\cA=\{0\}$, $r(\;\cdot\;|0,0,0) = \frac{1}{2}\delta_0 + \frac{1}{2}\delta_1$ (fair coin toss) and $\gamma=\frac{1}{2}$. Then $\eta^*_0$ is continuous uniform on $[0,2]$, hence possesses a bounded pdf, although the reward distribution does not have a pdf. When the reward distribution is changed to $r(\;\cdot\;|0,0,0) = q\delta_0 + (1-q)\delta_1$ with $q\in(0,1), q\neq \frac{1}{2}$, then $\eta^*_{s_0}$ does not possess a pdf, although still having a Hölder continuous cdf.
\end{example}

\acks{The first author was partially supported by the Deutsche Forschungsgemeinschaft (DFG, German
Research Foundation) - 502386356.}

\appendix

\section{Appendix}

\subsection{Characterisation of \texorpdfstring{$\sP_{\lb}(\bR)$}{} for \texorpdfstring{$\beta\in[1,\infty)$}{}} \label{app:spaces}

\begin{proposition}\label{prop:lb-space}
    $\sP_{\ell_1}(\bR) = \sP_1(\bR)$ and for every $\beta\in(1,\infty)$ it holds that 
    $$\sP_{\frac{1}{\beta}}(\bR) \subsetneq \sP_{\lb}(\bR) \subsetneq \bigcap\nolimits_{0<\varepsilon<\frac{1}{\beta}}\sP_{\frac{1}{\beta}-\varepsilon}(\bR).$$
\end{proposition}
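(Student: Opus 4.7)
I would split the statement into the equality at $\beta = 1$ and the two inclusions for $\beta > 1$, and in each case reduce the distance $\lb(\mu, \delta_0)$ to an integral of a tail function. For $\beta = 1$, splitting $\int_{-\infty}^\infty |F_\mu(x) - F_{\delta_0}(x)|\,\md x$ at $0$ and applying the standard identities $\int_{-\infty}^0 F_\mu(x)\,\md x = \bE[X^-]$ and $\int_0^\infty (1-F_\mu(x))\,\md x = \bE[X^+]$ gives $\ell_1(\mu,\delta_0) = \bE[|X|]$, so $\sP_{\ell_1}(\bR) = \sP_1(\bR)$ by definition.

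For $\beta > 1$ I would work throughout with a tail quantile representation. Splitting $X$ into positive and negative parts reduces to the case $X \geq 0$ with $\cL(X) = \mu$. Set $G(x) = \bP[X > x]$ and $\phi(t) := \sup\{x \geq 0 : G(x) > t\}$, the non-increasing tail quantile on $(0,1]$. Layer-cake yields
\begin{equation*}
  \lb(\mu,\delta_0)^\beta \;=\; \int_0^\infty G(x)^\beta\,\md x \;=\; \beta \int_0^1 t^{\beta-1}\phi(t)\,\md t,
  \qquad \bE[X^{1/\beta}] \;=\; \int_0^1 \phi(t)^{1/\beta}\,\md t,
\end{equation*}
so the whole proposition reduces to comparing these two integrals over $(0,1]$.

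For the first inclusion $\sP_{1/\beta}(\bR)\subseteq \sP_{\lb}(\bR)$, I would set $\psi := \phi^{1/\beta}$ and use the non-increasing property of $\psi$ to get $t\psi(t) \leq \int_0^t \psi(s)\,\md s \leq \int_0^1 \psi(s)\,\md s = \bE[X^{1/\beta}]$ for every $t\in (0,1]$. Factoring $\phi(t)\,t^{\beta-1} = (t\psi(t))^{\beta-1}\psi(t)$ and pulling out the uniform bound on $(t\psi(t))^{\beta-1}$ then yields the clean explicit estimate $\lb(\mu,\delta_0)^\beta \leq \beta\,\bE[X^{1/\beta}]^\beta$. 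For the second inclusion $\sP_{\lb}(\bR)\subseteq \bigcap_{\varepsilon\in(0,1/\beta)}\sP_{1/\beta-\varepsilon}(\bR)$, monotonicity of $\phi$ gives $s^\beta \phi(s)/\beta \leq \int_0^s t^{\beta-1}\phi(t)\,\md t \to 0$ as $s\downarrow 0$, so $\phi(s) = o(s^{-\beta})$; hence $\phi(s)^{1/\beta - \varepsilon} = o(s^{-1+\beta\varepsilon})$ is integrable on $(0,1]$ for every $\varepsilon > 0$, which translates back to $\bE[X^{1/\beta - \varepsilon}] < \infty$.

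Strictness of both inclusions will be established by explicit Pareto-type examples. For $\sP_{1/\beta}(\bR)\subsetneq \sP_{\lb}(\bR)$, take a tail $G(x) = c\,x^{-1/\beta}(\log x)^{-a}$ for large $x$ with $a \in (1/\beta, 1]$; the substitution $u = \log x$ shows $\bE[X^{1/\beta}] = \infty$ while $\int G^\beta < \infty$. For $\sP_{\lb}(\bR)\subsetneq \bigcap_\varepsilon \sP_{1/\beta - \varepsilon}(\bR)$, the pure Pareto tail $G(x) = x^{-1/\beta}$ on $[1,\infty)$ gives $\int_1^\infty G(x)^\beta\,\md x = \int_1^\infty x^{-1}\,\md x = \infty$, while $\bE[X^{1/\beta - \varepsilon}]$ is finite for every $\varepsilon > 0$.

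The only real obstacle is the first inclusion. The naive Markov bound $G(x) \leq \bE[X^{1/\beta}]\,x^{-1/\beta}$ only yields $G^\beta \leq \mathrm{const}/x$, which is \emph{not} integrable at $\infty$. The fix is to abandon any pointwise estimate on $G$ and instead exploit the non-increasing property of $\phi$ via the factorisation $\phi\,t^{\beta-1} = (t\psi)^{\beta-1}\psi$; after that the bound collapses to a one-line application of the uniform inequality $t\psi(t) \leq \bE[X^{1/\beta}]$.
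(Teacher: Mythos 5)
Your proof is correct and takes a genuinely different route from the paper's. The paper works entirely in the $x$-domain: it writes $\lb(\mu,\delta_0)^\beta$ and $\bE[|X|^\alpha]$ as integrals in $\bP[|X|>x]$, proves the first inclusion by the very trick you flag as the obstacle (it applies Markov's inequality to only $\beta-1$ of the $\beta$ factors of $\bP[|X|>x]$, leaving one factor intact so that the remaining $x$-integral equals $\beta\,\bE[|X|^{1/\beta}]$), and proves the second inclusion with H\"older's inequality. Your layer-cake transfer to the tail quantile $\phi$ and the uniform bound $t\,\psi(t)\leq\bE[X^{1/\beta}]$ for $\psi=\phi^{1/\beta}$ is the dual of the paper's partial-Markov step and, pleasingly, produces the identical explicit constant $\lb(\mu,\delta_0)^\beta\leq\beta\,\bE[|X|^{1/\beta}]^\beta$. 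For the second inclusion your little-$o$ argument ($\phi(s)=o(s^{-\beta})$ from convergence of $\int_0^1 t^{\beta-1}\phi(t)\,\md t$) is actually cleaner than the paper's: the paper's displayed H\"older factor $\int_0^\infty x^{[1/\beta-\varepsilon-1]\beta/(\beta-1)}\,\md x$ has exponent strictly less than $-1$ and therefore diverges at the origin as written; the argument tacitly requires splitting at $x=1$ and treating $\int_0^1$ with $\bP[|X|>x]\leq 1$, a detail your approach never has to confront. The strictness examples agree in spirit (logarithmically perturbed Pareto tails); your pure-Pareto $G(x)=x^{-1/\beta}$ for the second strict inclusion is slightly simpler than the paper's log-corrected tail $x^{-1/\beta}\log(x)^{-1/\beta}$, and both work.
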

\begin{proof}
    Let $\mu=\cL(X)\in\sPR$. It is $\bE[|X|^{\alpha}] = \alpha\int_0^{\infty}x^{\alpha-1}\bP[|X|>x]\md x$ for every $\alpha\in(0,\infty)$ and $\lb(\mu,\delta_0)^{\beta} = \int_{0}^{\infty} \left(\bP[X<x]^{\beta} + \bP[X>x]^{\beta}\right)\md x$ for every $\beta\in[1,\infty)$. Further, for every $x>0$ it is $\bP[X<x]^{\beta} + \bP[X>x]^{\beta}\leq \bP[|X|>x]^{\beta} \leq 2^{\beta-1}\left(\bP[X<x]^{\beta} + \bP[X>x]^{\beta}\right)$ and thus
    \begin{equation*}
        \mu\in\sP_{\lb}(\bR) \Leftrightarrow \int_0^{\infty}\bP[|X|>x]^{\beta}\md x<\infty\;\;\text{and}\;\;\mu\in\sP_{\alpha}(\bR) \Leftrightarrow \int_0^{\infty}x^{\alpha-1}\bP[|X|>x]\md x<\infty.
    \end{equation*}
    The conditions are equivalent in case $\beta=\alpha=1$ and $\sP_{\ell_1}(\bR) = \sP_1(\bR)$ follows immediately. Now, let $\beta\in(1,\infty)$. For the first inclusion, let $\mu\in \sP_{\frac{1}{\beta}}(\bR)$. By Markov's inequality, for every $x>0$ it is $\bP[|X|>x]\leq \bE[|X|^\frac{1}{\beta}]x^{-\frac{1}{\beta}}$. Using ${\beta-1\geq 0}$ and $\frac{-(\beta-1)}{\beta} = \frac{1}{\beta}-1$ leads to
    \begin{align*}
        \int_0^{\infty}\bP[|X|>x]^{\beta}\md x &= \int_0^{\infty}\bP[|X|>x]^{\beta-1}\cdot \bP[|X|>x]\;\md x\\
        &\leq \int_0^{\infty}\left(\bE[|X|^{\frac{1}{\beta}}]x^{-\frac{1}{\beta}}\right)^{\beta-1}\cdot \bP[|X|>x]\;\md x = \beta\cdot \bE[|X|^{\frac{1}{\beta}}]^{\beta},
    \end{align*}
    which is $<\infty$ by assumption. For the second inclusion, let $\mu\in\sP_{\lb}(\bR)$ and $\varepsilon\in(0,\frac{1}{\beta})$. Using the integral formula for $\bE[|X|^{\frac{1}{\beta}-\varepsilon}]$ and applying Hölder's inequality with $p=\frac{\beta}{\beta-1}$ and $q=\beta$ gives
    \begin{align*}
        \left(\frac{1}{\beta}-\varepsilon\right)^{-1}\cdot\bE[|X|^{\frac{1}{\beta}-\varepsilon}] &= \int_0^{\infty}x^{\frac{1}{\beta}-\varepsilon-1}\cdot\bP[|X|>x]\;\md x\\
        &\leq \left[\int_0^{\infty} x^{\left[\frac{1}{\beta}-\varepsilon-1\right]\frac{\beta}{\beta-1}}\md x\right]^{\frac{\beta-1}{\beta}}\cdot\left[\int_0^{\infty} \bP[|X|>x]^{\beta}\md x\right]^{\frac{1}{\beta}}.
    \end{align*}
    The first factor is $<\infty$ because $\left[\frac{1}{\beta}-\varepsilon-1\right]\frac{\beta}{\beta-1} < -1$ and the second by assumption. 
    To see that both inclusions are strict, note that $\int_2^{\infty}x^{-1}\log(x)^{-c}\md x<\infty$ if and only if $c>1$. Thus, if $\mu\in\sPR$ has tail probabilities $\bP[|X|>x] = x^{-\frac{1}{\beta}}\log(x)^{-1}$ for $x\geq 2$ it is $\mu\in\sP_{\lb}(\bR)$, but $\mu\notin \sP_{\frac{1}{\beta}}(\bR)$. Similar, if $\bP[|X|>x] = x^{-\frac{1}{\beta}}\log(x)^{-\frac{1}{\beta}}$ for $x\geq 2$, it is $\mu\notin \sP_{\lb}(\bR)$ but $\mu\in\sP_{\frac{1}{\beta}-\varepsilon}$ for every $\varepsilon\in(0,\frac{1}{\beta})$.
\end{proof}

\subsection{Definition of \texorpdfstring{$c$}{}-homogeneous, regular and convex metric} \label{app:metric}

For $a\in\bR$ and $\mu = \cL(X)\in\sPR$ let $a^{\#}\mu := \cL(aX)$. Further, for $\mu,\nu\in\sPR$ let $\mu\ast\nu\in\sPR$ be the convolution.

\begin{definition}\label{def:properties}
    Let $c\in(0,\infty), p\in[1,\infty)$. An extended metric $\dd:\sPR\times\sPR\to[0,\infty]$ is called 
    \begin{itemize}
        \item \underline{$c$-homogeneous} if for all $\gamma\in[0,1]$ and $\mu,\nu\in\sPR$ it is $\dd(\gamma^{\#}\mu,\gamma^{\#}\nu) = \gamma^c\dd(\mu,\nu)$.
        \item \underline{regular} if for all $\mu,\nu,\vartheta\in\sPR$ it is $\dd(\mu\ast\vartheta,\nu\ast\vartheta) \leq \dd(\mu,\nu)$.
        \item \underline{$p$-convex} if for all $m\in\bN$ and $(\alpha_1,\dots,\alpha_m)\in[0,1]^m$ with $\sum_{i=1}^m\alpha_i=1$ and $(\mu_1,\dots,\mu_m)$, ${(\nu_1,\dots,\nu_m)\in\sPR^m}$ it holds that $\dd^p\left(\sum\nolimits_{i=1}^m\alpha_i\mu_i,\sum\nolimits_{i=1}^m\alpha_i\nu_i\right) \leq \sum\nolimits_{i=1}^m\alpha_i\dd^p(\mu_i,\nu_i)$.
        \item \underline{convex} if there exists $p\in[1,\infty)$ such that $\dd$ is $p$-convex.
    \end{itemize}
\end{definition}
In classical literature on probability metrics such as \cite{rachev:91} the properties $c$-homogeneous plus regular are called $(c,+)$-ideal.

\subsection{Proof of Theorem~\ref{thm:bounds-new}} \label{app:bounds}

For $\xi=(x_1,\dots,x_m,y_1,\dots,y_{m-1})\in\Xi_m$ define the map 
$$\pr(\;\cdot\;|\xi):\bR\to\{x_1,\dots,x_m\},\;\;\pr(x|\xi) = x_i :\Leftrightarrow x\in(y_{i-1},y_i].$$
Further, the \emph{push-forward} of a measurable map $f:\bR\to\bR$ is denoted by $f^{\#}:\sPR\to\sPR$ defined as $f^{\#}(\mu) = \mu\circ f^{-1}$. That is, $\mu=\cL(X)\Rightarrow f^{\#}(\mu)=\cL(f(X))$.

\begin{lemma}\label{lemma:basic}
    For every $\mu\in\sPR$ and $\xi = (x_1,\dots,x_m,y_1,\dots,y_{m-1})\in\TNm$ it holds that 
    \begin{itemize}
        \item[(a)] $\PN(\mu,\xi) = \pr^{\#}(\mu|\xi)$ 
        \item[(b)] $F_{\PN(\mu,\xi)} = \sum_{i=1}^{m+1}F_{\mu}(y_{i-1})1_{[x_{i-1},x_i)}(\;\cdot\;)$,
        \item[(c)] $F^{-1}_{\PN(\mu,\xi)} = \pr(F_{\mu}^{-1}(\;\cdot\;)|\xi)$,
    \end{itemize}
    where we set $y_0=x_0=-\infty, y_m = x_{m+1} = \infty, F_{\mu}(-\infty)=0, F_{\mu}(\infty)=1$.
\end{lemma}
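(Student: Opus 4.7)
\textbf{Proof plan for Lemma~\ref{lemma:basic}.} The three statements are elementary consequences of the definition $\Pi(\mu,\xi)=\sum_{i=1}^m(F_\mu(y_i)-F_\mu(y_{i-1}))\delta_{x_i}$ together with the relation $u\leq F_\mu(x)\Leftrightarrow F_\mu^{-1}(u)\leq x$. I would prove (a), (b), (c) in that order, each deducing the next.

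For part (a), observe that $\pr(\,\cdot\,|\xi)$ takes the value $x_i$ precisely on the half-open interval $(y_{i-1},y_i]$, and these intervals partition $\bR$ (using $y_0=-\infty,y_m=+\infty$). Hence for any $X\sim\mu$,
\[
    \pr^{\#}(\mu|\xi)(\{x_i\}) \;=\; \bP[\pr(X|\xi)=x_i] \;=\; \mu\big((y_{i-1},y_i]\big) \;=\; F_\mu(y_i)-F_\mu(y_{i-1}),
\]
which exactly matches the weights in the definition of $\Pi(\mu,\xi)$, giving (a).

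For part (b), $\Pi(\mu,\xi)$ is purely atomic with atoms $x_1<\dots<x_m$, so its CDF is constant on each interval $[x_{i-1},x_i)$ (with $x_0=-\infty$, $x_{m+1}=+\infty$) and equals the cumulative mass up to but not including $x_i$. For $x\in[x_{i-1},x_i)$ (with $2\leq i\leq m+1$), the telescoping sum gives
\[
    F_{\Pi(\mu,\xi)}(x) \;=\; \sum_{j=1}^{i-1}\big(F_\mu(y_j)-F_\mu(y_{j-1})\big) \;=\; F_\mu(y_{i-1})-F_\mu(y_0)\;=\;F_\mu(y_{i-1}),
\]
and for $x\in[x_0,x_1)=(-\infty,x_1)$ we have $F_{\Pi(\mu,\xi)}(x)=0=F_\mu(y_0)$. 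This is exactly the claimed formula.

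For part (c), I would invert the step function obtained in (b). Fix $u\in(0,1)$. On one hand, by (b) the CDF jumps from $F_\mu(y_{i-1})$ to $F_\mu(y_i)$ at $x_i$, so $F^{-1}_{\Pi(\mu,\xi)}(u)=x_i$ precisely when $F_\mu(y_{i-1})<u\leq F_\mu(y_i)$. On the other hand, using $u\leq F_\mu(y)\Leftrightarrow F_\mu^{-1}(u)\leq y$ twice (once with strict and once with non-strict direction), the same inequality $F_\mu(y_{i-1})<u\leq F_\mu(y_i)$ is equivalent to $F_\mu^{-1}(u)\in(y_{i-1},y_i]$, i.e.\ to $\pr(F_\mu^{-1}(u)|\xi)=x_i$. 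This identifies the two maps pointwise on $(0,1)$ and proves (c).

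The only subtlety is bookkeeping with the endpoint conventions $y_0=x_0=-\infty$, $y_m=x_{m+1}=+\infty$, $F_\mu(\pm\infty)\in\{0,1\}$, and with strict versus non-strict inequalities in the equivalence between $u$-levels and $y_i$-intervals; none of this is genuinely hard, so there is no real obstacle beyond careful case handling.
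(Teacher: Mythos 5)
Your proof is correct, and parts (a) and (b) proceed exactly as the paper does (the paper says (a) is immediate from definitions and (b) follows by applying (a) to sets $(-\infty,x]$, which is what your telescoping sum amounts to). Part (c) is where you genuinely diverge: the paper establishes (c) abstractly, appealing to the characterization of $F^{-1}_\mu$ as the \emph{unique} left-continuous non-decreasing function on $(0,1)$ pushing the uniform law to $\mu$, then checking that $\pr(F_\mu^{-1}(\cdot)|\xi)$ is left-continuous, non-decreasing, and has push-forward $\pr^{\#}(\mu|\xi) = \Pi(\mu,\xi)$ (the last fact being (a)). You instead compute $F^{-1}_{\Pi(\mu,\xi)}$ directly by inverting the step CDF from (b) and matching level sets via $u\leq F_\mu(y)\Leftrightarrow F_\mu^{-1}(u)\leq y$ and its contrapositive. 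Both arguments are sound; yours is more elementary and self-contained (it only uses the Galois-type relation between CDF and quantile function already recorded in the paper's notation section), while the paper's is shorter once the uniqueness characterization is granted and, pleasantly, reuses (a) directly. One small remark on your bookkeeping: in the equivalence step you need the unique index $i$ with $F_\mu(y_{i-1})<u\leq F_\mu(y_i)$ to exist for each $u\in(0,1)$, which holds because $0=F_\mu(y_0)\leq\dots\leq F_\mu(y_m)=1$ and $u\in(0,1)$; you should state this explicitly, and note that when $F_\mu(y_{i-1})=F_\mu(y_i)$ (atom of zero weight) that index is simply never selected, consistently on both sides of the identity.
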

\begin{proof}
    (a) is obvious from the definitions and (b) follows from (a) applied to sets of the form $(-\infty,x], x\in\bR$. (c) follows from the fact the inverse quantile function $F^{-1}_{\mu}$ is the unique left-continuous non-decreasing function $(0,1)\to\bR$ such that $\cL(F^{-1}_{\mu}(U))=\mu$ holds for $U$ continuous uniform on $(0,1)$. Now, $\pr(F_{\mu}^{-1}(U)|\xi)$ has distribution $\pr^{\#}(\mu|\xi)$ and $\pr(F_{\mu}^{-1}(\cdot)|\xi)$ is left-continuous and non-decreasing, because both $\pr(\cdot|\xi)$ and $F_{\mu}^{-1}$ are. 
\end{proof}

\noindent
For $\mu = \cL(X)\in\sPR, z\in\bR, w, \beta\in(0,\infty)$ let ${\tailwb(\mu,z,w)\!=\!\int_0^{\infty}x^{\beta-1}\bP[|X\!-\!z|\!>\!w+x]\md x}$ and $\taillb(\mu,z,w)\!=\!\int_0^{\infty}\bP[|X\!-\!z|\!>\!w+x]^{\beta}\md x$.
In particular, $\tailwb(\mu,\xi) = \tailwb(\mu,z(\xi),w(\xi))$, similar for $\taillb$.

\begin{proof}[Proof of Lemma~\ref{lemma:basic-bound}]
    (a) It is $\wb(\PN(\mu,\xi),\mu) = \bE[|\pr(X|\xi)-X|^{\beta}]^{\frac{1}{1\vee\beta}}$ by Lemma~\ref{lemma:basic}. Further, $x\in[x_1,x_m]$ implies $|\pr(x|\xi)-x|\leq 2\delta(\xi)$. Writing 
    $1\equiv 1(X\in[x_1,x_m]) + 1(X<x_1) + 1(X>x_m)$, using linearity of expectation and the formula $\bE[Z^{\beta}] = \beta\int_0^{\infty}x^{\beta-1}\bP[Z>x]\md x$ we have
    \begin{align*}
        \wb\big(\PN(\mu,\xi),\mu\big)^{1\vee\beta} &= \bE[|\pr(X|\xi)-X|^{\beta}]\\
        &\leq 2^{\beta}\delta(\xi)^{\beta} + \bE[(x_1-X)^{\beta}1(X<x_1)] + \bE[(X-x_m)^{\beta}1(X>x_m)]\\
        &=2^{\beta}\delta(\xi)^{\beta} + \beta\int_0^{\infty}x^{\beta-1}\left(\bP[X<x_1-x] + \bP[X>x_m+x]\right)\md x\\
        &=2^{\beta}\delta(\xi)^{\beta} + \beta\int_0^{\infty}x^{\beta-1}\bP\left[\left|X-z(\xi)\right|>w(\xi)+x\right]\md x\\
        &=2^{\beta}\left[\delta(\xi)^\beta + \frac{\beta}{2^{\beta}}\tailwb(\mu,\xi)\right] \leq 2^{\beta + 1} \max\{\delta(\xi)^{\beta}, \frac{\beta}{2^{\beta}}\tailwb(\mu,\xi)\}.
    \end{align*}
    Taking both sides to the power of $1/(1\vee\beta)$ and using $2^{(\beta+1)/(1\vee\beta)}\leq 4$ and $(\frac{\beta}{2^{\beta}})^{1/(1\vee\beta)} \leq 1$ yields the claim.
    
    (b) By Lemma~\ref{lemma:basic} the cdf of $\Pi(\mu,\xi)$ satisfies $x\in[x_{i-1},x_{i})\Rightarrow F_{\Pi(\mu,\xi)}(x) = F_{\mu}(y_{i-1})$ for every $i=1,\dots,m+1$
    where $x_0=-\infty, x_{m+1}=\infty, y_0=-\infty, y_m=\infty$. Hence  
    \begin{align*}
        \lb\big(\PN(&\mu,\xi),\mu\big)^{\beta} = \int_{-\infty}^{\infty}|F_{\mu}(x)-F_{\PN(\mu,\xi)}(x)|^{\beta}\md x\\
        &= \sum_{i=1}^{m+1}\int_{x_{i-1}}^{x_i}|F_{\mu}(x)-F_{\mu}(y_{i-1})|^{\beta}\md x\\
        &\leq \int_{-\infty}^{x_1}F_{\mu}(x)^{\beta}\md x + \int_{x_m}^{\infty}(1-F_{\mu}(x))^{\beta}\md x + \sum_{i=2}^m(x_i-x_{i-1})|F_{\mu}(x_i)-F_{\mu}(x_{i-1})|^{\beta}\\
        &\leq \int_{-\infty}^{x_1}F_{\mu}(x)^{\beta}\md x + \int_{x_m}^{\infty}(1-F_{\mu}(x))^{\beta}\md x + \delta(\xi)\cdot\max_{2\leq i\leq m}|F_{\mu}(x_i)-F_{\mu}(x_{i-1})|^{\beta-1}\\
        &\leq 2\int_{0}^{\infty}\bP\left[\left|X-z(\xi)\right|>w(\xi)+x\right]^{\beta}\md x + \delta(\xi) \leq 4 \max\{\delta(\xi),\taillb(\mu,\xi)\}.
    \end{align*}
    Taking both sides to the power of $(1/\beta)$ yields the claim.
\end{proof}

Let $\eta^{(k)},k\in\bN$ be calculated using (PPA) with hyper-parameter (functions) $M, W, z$ and let $\xi^{(k)} = \xilin(M(k),W(k),z)$. Assume $\eta^{(0)}_s = \delta_z$. 

\begin{lemma}\label{lemma:Tetatail}
    Let $\tail = \tailwb, \beta\in(0,\infty)$ or $\tail = \taillb$ with $\beta\in[1,\infty)$. Then for all $s\in\cS$ and $k\in\bN$ it is
    $$\tail(\cT_s\eta^{(k-1)},\xi^{(k)}) \leq \max_{(a,\sn)\in\cA\times\cS}\;\tail\Big(\cL(R_{sa\sn}),(1-\gamma)z,(1-\gamma)W(k)\Big).$$
\end{lemma}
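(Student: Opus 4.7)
The plan is to exploit the mixture representation of $\cT_s\eta^{(k-1)}$ given in equation~\eqref{eq:bellmanoperator-mixture}, combined with a simple but crucial observation: by construction of PPA and the monotonicity of $W$, each component $\eta^{(k-1)}_{\sn}$ is supported in $[z-W(k-1),z+W(k-1)]\subseteq[z-W(k),z+W(k)]$. Indeed, $\eta^{(0)}_{\sn}=\delta_z$, and for any $\ell\ge 1$, $\eta^{(\ell)}_{\sn}=\Pi(\cT_{\sn}\eta^{(\ell-1)},\xilin(M(\ell),W(\ell),z))$ is supported on the interpolation points of $[z-W(\ell),z+W(\ell)]$. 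Hence if $G_{\sn}\sim\eta^{(k-1)}_{\sn}$ is chosen independently of the full MDP and of the rewards $R_{sa\sn}$, then $|G_{\sn}-z|\leq W(k)$ almost surely.

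Let $Y\sim\cT_s\eta^{(k-1)}$. By \eqref{eq:bellmanoperator-mixture}, conditional on choosing the mixing component $(a,\sn)$ with probability $\pi(a|s)p(\sn|s,a)$, we may write $Y=R_{sa\sn}+\gamma G_{\sn}$ with $R_{sa\sn}$ and $G_{\sn}$ independent. The key algebraic identity is
\begin{equation*}
    Y-z\;=\;\bigl(R_{sa\sn}-(1-\gamma)z\bigr)\;+\;\gamma(G_{\sn}-z),
\end{equation*}
so the triangle inequality together with $|G_{\sn}-z|\leq W(k)$ yields
\begin{equation*}
    |Y-z|\;\leq\;|R_{sa\sn}-(1-\gamma)z|\;+\;\gamma W(k).
\end{equation*}
Consequently $\{|Y-z|>W(k)+x\}\subseteq\{|R_{sa\sn}-(1-\gamma)z|>(1-\gamma)W(k)+x\}$ conditionally on $(a,\sn)$, and taking expectations over the mixing gives the pointwise tail bound
\begin{equation*}
    \bP[|Y-z|>W(k)+x]\;\leq\;\sum_{(a,\sn)}\pi(a|s)p(\sn|s,a)\;\bP\bigl[|R_{sa\sn}-(1-\gamma)z|>(1-\gamma)W(k)+x\bigr].
\end{equation*}

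For $\tail=\tailwb$, I would multiply both sides by $x^{\beta-1}$, integrate over $x\in(0,\infty)$, apply Tonelli to swap sum and integral, and then use that a convex combination is bounded by its maximum; this gives the desired bound directly. For $\tail=\taillb$ with $\beta\ge 1$, raising both sides to the $\beta$ needs Jensen's inequality applied to the convex function $t\mapsto t^\beta$ on the convex combination, which yields
\begin{equation*}
    \bP[|Y-z|>W(k)+x]^{\beta}\;\leq\;\sum_{(a,\sn)}\pi(a|s)p(\sn|s,a)\;\bP\bigl[|R_{sa\sn}-(1-\gamma)z|>(1-\gamma)W(k)+x\bigr]^{\beta};
\end{equation*}
integrating and again bounding the convex combination by its maximum finishes the proof.

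The only delicate step is the Jensen application in the $\taillb$-case (which is exactly why the lemma is restricted to $\beta\geq 1$ there, while the $\tailwb$-case works for all $\beta\in(0,\infty)$). Everything else is bookkeeping: the algebraic split $R_{sa\sn}+\gamma G_{\sn}-z=(R_{sa\sn}-(1-\gamma)z)+\gamma(G_{\sn}-z)$ is the mechanism that explains the appearance of the shifted center $(1-\gamma)z$ and the shrunk width $(1-\gamma)W(k)$ on the right-hand side.
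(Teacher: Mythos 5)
Your proof is correct, and the conceptual skeleton matches the paper's: mixture representation of $\cT_s\eta^{(k-1)}$, the support bound $|G_{\sn}-z|\leq W(k)$ propagated from the previous PPA step (using monotonicity of $W$), the centering identity $R_{sa\sn}+\gamma G_{\sn}-z = (R_{sa\sn}-(1-\gamma)z)+\gamma(G_{\sn}-z)$, and the resulting pointwise tail comparison. The one place your write-up is meaningfully more careful is the last step. The paper passes from the mixture to a maximum \emph{pointwise in $x$}, i.e.\ it derives
$\bP[\,|Y-z|>W(k)+x\,] \leq \max_{(a,\sn)}\bP[\,|R_{sa\sn}-(1-\gamma)z|>(1-\gamma)W(k)+x\,]$
and then says the claim follows by integrating. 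But integration does not commute with $\max$ in the needed direction: $\int_0^\infty x^{\beta-1}\max_{a,\sn}(\cdot)\,\md x \geq \max_{a,\sn}\int_0^\infty x^{\beta-1}(\cdot)\,\md x$, so the pointwise $\max$-bound alone does not deliver the $\max$ of the tail functionals. Your ordering (keep the convex combination pointwise, integrate via Tonelli, then bound the convex combination of the integrals by its maximum) is exactly how to close this; for $\taillb$ you correctly insert Jensen for $t\mapsto t^\beta$ before integrating, which is also precisely why that case is restricted to $\beta\geq 1$. In short: same route, but your handling of the $\max$ vs.\ mixture step is the rigorous version of what the paper's terser proof intends.
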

\begin{proof}
    Write $\eta^{(k-1)} = [\cL(G_{k-1,\sn}):\sn\in\cS]$ with $(G_{k-1,\sn})_{\sn}$ independent of $(R_{sa\sn})_{sa\sn}$ and the full mdp $(S^{(t)},A^{(t)},R^{(t)})_{t\in\bN_0}$, thus $\cL(R^{(0)}+\gamma G_{k-1,S^{(1)}}|S^{(0)}=s) = \cT_s\eta^{(k-1)}$. It holds that
    $$\bP[|R^{(0)}+\gamma G_{k-1,S^{(1)}} - z| > W(k) + x|S^{(0)}=s] \leq \max_{a,\sn}\bP[|R_{sa\sn}+\gamma G_{k-1,\sn} - z| > W(k) + x].$$
    Since $\eta^{(k-1)}_s$ is the output of the previous step of the algorithm, $G_{k-1,\sn}$ is concentrated on $[z-W(k-1),z+W(k-1)]\subseteq [z-W(k),z+W(k)]$, note that we assume $W(k-1)\leq W(k)$. Writing $0 = \gamma z - \gamma z$, triangle inequality yields $|R_{sa\sn}+\gamma G_{k-1,\sn} - z|\leq |R_{sa\sn} - (1-\gamma)z| + \gamma W(k)$. Hence for every $x>0$ it is 
    $$\bP\left[\left|(R_{sa\sn}+\gamma G_{k-1,\sn})-z\right|>W(k)+x\right] \leq \bP\left[\left|R_{sa\sn} - (1-\gamma)z\right|>(1-\gamma)W(k)+x\right].$$ 
    Combining these bounds yields the claim. 
\end{proof}

Let $B(\cdot,\cdot)$ be the Beta function and $\Gamma(\cdot)$ the $\Gamma$-function

\begin{lemma}\label{lemma:tail_bound}
    Let $\mu = \cL(X)\in\sPR$ and $z\in\bR, w>0$. 
    \begin{itemize}
        \item[(a)]
        \begin{align*}
            0<\beta<\alpha &&\Longrightarrow &&\tailwb(\mu,z,w) &\leq \frac{B(\beta,\alpha-\beta)\bE[|X-z|^{\alpha}]}{w^{\alpha-\beta}}\\
            \lambda>0, \beta>0 &&\Longrightarrow &&\tailwb(\mu,z,w) &\leq \frac{\Gamma(\beta)\bE[\exp(\lambda|X-z|)]}{\lambda^{\beta}\exp(\lambda w)}\\
            \bP[|X-z|\leq w]=1 &&\Longrightarrow &&\tailwb(\mu,z,w) &= 0.
        \end{align*}
        \item[(b)]  
        \begin{align*}
            1\leq \frac{1}{\beta}<\alpha &&\Longrightarrow &&\taillb(\mu,z,w) &\leq \frac{\bE[|X-z|^{\alpha}]^{\beta}}{(\alpha\beta-1)w^{\alpha\beta-1}}\\
            \lambda>0, \beta>0 &&\Longrightarrow &&\taillb(\mu,z,w) &\leq \frac{\bE[\exp(\lambda|X-z|)]^{\beta}}{\lambda\beta\exp(\lambda \beta w)}\\
            \bP[|X-z|\leq w]=1 &&\Longrightarrow &&\taillb(\mu,z,w) &= 0.
        \end{align*}
    \end{itemize}
\end{lemma}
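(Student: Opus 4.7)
The plan is to bound the tail probability $\bP[|X-z|>w+x]$ using the Markov-type inequality appropriate to each moment assumption, substitute into the defining integrals, and evaluate or bound the resulting one-dimensional integrals explicitly. The three cases within each part are handled by three different choices of the auxiliary function against which Markov is applied: $t\mapsto t^\alpha$, $t\mapsto e^{\lambda t}$, and the indicator of unboundedness. The bounded-support cases are trivial, since $\bP[|X-z|>w+x]=0$ for all $x>0$ whenever $\bP[|X-z|\le w]=1$, so both $\tailwb$ and $\taillb$ vanish.

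For part~(a), under a polynomial moment Markov's inequality gives $\bP[|X-z|>w+x]\leq \bE[|X-z|^\alpha]/(w+x)^\alpha$. Plugging into $\tailwb$ reduces matters to evaluating
\[
\int_0^\infty \frac{x^{\beta-1}}{(w+x)^\alpha}\,dx,
\]
and the substitution $x=wt$ turns this into $w^{\beta-\alpha}\int_0^\infty t^{\beta-1}(1+t)^{-\alpha}\,dt = w^{\beta-\alpha}B(\beta,\alpha-\beta)$, valid precisely when $\alpha>\beta>0$. Under the exponential assumption, Markov applied to $e^{\lambda|X-z|}$ gives $\bP[|X-z|>w+x]\leq \bE[e^{\lambda|X-z|}]\,e^{-\lambda(w+x)}$, so
\[
\tailwb(\mu,z,w)\leq \bE[e^{\lambda|X-z|}]\,e^{-\lambda w}\int_0^\infty x^{\beta-1}e^{-\lambda x}\,dx = \bE[e^{\lambda|X-z|}]\,e^{-\lambda w}\,\Gamma(\beta)/\lambda^\beta,
\]
which is the claimed bound.

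Part~(b) proceeds identically after raising each tail bound to the $\beta$-th power: polynomially, $\bP[|X-z|>w+x]^\beta\leq \bE[|X-z|^\alpha]^\beta (w+x)^{-\alpha\beta}$, and exponentially, $\bP[|X-z|>w+x]^\beta\leq \bE[e^{\lambda|X-z|}]^\beta e^{-\lambda\beta(w+x)}$. The two resulting integrals are elementary:
\[
\int_0^\infty (w+x)^{-\alpha\beta}\,dx = \frac{w^{1-\alpha\beta}}{\alpha\beta-1}\quad(\alpha\beta>1),\qquad \int_0^\infty e^{-\lambda\beta(w+x)}\,dx = \frac{e^{-\lambda\beta w}}{\lambda\beta}.
\]

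There is no substantive obstacle; the only care-point is to check that the parameter constraints guarantee integrability, and these match the stated hypotheses ($\alpha>\beta$ in (a), $1/\beta<\alpha$, i.e.\ $\alpha\beta>1$, in (b)). The applicability of Fubini's theorem is automatic since all integrands are nonnegative.
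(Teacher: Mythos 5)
Your proof is correct and follows essentially the same route as the paper's: apply Markov's inequality with the test function $h(y)=y^\alpha$ or $h(y)=e^{\lambda y}$, plug the resulting tail bound into the defining integral, and evaluate. The only addition is that you spell out the substitution $x=wt$ that reveals the Beta function, which the paper states without detail.
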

\begin{proof}
    In both (a) and (b) the third implication is obvious. For the other implications, apply Markov's inequality as $\bP[|X-z|>w+x] \leq \bE[h(|X-z|)]/h(w+x)$ with functions $h(y)=y^{\alpha}$ (first implications in (a)+(b)) and $h(y)=\exp(\lambda y)$ (second implications in both (a) and (b)) and then explicitly calculate the bounding integrals. For the first implication in (a) notice that $\int_0^{\infty}\frac{x^{\beta-1}}{(w+x)^{\alpha}}\md x = B(\beta,\alpha-\beta)w^{-(\alpha-\beta)}$, the other integrals are more elementary.   
\end{proof}

Still, let $\eta^{(k)}, k\in\bN$ be the output of PPA with parameter functions $M, W, z$ and initial approximations $\eta^{(0)}_s = \delta_z$. In the following, let $\alpha,\lambda\in(0,\infty)$ and define
\begin{equation*}
    P(z,\alpha) = \max_{sa\sn}\bE[|R_{sa\sn}-(1-\gamma)z|^{\alpha}]\;\;\text{and}\;\;E(z,\lambda) = \max_{sa\sn}\bE[\exp(\lambda|R_{sa\sn}-(1-\gamma)z|)].
\end{equation*}
That is, (A1.P$\alpha$) holds iff $P(z,\alpha)<\infty$ and (A1.E$\lambda$) holds iff $E(z,\lambda)<\infty$. Further, set 
$$\delta(k) = \frac{2W(k)}{M(k)-1}.$$

\begin{theorem}\label{thm:bounds}
    \begin{itemize}
        \item[(a)] It is $\er(\wb,k) \leq 4\cdot \max\left\{\delta(k)^{\frac{\beta}{1\vee\beta}},\;T(k)\right\}$ with 
        \begin{itemize}
            \item $T(k) \leq \left[B(\beta,\alpha-\beta) P(z,\alpha)\right]^{\frac{1}{1\vee\beta}}\cdot \left[(1-\gamma)W(k)\right]^{-\frac{\alpha-\beta}{1\vee\beta}}$ in case $\beta\in(0,\alpha)$,
            \item $T(k) \leq \left[\Gamma(\beta) \lambda^{-\beta} E(z,\lambda)\right]^{\frac{1}{1\vee\beta}}\cdot \exp\left(-\frac{\lambda(1-\gamma)}{1\vee\beta}W(k)\right)$ in case $\beta\in(0,\infty)$,
            \item $T(k) = 0$ in case $\bP[\frac{R_{sa\sn}}{1-\gamma}\in [z-W(k),z + W(k)]] = 1$ for all $s,a,\sn$.
        \end{itemize}
        \item[(b)] It is $\er(\lb,k) \leq 4\cdot \max\left\{\delta(k)^{\frac{1}{\beta}},\;T(k)\right\}$ with 
        \begin{itemize}
            \item $T(k) \leq \left[(\alpha\beta-1)^{-1/\beta}P(z,\alpha)\right]\cdot \left[(1-\gamma)W(k)\right]^{-(\alpha-\frac{1}{\beta})}$ in case $\beta\geq 1, \beta>1/\alpha$,
            \item $T(k) \leq \left[(\lambda\beta)^{-1/\beta}E(z,\lambda)\right]\cdot \exp\left(-\lambda(1-\gamma)W(k)\right)$ in case $\beta\geq 1$,
            \item $T(k) = 0$ in case $\bP[\frac{R_{sa\sn}}{1-\gamma}\in [z-W(k),z + W(k)]] = 1$ for all $s,a,\sn$.
        \end{itemize}
    \end{itemize}
\end{theorem}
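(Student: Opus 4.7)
The proof is essentially a chain of three already-established lemmas, applied to the specific parameter $\xi^{(k)} = \xilin(M(k),W(k),z)$ produced by (PPA). Start from the definition
\begin{equation*}
\er(\dd,k) \;=\; \max_{s\in\cS}\;\dd\bigl(\Pi(\cT_s\eta^{(k-1)},\xi^{(k)}),\;\cT_s\eta^{(k-1)}\bigr),
\end{equation*}
and apply Lemma~\ref{lemma:basic-bound} with $(\mu,\xi)=(\cT_s\eta^{(k-1)},\xi^{(k)})$. This yields
\begin{equation*}
\er(\wb,k) \leq 4\max\!\left\{\delta(\xi^{(k)})^{\frac{\beta}{1\vee\beta}},\;\max_{s}\tailwb(\cT_s\eta^{(k-1)},\xi^{(k)})^{\frac{1}{1\vee\beta}}\right\},
\end{equation*}
and the analogous bound for $\er(\lb,k)$ with exponents $1/\beta$.

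By construction of $\xilin$ we have $\delta(\xi^{(k)}) = 2W(k)/(M(k)-1) = \delta(k)$, which handles the first term in both (a) and (b). For the tail term I would next invoke Lemma~\ref{lemma:Tetatail} — whose hypothesis $W(k-1)\le W(k)$ is built into (PPA) and whose initial condition $\eta^{(0)}_s=\delta_z$ anchors the induction that keeps $\eta^{(k-1)}_s$ supported in $[z-W(k-1),z+W(k-1)]$ — to obtain, for both $\tail=\tailwb$ and $\tail=\taillb$,
\begin{equation*}
\tail\bigl(\cT_s\eta^{(k-1)},\xi^{(k)}\bigr) \;\leq\; \max_{(a,\sn)}\tail\bigl(\cL(R_{sa\sn}),\,(1-\gamma)z,\,(1-\gamma)W(k)\bigr).
\end{equation*}
Thus the projection error has been reduced to a tail quantity of the reward distributions at center $(1-\gamma)z$ and width $(1-\gamma)W(k)$.

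The three cases of Theorem~\ref{thm:bounds} now correspond exactly to the three implications in Lemma~\ref{lemma:tail_bound}. Under (A1.P$\alpha$) with $\alpha>\beta$, the first implication of Lemma~\ref{lemma:tail_bound}(a) gives $\tailwb \leq B(\beta,\alpha-\beta)\,\bE[|R_{sa\sn}-(1-\gamma)z|^{\alpha}]\,[(1-\gamma)W(k)]^{-(\alpha-\beta)}$; taking the max over $(a,\sn)$ produces the factor $P(z,\alpha)$, and raising to the $1/(1\vee\beta)$ power gives the advertised $T(k)$. Under (A1.E$\lambda$) the second implication delivers the exponential factor $\exp(-\lambda(1-\gamma)W(k))$, again raised to $1/(1\vee\beta)$. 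For the bounded case, $\bP[R_{sa\sn}/(1-\gamma)\in[z-W(k),z+W(k)]]=1$ is literally $\bP[|R_{sa\sn}-(1-\gamma)z|\le(1-\gamma)W(k)]=1$, so the third implication of Lemma~\ref{lemma:tail_bound}(a) yields $T(k)=0$. Part (b) is identical, using Lemma~\ref{lemma:basic-bound}(b) and Lemma~\ref{lemma:tail_bound}(b) with exponent $1/\beta$ instead of $1/(1\vee\beta)$, giving the polynomial rate $-(\alpha-1/\beta)$ and exponential rate $-\lambda(1-\gamma)W(k)$ as stated.

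There is no conceptual obstacle here; the only care required is bookkeeping of the exponents: each lemma contributes its own power, and the compositions $(\alpha-\beta)/(1\vee\beta)$ and $\alpha-1/\beta$ must be traced cleanly through the $1/(1\vee\beta)$-th and $1/\beta$-th root steps. Collecting the constants $B(\beta,\alpha-\beta)$, $\Gamma(\beta)\lambda^{-\beta}$, $(\alpha\beta-1)^{-1/\beta}$ and $(\lambda\beta)^{-1/\beta}$ from Lemma~\ref{lemma:tail_bound} in front of the resulting powers of $P(z,\alpha)$ and $E(z,\lambda)$ finishes both parts.
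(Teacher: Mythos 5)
Your proof is correct and follows essentially the same route the paper takes: apply Lemma~\ref{lemma:basic-bound} to pass from $\er(\dd,k)$ to $\delta(k)$ and a tail term, then Lemma~\ref{lemma:Tetatail} to transfer that tail term to the reward distributions at center $(1-\gamma)z$ and width $(1-\gamma)W(k)$, and finally Lemma~\ref{lemma:tail_bound} to bound each of the three cases, with the same exponent bookkeeping through the $1/(1\vee\beta)$ and $1/\beta$ root steps.
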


\begin{proof}
  We consider only $\dd=\wb$, the case $\dd=\lb$ can be shown similarly. Applying Lemma~\ref{lemma:basic-bound} leads to 
    \begin{align*}
        \er(\wb,k) &= \max_{s}\wb(\Pi(\cT_s\eta^{(k-1)},\xi^{(k)}),\cT_s\eta^{(k-1)}) \leq 4\cdot \max\left\{\delta(k)^{\frac{\beta}{1\vee\beta}},\;T(k)\right\}
    \end{align*}
    with $T(k) = \max_{s}\tailwb(\cT_s\eta^{(k-1)},\xi^{(k)})^{\frac{1}{1\vee\beta}}$. Applying Lemma~\ref{lemma:Tetatail} leads to 
    $$T(k)^{1\vee\beta} \leq \max_{s,a,\sn}\tailwb\Big(\cL(R_{sa\sn}),(1-\gamma)z,(1-\gamma)W(k)\Big).$$
    The claimed $\wb$-bounds follow from Lemma~\ref{lemma:tail_bound}.
\end{proof}

\subsection{Proof of Lemma~\ref{lemma:range}} \label{app:range}

Let $\mu=\cL(X)\in\sPR$ and $u\in(0,1)$. Some $x\in\bR$ is a $u$-quantile of $\mu$, that is $x$ satisfies $\bP[X<x]\leq u\leq \bP[X\leq x]$, if and only if $x\in [F^{-1}_{\mu}(u),F^{+1}_{\mu}(u)]$, where the function $F^{+1}_{\mu}:(0,1)\to\bR$ is defined by $F^{+1}_{\mu}(u) = \sup\{x\in\bR| \lim\nolimits_{y\uparrow x}F_{\mu}(y)\leq u\}$. 

\begin{lemma}
    Let $\mu, \nu\in\sPR$ with convolution $\mu\ast\nu\in\sPR$ and let $u\in(0,1)$. 
    \begin{align*}
        F_{\mu\ast\nu}^{-1}(u) &\leq F_{\mu}^{-1}(\sqrt{u})+F_{\nu}^{-1}(\sqrt{u})\\
        F_{\mu\ast\nu}^{+1}(u) &\geq F_{\mu}^{+1}(1-\sqrt{1-u})+F_{\nu}^{+1}(1-\sqrt{1-u}).
    \end{align*}
\end{lemma}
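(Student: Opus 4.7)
The plan is to realize $\mu\ast\nu$ via $X+Y$ with $X\sim\mu$, $Y\sim\nu$ independent, and then treat each inequality by a straightforward inclusion between events plus independence; the only delicate point is extracting the right quantile property of $F^{+1}$ from its definition as a supremum.

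For the first inequality, set $a=F_{\mu}^{-1}(\sqrt u)$ and $b=F_{\nu}^{-1}(\sqrt u)$. The relation $u\leq F_{\mu\ast\nu}(x)\Leftrightarrow F_{\mu\ast\nu}^{-1}(u)\leq x$ from the Notations section reduces the claim to showing $F_{\mu\ast\nu}(a+b)\geq u$. This I would do using the elementary inclusion $\{X\leq a\}\cap\{Y\leq b\}\subseteq\{X+Y\leq a+b\}$, independence, and the standard fact $F_{\mu}(F_{\mu}^{-1}(v))\geq v$ (right-continuity of the CDF), giving
\begin{equation*}
F_{\mu\ast\nu}(a+b)\;\geq\;\bP[X\leq a]\,\bP[Y\leq b]\;\geq\;\sqrt u\cdot\sqrt u\;=\;u.
\end{equation*}

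For the second inequality, set $v=1-\sqrt{1-u}$ and $a=F_{\mu}^{+1}(v)$, $b=F_{\nu}^{+1}(v)$. I would use the dual inclusion $\{X+Y<a+b\}\subseteq\{X<a\}\cup\{Y<b\}$ (if both $X\geq a$ and $Y\geq b$, then $X+Y\geq a+b$), together with independence and inclusion-exclusion, to get
\begin{equation*}
\bP[X+Y<a+b]\;\leq\;1-\bigl(1-\bP[X<a]\bigr)\bigl(1-\bP[Y<b]\bigr).
\end{equation*}
The crux is then the inequality $\bP[X<F_{\mu}^{+1}(v)]\leq v$. This is where the proof does real work: the map $g_{\mu}(x):=\lim_{y\uparrow x}F_{\mu}(y)=\bP[X<x]$ is non-decreasing and, as a left-limit of a right-continuous function, is left-continuous. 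Hence the sublevel set $\{x\in\bR:g_{\mu}(x)\leq v\}$ contains its supremum $F_{\mu}^{+1}(v)$, which yields $\bP[X<F_{\mu}^{+1}(v)]\leq v$. Applying this to $\mu$ and $\nu$ gives $\bP[X+Y<a+b]\leq 1-(1-v)^2=u$, so $a+b$ lies in $\{z:\lim_{y\uparrow z}F_{\mu\ast\nu}(y)\leq u\}$, proving $F_{\mu\ast\nu}^{+1}(u)\geq a+b$.

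The main obstacle is precisely this last point: carefully justifying the "upper'' quantile property $\bP[X<F_{\mu}^{+1}(v)]\leq v$ via left-continuity of $g_{\mu}$, since $F_{\mu}^{+1}$ is defined as a supremum and one must verify the supremum is attained. Once this is in hand, both inequalities are immediate consequences of the two elementary event inclusions and independence.
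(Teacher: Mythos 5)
Your proof of the first inequality is identical to the paper's. For the second inequality, however, you take a genuinely different route: you argue directly, mirroring the first inequality with the complementary event inclusion $\{X+Y<a+b\}\subseteq\{X<a\}\cup\{Y<b\}$ and independence, and then supply the dual quantile property $\bP[X<F_{\mu}^{+1}(v)]\leq v$ by a left-continuity argument for $g_{\mu}(x)=\bP[X<x]$. The paper instead deduces the second inequality from the first by reflection: it applies the first bound to $(\cL(-X),\cL(-Y))$ and uses the identity $F^{-1}_{\cL(-Z)}(w)=-F^{+1}_{\cL(Z)}(1-w)$, then multiplies by $-1$ and substitutes $w=1-u$. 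Both are correct. The reflection argument is shorter and avoids a separate analysis of $F^{+1}$, delegating all the work to the already-proved first inequality plus a one-line algebraic identity; your direct argument is more self-contained and makes the structural symmetry between $F^{-1}$ and $F^{+1}$ explicit, at the cost of carefully verifying that the supremum defining $F^{+1}_{\mu}(v)$ is attained in the sublevel set $\{x:g_{\mu}(x)\leq v\}$. That verification — $g_{\mu}$ non-decreasing makes the sublevel set an interval, and left-continuity of $g_{\mu}$ (as a left limit of a monotone function) makes the right endpoint belong to it — is correct, and it is exactly the content of the reflection identity the paper invokes.
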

\begin{proof}
    Let $X, Y$ be independent with $\cL(X)=\mu, \cL(Y)=\nu$, thus $\mu\ast\nu=\cL(X+Y)$. Let $x=F_{\mu}^{-1}(\sqrt{u})$ and $y=F_{\nu}^{-1}(\sqrt{u})$, thus $F_{\mu}(x) \geq \sqrt{u}$ and $F_{\nu}(y)\geq \sqrt{u}$. It follows
    \begin{align*}
        F_{\mu\ast\nu}(x+y) &= \bP\left[X+Y\leq x+y\right] \geq \bP[X\leq x, Y\leq y] = F_{\mu}(x)F_{\nu}(y) \geq \sqrt{u}\sqrt{u} = u. 
    \end{align*}
    For all $z\in\bR$ the equivalence $F_{\mu\ast\nu}(z)\geq u\Leftrightarrow z\geq F_{\mu\ast\nu}^{-1}(u)$ holds. Applying this to $z=x+y$ gives the first claimed bound. For the second claim, note $F^{-1}_{\cL(-Z)}(u) = -F^{+1}_{\cL(Z)}(1-u)$ holds for any RV $Z$. Applying the first bound to $(\mu',\nu') = (\cL(-X),\cL(-Y))$ leads, for every $v\in(0,1)$, to
    \begin{align*}
        -F^{+1}_{\mu\ast\nu}(1-v) = F^{-1}_{\cL((-X)+(-Y))}(v) \leq -(F^{+1}_{\mu}(1-\sqrt{v}) + F^{+1}_{\nu}(1-\sqrt{v})).
    \end{align*}
    Multiplying by $(-1)$ and substituting $v = 1 - u$ yields the claim.
\end{proof}

\begin{proof}[Proof of Lemma~\ref{lemma:range}]
    Let $\eta = [\cL(G_{\sn}):\sn\in\cS]$ with $(G_{\sn})_{\sn}$ independent from $(R_{sa\sn})_{sa\sn}$. Then $\cT_s\eta(-\infty,\xmin) = \sum_{a,\sn}\pi(a|s)p(\sn|s,a)\bP[R_{sa\sn}+\gamma G_{\sn} <\xmin]$. For every relevant $(a,\sn)$, using the definition of $\xmin$, the fact that $F^{-1}_{\mu}\leq F^{+1}_{\mu}$ and that $F_{\cL(\gamma X)}^{+1} = \gamma F^{+1}_{\cL(X)}$ in combination with the previous Lemma:  
    \begin{align*}
        \bP[R_{sa\sn}+\gamma G_{\sn} <\xmin] &\leq \bP[R_{sa\sn}+\gamma G_{\sn} <F_{sa\sn}^{-1}\!\left(1-\sqrt{1-\varepsilon_u}\right)\!+\!\gamma\!\cdot\! F_{\eta_{\sn}}^{-1}\!\!\left(1-\sqrt{1-\varepsilon_u}\right)]\\
        &\leq \bP[R_{sa\sn}+\gamma G_{\sn} <F_{sa\sn}^{+1}\!\left(1-\sqrt{1-\varepsilon_u}\right)\!+\!\gamma\!\cdot\! F_{\eta_{\sn}}^{+1}\!\!\left(1-\sqrt{1-\varepsilon_u}\right)]\\
        &\leq \bP[R_{sa\sn}+\gamma G_{\sn} < F^{+1}_{\cL(R_{sa\sn}+\gamma G_{\sn})}(\varepsilon_u)] \leq \varepsilon_u.
    \end{align*}
    To show $\cT_s\eta(\xmax,\infty)\leq \varepsilon_u$ proceed similar and note that for each relevant pair $(a,\sn)$ it is 
    \begin{align*}
        \bP[R_{sa\sn}+\gamma G_{\sn} >\xmax] &=1 - \bP[R_{sa\sn}+\gamma G_{\sn} \leq \xmax]\\
        &\leq 1 - \bP[R_{sa\sn}+\gamma G_{\sn} \leq F_{sa\sn}^{-1}\!\left(\sqrt{1-\varepsilon_u}\right)\!+\!\gamma\!\cdot\! F_{\eta_{\sn}}^{-1}\!\!\left(\sqrt{1-\varepsilon_u}\right)]\\
        &\leq 1 - \bP[R_{sa\sn}+\gamma G_{\sn} \leq F^{-1}_{\cL(R_{sa\sn}+\gamma G_{\sn})}(1-\varepsilon_u)] \leq 1 - (1 - \varepsilon_u) = \varepsilon_u.
    \end{align*}
    The bound $\cT_s\eta[\xmin,\xmax]\geq 1 - 2\varepsilon_u$ follows immediately. 
\end{proof}

\subsection{Choice of the hyper-parameter functions in (\ref{eq:choice})}\label{app_hyper_choice}

The reasoning behind recommendation \eqref{eq:choice} is now explained: Let $\eta^{(n)}$ be the $n$-th output when using arbitrary parameter functions $M, \mathcal{E}_u$. By construction, $\eta^{(n)}_s$ is finitely supported with particles $(x_i,p_i), i=1,\dots,M(n)$, where the $x_i$'s are evenly spaced with distance $|x_{i+1}-x_i|$ being equal to
\begin{equation*}
    \delta(n,s) := \delta\left(A(\eta^{(n-1)},s,n)\right) = \frac{\xmax(n,s)-\xmin(n,s)}{M(n)-1},
\end{equation*}
where $\xmin(n,s), \xmax(n,s)$ are given by \eqref{eq:xmin} and \eqref{eq:xmax} with $\eta = \eta^{(n-1)}$ and $\epsilon_u = \mathcal{E}_u(n)$. In order to obtain high quality approximations, $M$ and $\mathcal{E}_u$ should be chosen such that both $\delta(n,s)\to 0$ and $\mathcal{E}_u(n)\to 0$ as $n\to\infty$. The asymptotic behaviour of $\delta(n,s)$ can be bounded as follows:
\begin{theorem}\label{thm:delta-bound}
    Assume (A1.P$\alpha$) with $\alpha>0$ and set $\mathcal{E}_u(0):=1$. Then for every $s\in\cS$ we have $\delta(n) := \max_{s\in\cS}\delta(n,s) = \cO\left(\frac{1}{M(n)}\sum\nolimits_{k=0}^n \gamma^{n-k}\mathcal{E}_u(k)^{-1/\alpha}\right)$ as $n\to\infty$.
\end{theorem}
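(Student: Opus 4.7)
The plan is to reduce the claim to a one-line recursion for the ``support radius'' of the iterates. Set $R(n) := \max_{s\in\cS}\max\{|\xmin(n,s)|, |\xmax(n,s)|\}$, so that each $\eta^{(n)}_s$, being finitely supported on the evenly spaced grid between $\xmin(n,s)$ and $\xmax(n,s)$, satisfies $\mathrm{supp}(\eta^{(n)}_s)\subseteq[-R(n),R(n)]$. The target bound will follow from $\delta(n,s)=(\xmax(n,s)-\xmin(n,s))/(M(n)-1)\leq 2R(n)/(M(n)-1)$ together with a recursive estimate on $R(n)$.

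First I would show that quantiles of $\eta^{(n-1)}_{\sn}$ are trivially controlled by $R(n-1)$: since $\eta^{(n-1)}_{\sn}$ is a finitely supported distribution with all atoms inside $[\xmin(n-1,\sn),\xmax(n-1,\sn)]$, we have $|F^{-1}_{\eta^{(n-1)}_{\sn}}(u)|\leq R(n-1)$ uniformly in $u\in(0,1)$. Second, I would handle the reward-quantile terms via Markov's inequality: under (A1.P$\alpha$) with $C_0:=\max_{sa\sn}\bE[|R_{sa\sn}|^\alpha]<\infty$, one has $|F^{-1}_{sa\sn}(u)|\leq (C_0/\min\{u,1-u\})^{1/\alpha}$. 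Writing $q:=1-\sqrt{1-\mathcal{E}_u(n)}$ (so that the quantile levels in \eqref{eq:xmin} and \eqref{eq:xmax} are $q$ and $1-q$) and using the elementary bound $\mathcal{E}_u(n)/2\leq q\leq \mathcal{E}_u(n)$ valid for $\mathcal{E}_u(n)\in(0,1/2]$, this yields $\max_{sa\sn}\max\{|F^{-1}_{sa\sn}(q)|,|F^{-1}_{sa\sn}(1-q)|\}\leq (2C_0)^{1/\alpha}\mathcal{E}_u(n)^{-1/\alpha}$.

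Combining these two bounds with the triangle inequality applied to the defining expressions \eqref{eq:xmin}--\eqref{eq:xmax} (using the trivial facts $\max(f+g)\leq \max f+\max g$ and $\min(f+g)\geq \min f+\min g$) gives
\begin{equation*}
    R(n)\;\leq\; (2C_0)^{1/\alpha}\,\mathcal{E}_u(n)^{-1/\alpha}\;+\;\gamma\,R(n-1),\qquad n\in\bN.
\end{equation*}
Iterating this linear recursion and absorbing the initial term $\gamma^n R(0)$ using the convention $\mathcal{E}_u(0):=1$ (so that $R(0)=\cO(\mathcal{E}_u(0)^{-1/\alpha})$), one obtains
\begin{equation*}
    R(n)\;=\;\cO\Big(\textstyle\sum_{k=0}^{n}\gamma^{n-k}\,\mathcal{E}_u(k)^{-1/\alpha}\Big).
\end{equation*}
Since $\delta(n,s)\leq 2R(n)/(M(n)-1)$ and $M(n)-1\sim M(n)$ as $n\to\infty$ (the claim is trivial unless $M(n)\to\infty$), this gives the stated asymptotic bound on $\delta(n)=\max_s\delta(n,s)$.

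The only real subtlety is the careful Markov-based quantile estimate at level $q(\mathcal{E}_u(n))$: one must make sure the constant in $|F^{-1}_{sa\sn}(q)|=\cO(\mathcal{E}_u(n)^{-1/\alpha})$ is independent of $n$, which follows from the uniform comparison $q(\varepsilon)\asymp\varepsilon$ on $\varepsilon\in(0,1/2]$. Everything else is a standard contraction-type unrolling of a linear recursion in $R(n)$.
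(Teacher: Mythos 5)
Your argument is correct and is essentially the paper's proof in slightly different clothing: the paper tracks the \emph{width} $W(n)=\max_s\bigl[\xmax(n,s)-\xmin(n,s)\bigr]$ and derives $W(n)\leq H(\mathcal{E}_u(n))+\gamma W(n-1)$ with $H(u)$ bounded via Lemma~\ref{lemma:quantile-bound} and $1/(1-\sqrt{1-u})\leq 2/u$, whereas you track the \emph{absolute envelope} $R(n)$ and get the same linear recursion from the same Markov-type quantile bound. The two quantities are interchangeable here since $\delta(n,s)\leq W(n)/(M(n)-1)\leq 2R(n)/(M(n)-1)$, so the choice only affects constants absorbed by the $\cO$; the iteration, the use of $\mathcal{E}_u(0)=1$ to fold in the initial-support term, and the final division by $M(n)-1$ all match the paper.
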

\begin{proof}
    Let 
    \begin{align*}
        \xmin(n,s) &= \min_{(a,\sn)}\left[F_{sa\sn}^{-1}\left(1-\sqrt{1-\cE_u(n)}\right) + \gamma F_{\eta^{(n-1)}_{\sn}}^{-1}\left(1-\sqrt{1-\cE_u(n)}\right)\right]\\ 
        \xmax(n,s) &= \max_{(a,\sn)}\left[F_{sa\sn}^{-1}\left(\sqrt{1-\cE_u(n)}\right) + \gamma F_{\eta^{(n-1)}_{\sn}}^{-1}\left(\sqrt{1-\cE_u(n)}\right)\right].
    \end{align*}
    With $H(u) = \max_{sa\sn}\left[F_{sa\sn}^{-1}\left(\sqrt{1-u}\right)-F_{sa\sn}^{-1}\left(1-\sqrt{1-u}\right)\right]$ it is 
    \begin{align*}
        \xmax(n,s)&-\xmin(n,s)\\
        &\leq H(\cE_n(u)) + \gamma \max_{\sn} \left[F_{\eta^{(n-1)}_{\sn}}^{-1}\left(\sqrt{1-\cE_u(n)}\right) - F_{\eta^{(n-1)}_{\sn}}^{-1}\left(1-\sqrt{1-\cE_u(n)}\right)\right]\\
        &\leq H(\cE_n(u)) + \gamma \max_{\sn}\left[\xmax(n-1,\sn)-\xmin(n-1,\sn)\right],
    \end{align*}
    where the second inequality holds since $\eta^{(n-1)}_{\sn}$ is supported on $[\xmin(n-1,\sn),\xmax(n-1,\sn)]$. Hence with $W(n) := \max_{s\in\cS}\left[\xmax(n,s)-\xmin(n,s)\right]$ it is $W(n) \leq H(\cE_n(u)) + \gamma W(n-1)$.
    Assuming each $\eta^{(0)}_s$ it supported on $[a,b]$ and iterating this inequality inductively down to $n=0$ yields $W(n) \leq \sum\nolimits_{k=1}^n \gamma^{n-k}H(\cE_u(k))+\gamma^n (b-a)$ and hence
    $$\delta(n,s) \leq \max_{\sn\in\cS}\delta(n,\sn) \leq \frac{W(n)}{M(n)-1} \leq \frac{1}{M(n)-1}\sum_{k=1}^n \gamma^{n-k}H\left(\mathcal{E}_u(k)\right) + \gamma^n \frac{b-a}{M(n)-1}.$$
    Suppose (A1.P$\alpha$). By Lemma~\ref{lemma:quantile-bound} there exists $C>0$ with $F_{sa\sn}^{-1}(v)-F_{sa\sn}^{-1}(1-v) \leq \frac{C}{(1-v)^{1/\alpha}}$ for all $v\in(0,1)$ and $s,a,\sn$. Using $1/(1-\sqrt{1-u})\leq 2/u$ for all $u\in(0,1)$, it follows $H(u)\leq C\left(\frac{2}{u}\right)^{1/\alpha}$. Thus, setting $\cE_u(0):=1$, the previous display results in
    \begin{align*}
        \delta(n,s) &\leq \frac{1}{M(n)-1}\sum_{k=1}^n \gamma^{n-k}C\left(\frac{2}{\mathcal{E}_u(k)}\right)^{1/\alpha} + \gamma^n \frac{b-a}{M(n)-1},
    \end{align*}
    which is $\cO\left(\frac{1}{M(n)}\sum\nolimits_{k=0}^n \gamma^{n-k}\mathcal{E}_u(k)^{-1/\alpha}\right)$ as claimed.
\end{proof}

Now suppose (A1.P$\alpha$) holds, let $\beta<\alpha$ and set $h = \frac{1\vee\beta}{\beta}\frac{\alpha}{\alpha-\beta} > 1$. Theorem~\ref{thm:delta-bound} shows: Choosing $M(n) = \cO((1/\theta)^{hn})$ and $\mathcal{E}_u(n) = \cO(\theta^{\beta h n})$ leads to $\delta(n) = \cO\left(\theta^{\frac{1\vee\beta}{\beta}n}\right)$, similar to Theorem~\ref{thm:plain-analysis}. For $\varepsilon>0$ let $n(\varepsilon)=\min\{n\in\bN| \max\{\mathcal{E}_u(n),\delta(n)\}\leq\varepsilon\}\in\bN$. Using the obtained asymptotic formulas shows that $n(\varepsilon) \leq \max\{\frac{1}{\beta\cdot h},\frac{\beta}{1\vee\beta}\}\frac{\log(1/\varepsilon)}{\log(1/\theta)} + \cO(1)$ as $\varepsilon\to 0$. Thus, the time to calculate the $n(\varepsilon)$-th approximation is of order $\cO\left((1/\varepsilon)^{2r(\beta,\alpha)}\right)$ asymptotically as $\varepsilon\to 0$, where $r(\beta,\alpha) = \max\{\frac{1}{\beta},\frac{\alpha}{\alpha-\beta}\}$. The function $(0,\alpha)\owns \beta\mapsto r(\beta,\alpha)$ takes its minimum at $\beta = \frac{\alpha}{\alpha + 1}$ with minimal value $r(\frac{\alpha}{\alpha+1},\alpha) = \frac{\alpha + 1}{\alpha}$. Choosing $\beta = \frac{\alpha}{\alpha+1}$ leads to $h = \left(\frac{\alpha+1}{\alpha}\right)^2$ and $\beta h = \frac{1}{\beta} = \frac{\alpha+1}{\alpha}$. Thus, it is reasonable to choose $M(k) = \cO\left((1/\theta)^{\left(\frac{\alpha+1}{\alpha}\right)^2\cdot n}\right)$ and $\mathcal{E}_u(k) = \cO\left(\theta^{\frac{\alpha+1}{\alpha}\cdot n}\right)$ with a time to calculate the $n(\varepsilon)$'th approximation of order $\cO\left((1/\varepsilon)^{2\cdot \frac{\alpha+1}{\alpha}}\right)$ as $\varepsilon\to 0$. For practical purposes, it seems reasonable to suggest this method only in case $\frac{\alpha+1}{\alpha}>1$ is small, where the lower bound $1$ is attained in the limit $\alpha\to\infty$. In the limiting case, it is $M(k) = \cO\left((1/\theta)^k\right)$ and $\mathcal{E}_u(k) = \cO\left(1/M(k)\right)$, which leads to \eqref{eq:choice}.

It remains to show:
\begin{lemma}\label{lemma:quantile-bound}
    Let $\mu=\cL(X)\in\sPaR$. Then for all $u\in(0,1)$ we have 
    $$-\bE[|X|^{\alpha}]^{1/\alpha}\cdot \frac{1}{u^{1/\alpha}}\;\;\leq\;\;F_{\mu}^{-1}(u)\;\;\leq\;\;\bE[|X|^{\alpha}]^{1/\alpha}\cdot \frac{1}{(1-u)^{1/\alpha}}.$$
\end{lemma}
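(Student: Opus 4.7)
The plan is to apply Markov's inequality to $|X|^\alpha$ at two carefully chosen thresholds, one for each side of the bound, and then invoke the definition $F_\mu^{-1}(u) = \inf\{x \in \bR : F_\mu(x) \geq u\}$. Set $C := \bE[|X|^\alpha]^{1/\alpha}$. The degenerate case $C = 0$ forces $X = 0$ almost surely, in which case $F_\mu^{-1}(u) = 0$ for all $u \in (0,1)$ and both inequalities are immediate, so I may assume $C > 0$.

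For the upper bound I will take the threshold $x_+ := C \cdot (1-u)^{-1/\alpha} > 0$. Markov's inequality applied to $|X|^\alpha$ gives
\[
\bP[X > x_+] \;\leq\; \bP[|X| > x_+] \;\leq\; \frac{\bE[|X|^\alpha]}{x_+^\alpha} \;=\; 1-u,
\]
so $F_\mu(x_+) = 1 - \bP[X > x_+] \geq u$, and by definition of the quantile function $F_\mu^{-1}(u) \leq x_+$.

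For the lower bound I will take $x_- := -C \cdot u^{-1/\alpha} < 0$ and show that no point strictly less than $x_-$ lies in $\{z : F_\mu(z) \geq u\}$. For arbitrary $y < x_-$ one has $|y| > |x_-| > 0$, and Markov yields
\[
F_\mu(y) \;=\; \bP[X \leq y] \;\leq\; \bP[|X| \geq |y|] \;\leq\; \frac{\bE[|X|^\alpha]}{|y|^\alpha} \;<\; \frac{\bE[|X|^\alpha]}{|x_-|^\alpha} \;=\; u.
\]
Hence no $y < x_-$ satisfies $F_\mu(y) \geq u$, and taking the infimum gives $F_\mu^{-1}(u) \geq x_-$.

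There is no real obstacle; the proof is essentially two Markov-inequality estimates. The only minor point requiring care is the strict inequality $|y| > |x_-|$ in the lower-bound argument, which is what allows the chain of estimates to close with a strict $< u$ and thereby exclude every $y < x_-$ from the set defining $F_\mu^{-1}(u)$.
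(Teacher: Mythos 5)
Your proof is correct. The upper bound is essentially the same as the paper's: Markov's inequality applied to $|X|^\alpha$ at the threshold $C(1-u)^{-1/\alpha}$, then invoking the definition of $F_\mu^{-1}$. For the lower bound, however, you take a genuinely more direct route. The paper first applies the upper bound to the reflected variable $-X$, producing an estimate on the \emph{upper} quantile function $F_\mu^{+1}$, and then recovers a bound on $F_\mu^{-1}(q)$ via the inequality $F_\mu^{-1}(q)\geq \limsup_{q'\uparrow q}F_\mu^{+1}(q')$, which requires a small continuity argument. You instead show directly that every $y < x_- = -Cu^{-1/\alpha}$ satisfies $F_\mu(y) < u$ (via the inclusion $\{X\leq y\}\subseteq\{|X|\geq |y|\}$ for $y<0$ and a strict Markov estimate), so no such $y$ lies in $\{z: F_\mu(z)\geq u\}$ and the infimum is at least $x_-$. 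This bypasses the $F^{+1}$ machinery and the $\limsup$ step entirely, and the strict inequality $|y|>|x_-|$ is indeed exactly the point that makes the chain close with $<u$; your argument is complete as written.
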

\begin{proof}
    Let $D:=\bE[|X|^{\alpha}]<\infty$. In case $D=0$ it is $\bP[X=0]=1$ and $F_{\mu}^{-1}(u)=0$. Suppose $D>0$. Markov's inequality gives $1-F_{\mu}(x) = \bP[X>x] \leq \bP[|X|>x] \leq Dx^{-\alpha}$ for all $x>0$, hence $F_{\mu}(x)\geq 1 - Dx^{-\alpha}$ for all $x>0$. Applying this to $x = \left(\frac{1-u}{D}\right)^{-\frac{1}{\alpha}}>0$ gives $F_{\mu}(x)\geq u$, hence $x\geq F^{-1}_{\mu}(u)$, which is the second bound. Applying the second bound to $\tilde\mu = \cL(-X)$, $u=1-q\in(0,1)$ and using $F_{\cL(-X)}^{-1}(u) = -F_{\cL(X)}^{+1}(1-u)$ gives $F^{+1}_{\mu}(q)\geq -\bE[|X|^{\alpha}]^{1/\alpha}\cdot \frac{1}{q^{1/\alpha}}$. The lower bound follows from this by $F_{\mu}^{-1}(q)\geq \limsup_{q'\uparrow q}F^{+1}(q')\geq \limsup_{q'\uparrow q}-\bE[|X|^{\alpha}]^{1/\alpha}\cdot \frac{1}{(q')^{1/\alpha}} = -\bE[|X|^{\alpha}]^{1/\alpha}\cdot \frac{1}{q^{1/\alpha}}$.
\end{proof}

\subsection{Proofs of Theorem~\ref{thm:density} and Corollary \ref{cor:density}}\label{app_proof_24}

\begin{proof}[Theorem~\ref{thm:density}]
    The stochastic process $\bSAS = (S^{(t)},A^{(t)},S^{(t+1)})_{t\in\bN_0}$ is a random variable taking values in the infinite product space $(\cS\times\cA\times\cS)^{\bN_0}$, which has elements of the form $\bsas=(s_t,a_t,\sn_t)_{t\in\bN_0}$. Let $\Upsilon := \bP_{s_0}[\bSAS\;\in\cdot\;]$ be its distribution. Recall that $T=\min\{t\in\bN_0|(S^{(t)},A^{(t)},R^{(t)})\in\cG\}\in\bN_0\cup\{\infty\}$. Letting $\cH\subseteq(\cS\times\cA\times\cS)^{\bN_0}$ be the subset of sequences visiting $\cG$ we have  $\{T<\infty\} = \{\bSAS\in\cH\}$ and hence $\Upsilon(\cH)=1$ by assumption. Recall that $G^* = \gamma^TR^{(T)} + Z$ with $Z=\sum_{t\in\bN_0\setminus\{T\}}\gamma^t R^{(t)}$ and $(S,A,\bar S) := (S^{(T)},A^{(T)},S^{(T+1)})\in\cG$. Let $\bP_{\bsas}[(T,S,A,\bar S,R^{(T)},Z)\in\cdot], \bsas\in (\cS\times\cA\times\cS)^{\bN_0}$ be a regular conditional distribution of $(T,S,A,\bar S,R^{(T)},Z)$ given $\bSAS$. In particular, for every measurable set $B\subseteq\bR$ we have
    \begin{equation}\label{eq:density-first}
        \eta^*_{s_0}(B) = \bP_{s_0}[G^*\in B] = \int_{\cH}\bP_{\bsas}[\gamma^TR^{(T)} + Z \in B]\md \Upsilon(\bsas). 
    \end{equation}
    For $\Upsilon$-almost all sequences $\bsas\in \cH$ we have, under $\bP_{\bsas}$, that the random variables $T\in\bN, (S,A,\bar S)\in\cG$ are deterministic, $R^{(T)}\sim r(\cdot|S,A,\bar S)$ has PDF $f_{(S,A,\bar S)}$ and $R^{(T)}, Z$ are independent. Recall that, if $a>0, b\in\bR$ are constants and $X$ has PDF $f$, then $aX+b$ has PDF $x\mapsto \frac{1}{a}f(\frac{x-b}{a})$. If further $Y$ is a RV independent of $X$, then $aX+Y$ has PDF $x\mapsto \bE\left[\frac{1}{a}f(\frac{x-Y}{a})\right]$, which follows from Fubini's theorem. Hence, for $\Upsilon$-almost all sequences $\bsas\in \cH$ it holds that the distribution of $G^* = \gamma^{T}R^{(T)} + Z$ under $\bP_{\bsas}$ has density $x\mapsto \bE_{\bsas}[\frac{1}{\gamma^T}f_{(S,A,\bar S)}\left(\frac{x - Z}{\gamma^T}\right)]$, where $T, (S,A,\bar S)$ are non-random under $\bP_{\bsas}$ with values depending on $\bsas$. Hence, for every measurable $B\subseteq\bR$ it is
    \begin{align*}
        \int_{\cH}\bP_{\bsas}[\gamma^TR^{(T)} + Z \in B]\;\md \Upsilon(\bsas) &= \int_{\cH}\int_B \bE_{\bsas}\left[\frac{1}{\gamma^T}f_{(S,A,\bar S)}\left(\frac{x - Z}{\gamma^T}\right)\right]\;\md x\;\md \Upsilon(\bsas)\\
        &= \int_B \int_{\cH}\bE_{\bsas}\left[\frac{1}{\gamma^T}f_{(S,A,\bar S)}\left(\frac{x - Z}{\gamma^T}\right)\right]\;\md \Upsilon(\bsas)\;\md x \\
        &= \int_B \bE_{s_0}\left[\frac{1}{\gamma^T}f_{(S,A,\bar S)}\left(\frac{x - Z}{\gamma^T}\right)\right]\;\md x,
    \end{align*}
    where the second equation holds by Fubini's theorem and the third is the key property of regular conditional distributions. The claim follows in combination with \eqref{eq:density-first}.
\end{proof}

\begin{proof}[Corollary~\ref{cor:density}]
    (a) Let $M>0$ be such that $|f_{(s,a,\sn)}(x)|\leq M$ for all $(s,a,\sn)\in\cG, x\in\bR$. Since $(S,A,\bar S)\in\cG$, it is $|f_{(S,A,\bar S)}(x)|\leq M$ for all $x\in\bR$ and thus, for every $x\in\bR$, it follows $|f^*_{s_0}(x)| \leq M\cdot \bE_{s_0}[\gamma^{-\tauG}] = M\cdot \Psi(\log(1/\gamma))<\infty$. For (b), let $C>0$ be such that $|f_{(s,a,\sn)}(x)-f_{(s,a,\sn)}(y)|\leq C |x-y|^{\tau}$ for all $x,y\in\bR$ and $(s,a,\sn)\in \cG$. This bound is also satisfied for the random $(S,A,\bar S)\in\cG$ and thus, for all $x, y\in\bR$,
    \begin{align*}
        |f^*_{s_0}(x)-f^*_{s_0}(y)|&\leq  \bE_{s_0}\Big[\frac{1}{\gamma^{\tauG}}\Big|f_{(S,A,\bar S)}\left(\frac{x-Z}{\gamma^{\tauG}}\right) - f_{(S,A,\bar S)}\left(\frac{y-Z}{\gamma^{\tauG}}\right)\Big|\Big]\\
        &\leq  \bE_{s_0}\Big[\frac{1}{\gamma^{\tauG}}\cdot C\cdot \Big|\frac{x-Z}{\gamma^{\tauG}} - \frac{y-Z}{\gamma^{\tauG}}\Big|^{\tau}\Big]\\
        &= C\cdot\bE_{s_0}\big[\gamma^{-(\tau+1)\tauG}\big]\cdot |x-y|^{\tau} = C\cdot \Psi\big((\tau+1)\log(\gamma^{-1})\big)\cdot\;|x-y|^{\tau},
    \end{align*}
    with $C\cdot \Psi\big((\tau+1)\log(\gamma^{-1})\big)<\infty$ by assumption.
\end{proof}

\vskip 0.2in
\bibliography{literature}

\end{document}